\newcommand{\mbf}[1]{\mathbf{#1}}
\DeclareMathOperator*{\argmax}{\arg\!\max}
\newtheorem{theorem}{Theorem}
\newtheorem{lemma}{Lemma}
\newtheorem{assumption}{Assumption}
\newcommand{\bbE}{\mathbb{E}}
\newcommand{\bbR}{\mathbb{R}}
\newcommand{\bbI}{\mathbb{I}}
\newcommand{\calS}{\mathcal{S}}
\newcommand{\calA}{\mathcal{A}}
\newcommand{\calR}{\mathcal{R}}
\newcommand{\calF}{\mathcal{F}}
\newcommand{\calM}{\mathcal{M}}
\newcommand{\norm}[1]{\left\lVert#1\right\rVert}
\newcommand{\abs}[1]{\left\lvert#1\right\rvert}
\newcommandx{\unsure}[2][1=]{\todo[linecolor=red,backgroundcolor=red!25,bordercolor=red,#1]{#2}}
\icmltitlerunning{Learning and Planning in Average-Reward Markov Decision Processes}
\begin{document}

\twocolumn[
\icmltitle{Learning and Planning in Average-Reward Markov Decision Processes}



\icmlsetsymbol{equal}{*}

\begin{icmlauthorlist}
\icmlauthor{Yi Wan}{equal,uofa}
\icmlauthor{Abhishek Naik}{equal,uofa}
\icmlauthor{Richard S.~Sutton}{uofa,dm}
\end{icmlauthorlist}

\icmlaffiliation{uofa}{University of Alberta and Alberta Machine Intelligence Institute (Amii), Edmonton, Canada.}
\icmlaffiliation{dm}{DeepMind}

\icmlcorrespondingauthor{Yi Wan}{wan6@ualberta.ca}
\icmlcorrespondingauthor{Abhishek Naik}{abhishek.naik@ualberta.ca}

\icmlkeywords{ICML, Artificial Intelligence, Machine Learning, Reinforcement Learning, Average Reward}

\vskip 0.3in
]



\printAffiliationsAndNotice{\icmlEqualContribution} 

\begin{abstract}
We introduce learning and planning algorithms for average-reward MDPs, including 1) the first general proven-convergent off-policy model-free control algorithm without reference states, 2) the first proven-convergent off-policy model-free prediction algorithm, and 3) the first off-policy learning algorithm that converges to the actual value function rather than to the value function plus an offset.
All of our algorithms are based on using the temporal-difference error rather than the conventional error when updating the estimate of the average reward. Our proof techniques are a slight generalization of those by Abounadi, Bertsekas, and Borkar (2001).
In experiments with an Access-Control Queuing Task, we show some of the difficulties that can arise when using methods that rely on reference states and argue that our new algorithms can be significantly easier to use. 
\end{abstract}



\section{Average-Reward Learning and Planning}

The average-reward formulation of Markov decision processes (MDPs) is arguably the most important for reinforcement learning and artificial intelligence (see, e.g., Sutton \& Barto 2018 Chapter 10, Naik et al.\ 2019) yet has received much less attention than the episodic and discounted formulations. In the average-reward setting, experience is continuing (not broken up into episodes) and the agent seeks to maximize the average reward per step, or \emph{reward rate}, with equal weight given to immediate and delayed rewards. 
In addition to this \emph{control} problem, there is also the \emph{prediction} problem of estimating the value function and the reward rate for a given \emph{target policy}. 
Solution methods for these problems can be divided into those that are driven by experiential data, called \emph{learning} algorithms, those that are driven by a model of the MDP, called \emph{planning} algorithms, and combined methods that first learn a model and then plan with it. 
For learning and combined methods, both control and prediction problems can be further subdivided into \emph{on-policy} versions, in which data is gathered using the target policy, and \emph{off-policy} versions, in which data is gathered using a second policy, called the \emph{behavior policy}. 
In general, both policies may be non-stationary. For example, in the control problem, the target policy should converge to a policy that maximizes the reward rate. 
Useful surveys of average-reward learning are given by Mahadevan (1996) and Dewanto et al.\ (2020).

On-policy problems are generally easier than off-policy problems and permit more capable algorithms with convergence guarantees. For example, on-policy \emph{prediction} algorithms with function approximation and convergence guarantees include average-cost TD$(\lambda)$ (Tsitsiklis \& Van Roy 1999), LSTD$(\lambda)$ (Konda 2002), and LSPE$(\lambda)$ (Yu \& Bertsekas 2009). On-policy \emph{control} algorithms that have been proved to converge asymptotically or to achieve sub-linear regret or to be probably approximately correct under various conditions include tabular learning algorithms (e.g., Wheeler \& Narendra 1986, Abbasi-Yadkori et al.\ 2019a,b
), tabular combined algorithms (e.g., Kearns \& Singh 2002, Brafman \& Tennenholtz 2002, Auer \& Ortner 2006, 
Jaksch et al.\ 2010
), and policy gradient algorithms (e.g., Sutton et al.\ 1999, Marbach \& Tsitsiklis 2001, Kakade 2001, Konda 2002
). 

The \textit{off-policy learning control} problem is particularly challenging, and theoretical results are available only for the tabular, discrete-state setting without function approximation. 
The most important prior algorithm is RVI Q-learning, introduced by Abounadi, Bertsekas, and Borkar (1998, 2001). 
The same paper also introduced \emph{SSP Q-learning}, but SSP Q-learning was limited to MDPs with a special state that is recurrent under all stationary policies, whereas RVI Q-learning is convergent for more general 
MDPs.
Ren and Krogh (2001) presented a tabular algorithm and proved its convergence, but their algorithm required knowledge of properties of the MDP which are not in general known. 
Gosavi (2004) also introduced an algorithm and proved its convergence, but it was limited in the same way as SSP Q-learning. 
Yang et al.~(2016) presented an algorithm and claimed to prove its convergence, but their proof is not correct (as we detail in \Cref{app:additional}). 
The earliest tabular average-reward off-policy learning control algorithms that we know of were those introduced (without convergence proofs) by Schwartz (1993) and Singh (1994). 
Bertsekas and Tsitsiklis (1996) and Das et al.~(1999) introduced off-policy learning control algorithms with function approximation, but did not provide convergence proofs.

Abounadi et al.'s RVI Q-learning is actually a family of off-policy algorithms, 
a particular member of which is determined by specifying a function that references the estimated values of specific state--action pairs and produces an estimate of the reward rate.
We call this function the \textit{reference function}.
Examples include a weighted average of the value estimates of all state--action pairs, or in the simplest case, the estimate of a single state--action pair's value. 
For best results, the referenced state--action pairs should be frequently visited; otherwise convergence can be unduly slow (as we illustrate in Section \ref{sec:control-exps}). 
However, if the behavior policy is linked to the target policy (as in $\epsilon$-greedy behavior policies), then knowing which state--action pairs will be frequently visited may be to know a substantial part of the problem's solution.
For example, in learning an optimal path through a maze from diverse starting points, the frequently visited state--action pairs are likely to be those on the shortest paths to the goal state. To know these would be tantamount to knowing a priori the best paths to the goal.
This observation motivates the search for a general learning algorithm that does not require a reference function.

Our first contribution is to introduce such a learning control algorithm without a reference function.
Our \textit{Differential Q-learning} algorithm is convergent for general MDPs, which we prove by slightly generalizing the theory of RVI Q-learning (Abounadi et al. 2001). Unlike RVI Q-learning, Differential Q-learning does not involve reference states. Instead, it maintains an explicit estimate of the reward rate 
(as in Schwartz 1993, Singh 1994).

Our second contribution is \emph{Differential TD-learning}, the first off-policy model-free prediction learning algorithm proved convergent to the reward rate and differential value function of the target policy. 
There are a number of algorithms that estimate the reward rate (e.g., Wen et al.~2020, Liu et al.~2018, Tang et al.~2019, Mousavi et al.~2020, Zhang et al.~ 2020a,b), but none that estimate the value function. 
These algorithms also differ from Differential TD-learning in that are not online algorithms; they operate on a fixed batch of data.
Finally, they differ in that they estimate the ratio of the steady-state occupancy distributions under the target and behavior policies, whereas Differential TD-learning does not.

\textit{Planning} algorithms for average-reward MDPs have been known at least since the setting was introduced by Howard in 1960. However, most of these, including value iteration (Bellman 1957), policy iteration (Howard 1960), and relative value iteration (RVI, White 1963), are ill-suited for use in reinforcement learning because they involve sub-steps whose complexity is order the number of states or more. Jalali and Ferguson (1989, 1990) were among the first to explore more incremental methods, though their algorithms are limited to special-case MDPs and require a reference state--action pair. In planning, as in learning, the state of the art appears to be RVI Q-learning, now applied as a planning algorithm to a stream of experience generated by the model. When our Differential Q-learning algorithm is applied in the same way, we call it \emph{Differential Q-planning}; it improves over the RVI Q-learning's planner in that it omits reference states, with concomitant efficiencies just as in the learning case. In the prediction case we have \textit{Differential TD-planning}. Both of these algorithms are fully incremental and well suited for use in reinforcement learning architectures (e.g., Dyna (Sutton 1990)).

All the aforementioned average-reward algorithms converge not to the actual value function, but to the value function plus an offset that depends on initial conditions or on a reference state or state--action pair. The offset is not necessarily a problem because only the relative values of states (or of state--action pairs) are used to determine policies. However, the actual value function of any policy is \emph{centered}, meaning that the mean value of states encountered under the policy is zero.
Although it is easy to center an estimated value function in the on-policy case, in the off-policy case it is not.
Our final contribution is to extend our off-policy algorithms to centered versions that converge to the actual value function without an offset. 


\section{Learning and Planning for Control}
\label{sec:control_background_algorithms}

We formalize an agent's interaction with its environment by a finite Markov decision process, defined by the tuple $\calM \doteq (\calS, \calA, \calR, p)$, where $\calS$ is a set of states, $\calA$ is a set of actions, $\calR$ is a set of rewards, and $p : \calS \times \calR \times \calS \times \calA \to [0, 1]$ is the dynamics of the environment. 
At each of a sequence of discrete time steps $t = 0, 1, 2, \ldots$ , the agent receives an indication of a state of the MDP $S_t\in\calS$ and selects, using behavior policy $b:\calA\times\calS\rightarrow[0, 1]$, an action $A_t\in\calA$, then receives from the environment a reward $R_{t+1}\in\calR$ and the next state $S_{t+1}\in\calS$, and so on. The transition dynamics are such that $p(s', r \mid s, a) \doteq \Pr(S_{t+1} = s', R_{t+1} = r \mid S_t = s, A_t = a)$ for all $s, s' \in \calS, a \in \calA$, and $r \in \calR$. All policies we consider in the paper are in the set of stationary Markov policies $\Pi$.

Technically, for an unconstrained MDP, the best reward rate depends on the start state. For example, the MDP may have two disjoint sets of states with no policy that passes from one to the other; in this case there are effectively two MDPs, with unrelated rates of reward. A learning algorithm would have no difficulty with such cases---it would optimize for whichever sub-MDP it found itself in---but it is complex to state formally what is meant by an optimal policy. 
To remove this complexity, it is commonplace to rule out such cases by assuming that the MDP is communicating, which just means that there are no states from which it is impossible to get back to the others.

\textbf{Communicating Assumption}: For every pair of states, there exists a policy that transitions from one to the other in a finite number of steps with non-zero probability.

Under the communicating assumption, there exists a unique optimal reward rate $r^*$ that does not depend on the start state. 
To define $r^*$, we will need the reward rate for an arbitrary policy $\pi$ and a given start state $s$:
\begin{align}
\label{eq:avg_rew_definition}
    r(\pi,s) \doteq \lim_{n \to \infty} \frac{1}{n} \sum_{t = 1}^n \mathbb{E}[R_t \mid S_0=s, A_{0:t-1} \sim \pi].
\end{align}
It turns out that the best reward rate from $s$, $\sup_\pi r(\pi, s)$, does not depend on $s$ (see, e.g., Puterman 1994), and we define it as $r^*$. We seek a learning algorithm which achieves $r^*$.

Our \emph{Differential Q-learning} algorithm updates a table of estimates $Q_t: \calS \times \calA \to \bbR$ as follows:
\begin{align}
    Q_{t+1}(S_t, A_t) &\doteq Q_t (S_t, A_t) + \alpha_t \delta_t,  \label{Q-learning: Q update} \\
    Q_{t+1}(s, a) &\doteq Q_t (s, a),\ \forall s,a \neq S_t, A_t, \nonumber
\end{align}
where $\alpha_t$ is a step-size sequence, and $\delta_t$, the temporal-difference (TD) error, is: 
\begin{align}
    \delta_t \doteq R_{t+1} - \bar R_t + \max_{a} Q_t(S_{t+1}, a) - Q_t (S_t, A_t),
\end{align}
where $\bar R_t$ is a scalar estimate of $r^*$, updated by:
\begin{align}
    \bar R_{t+1} \doteq \bar R_t + \eta \alpha_t \delta_t \label{Q-learning: bar R update},
\end{align}
and $\eta$ is a positive constant. 

The following theorem shows that $\bar R_t$ converges to $r^*$ and $Q_t$ converges to a solution of $q$ in the Bellman equation:
\begin{align}
    q(s, a) = \sum_{s', r} p(s', r \mid s, a) (r - \bar r + \max_{a'} q(s', a')), \label{action-value Bellman equation}
\end{align}
for all $s \in \calS$ and $a \in \calA$. The unique solution for $\bar r$ is $r^*$. 
To guarantee that $Q_t$ converges to a unique point, we need to assume that the solution of $q$ is unique up to a constant.

\begin{theorem}[Informal] \label{Differential Q-learning}
If 1) the MDP is communicating, 2) the solution of $q$ in \eqref{action-value Bellman equation} is unique up to a constant, 3) the step sizes, specific to each state--action pair, are decreased appropriately, 4) all the state--action pairs are updated an infinite number of times, and 5) the ratio of the update frequency of the most-updated state--action pair to the update frequency of the least-updated state--action pair is finite, then the Differential Q-learning algorithm \eqref{Q-learning: Q update}--\eqref{Q-learning: bar R update} converges, almost surely, $\bar R_t$ to $r^*$, $Q_t$ to a solution of $q$ in \eqref{action-value Bellman equation}, 
and $r(\pi_t, s)$ to $r^*$, for all $s \in \calS$, where $\pi_t$ is  any greedy policy w.r.t. $Q_t$.
\end{theorem}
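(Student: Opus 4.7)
The plan is to follow the ODE-based stochastic-approximation framework of Borkar (2008), adapting the argument of Abounadi et al.\ (2001) for RVI Q-learning to the joint iterate $x_t \doteq (Q_t, \bar R_t) \in \bbR^{|\calS||\calA|+1}$. In RVI Q-learning, the reference function pins down the translation invariance of the action-value Bellman equation; here the extra scalar coordinate $\bar R_t$ will play that role.

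First, I would cast \eqref{Q-learning: Q update}--\eqref{Q-learning: bar R update} as an asynchronous Robbins--Monro scheme of the form
\begin{align*}
x_{t+1}(i) = x_t(i) + \alpha_t(i)\bigl[h_i(x_t) + M_{t+1}(i)\bigr],
\end{align*}
where $h$ is the conditional expectation of the update direction computed from $p$, the sequence $M_{t+1}$ is a martingale difference with bounded conditional variance, and $\alpha_t(i)$ is nonzero only for the coordinate updated at step $t$. Conditions 2--4 of the theorem are exactly the hypotheses of Borkar's asynchronous SA theorem (Chapter 7 of Borkar 2008), modulo a separate proof of boundedness of $x_t$, which I would handle via the scaled-ODE criterion of Borkar and Meyn (2000).

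Second, I would identify the associated ODE
\begin{align*}
\dot Q(s,a) &= (TQ)(s,a) - \bar R - Q(s,a),\\
\dot{\bar R} &= \eta \, g(Q,\bar R),
\end{align*}
with $(TQ)(s,a)\doteq\sum_{s',r}p(s',r\mid s,a)\bigl(r + \max_{a'} Q(s',a')\bigr)$ and $g$ a positive weighted average of the TD residuals $(TQ)(s,a)-\bar R - Q(s,a)$ under the limiting update frequencies guaranteed by condition 4. Its equilibria are exactly the pairs $(q,\bar r)$ with $\bar r = r^*$ and $q$ a solution of \eqref{action-value Bellman equation}, forming the one-dimensional affine set $\{q^* + c\mathbf{1} : c\in\bbR\}\times\{r^*\}$. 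To show global asymptotic stability of this set I would reuse the Abounadi--Bertsekas--Borkar style contraction in the span semi-norm: $T$ is a non-expansion in the max-norm and, under the communicating assumption, strictly contractive in the span semi-norm, so the $Q$-component collapses onto the line $q^* + c(t)\mathbf{1}$; the $\bar R$-equation then provides negative feedback coupling $c(t)$ and $\bar r(t)-r^*$, driving both to zero because any uniform upward shift of $Q$ raises every TD residual and hence $\bar R$, which in turn lowers those residuals.

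The conclusion about greedy policies follows once $\bar R_t \to r^*$ and $Q_t \to q$ almost surely: any stationary policy greedy w.r.t.\ a $q$ satisfying \eqref{action-value Bellman equation} with $\bar r = r^*$ achieves reward rate $r^*$ from every state of a communicating MDP (Puterman 1994, Chapter 8), and for all sufficiently large $t$ the greedy actions w.r.t.\ $Q_t$ are contained in the greedy set w.r.t.\ $q$, so $r(\pi_t,s)=r^*$ eventually. The hard part, I expect, will be the joint stability analysis: without a reference-state anchor one must track $\bar R_t - r^*$ together with the scalar translation component of $Q_t$ along the $\mathbf{1}$ direction and show that the $\eta$-coupling dissipates both simultaneously; establishing boundedness of $(Q_t,\bar R_t)$ in this anchor-free setting via the scaled-ODE argument is the companion difficulty, since the natural homogeneous limit ODE inherits the same translation invariance that we are trying to kill.
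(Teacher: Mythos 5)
Your high-level framing is reasonable, but there is a genuine gap, and it sits exactly where you flag ``the hard part.'' The paper's proof rests on one exact algebraic identity that your joint-iterate formulation never exploits: every increment to $\bar R_t$ is exactly $\eta$ times the increment to $\sum_{s,a}Q_t(s,a)$, so $\bar R_t - \eta\sum_{s,a}Q_t(s,a)$ is conserved along the sample path and $\bar R_t = \eta\sum_{s,a}Q_t(s,a) - c$ with $c$ fixed by the initialization. Substituting this eliminates $\bar R_t$ entirely and turns the algorithm into RVI Q-learning with reference function $f(Q)=\eta\sum_{s,a}Q(s,a)$ on a reward-shifted MDP; the only genuinely new work is relaxing Abounadi et al.'s normalization $f(e)=1$ to $f(e)=u>0$. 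That reduction is what kills the translation invariance: the constraint $\eta\sum q = r^*+c$ selects a unique solution $q_\infty$ of the Bellman equation, the reduced ODE has a single globally asymptotically stable equilibrium, and the scaled ODE $\dot x = T_0x - x - \eta(\sum x)e$ (with $T_0$ the zero-reward Bellman operator) has the origin as its \emph{unique} equilibrium, so the Borkar--Meyn boundedness criterion applies.

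Without that identity your program does not close. Your joint ODE has the one-dimensional equilibrium set $\{(q^*+ce,\,r^*):c\in\bbR\}$, so the asynchronous SA theorem only yields convergence to an internally chain transitive subset of that line, not to a point; worse, the homogeneous scaled ODE inherits the equilibrium line $\{(ce,0)\}$ through the origin, so the Borkar--Meyn criterion fails and your boundedness argument has no footing---exactly the obstruction you name without resolving. Your stability heuristic is also wrong in two places: a uniform shift $Q\mapsto Q+ce$ leaves every TD residual unchanged (since $T(Q+ce)=TQ+ce$), so it does not ``raise every TD residual''; and the translation component does not go to zero. Restricted to the invariant slice $Q=q^*+c(t)e$, $\bar R=r^*+d(t)$, the dynamics are $\dot d=-\eta w\,d$ and $\dot c=-d$ for some $w>0$, so $c(t)\to c(0)-d(0)/(\eta w)$: the limit of $Q_t$ carries an initial-condition-dependent offset, which is precisely why the paper needs the separate centering construction of Section~\ref{sec:centering}. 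Your treatment of the greedy-policy claim is fine in spirit; the paper instead bounds $\abs{r^*-r(\pi_t)}$ by the span seminorm of $TQ_t-Q_t$ via Puterman's Theorem 8.5.5 and the continuous mapping theorem.
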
 

\begin{proof} (Sketch; complete proof in Appendix \ref{app:convergence-proofs}) 
\def\X{\Sigma}
Our proof comprises two steps. First, we combine our algorithm's two updates to obtain a single update that is similar to the RVI Q-learning's update. Second,
we extend the family of RVI-learning algorithms so that the aforementioned single update is a member of the extended family and show convergence for the extended family.

Define $\X_t \doteq \sum_{s,a} Q_{t}(s, a)$. At each time step, the increment to $\bar R_t$ is $\eta$ times the increment to $Q_t$ and hence to $\X_t$. Therefore, the cumulative increment can be written as:
\vspace{-2mm}
\begin{align}
    \bar R_t - \bar R_0 &= \sum_{i = 0}^{t-1} \eta \alpha_i \delta_i = \eta\left(\X_t - \X_0 \right) \nonumber \\
    \implies \bar R_t &= \eta \X_t - c \text{,\ \ where\ \  } c \doteq \eta \X_0 - \bar R_0. \label{eq:proof-rviq-barr}
\end{align}
Next, substitute $\bar R_t$ in \eqref{Q-learning: Q update} with \eqref{eq:proof-rviq-barr}:
\begin{align}
    & Q_{t+1}(S_t, A_t) = Q_{t}(S_t, A_t)\ + \nonumber \\ 
    & \alpha_t \big(R_{t+1} + \max_{a} Q_t(S_{t+1}, a) - Q_t(S_t, A_t) - \eta \X_t + c \big) \nonumber \\
    &\hspace{1.9cm} = Q_{t}(S_t, A_t)\ + \nonumber \\ 
    & \alpha_t \big(\tilde{R}_{t+1} + \max_{a} Q_t(S_{t+1}, a) - Q_t(S_t, A_t) - \eta \X_t\big), \label{eq:proof-diffq-update}
\end{align}
where $\tilde{R}_{t+1} \doteq R_{t+1} + c$. Now \eqref{eq:proof-diffq-update} is in the same form as RVI Q-learning's update:
\begin{align}
\label{RVI Q-learning}
    &Q_{t+1}(S_t, A_t) = Q_{t} (S_t, A_t)\ +  \nonumber\\ 
    &\alpha_t \big(R_{t+1} + \max_a Q_t(S_{t+1}, a) - Q_t(S_t, A_t) - f(Q_t)\big), 
\end{align}
with $f(Q_t) = \eta \X_t$ for a slightly different MDP $\tilde \calM$ whose rewards are all shifted by $c$. 

Note that the convergence of $Q_t$ in \eqref{eq:proof-diffq-update} cannot be obtained using the convergence theorem of RVI Q-learning because $\eta \X_t = \eta \sum_{s,a} Q_t(s,a)$ in general does not satisfy conditions on $f$ allowed by Assumption 2.2 of Abounadi et al.~(2001).
However, by extending the family of RVI Q-learning algorithms to cover the case of $f(Q_t) = \eta \sum_{s, a} Q_t(s, a)\ \forall \eta\in\mathbb{R}$,
we show that the convergence of $Q_t$ in \eqref{eq:proof-diffq-update} holds. In particular, we show that $Q_t$ converges almost surely to a solution, denoted as $q_\infty$, which is the unique solution for $q$ in \eqref{action-value Bellman equation} under MDP $\tilde{\calM}$ and $\eta \sum_{s, a} q(s, a) = r_* + c$. It can be shown that $q_\infty$ is also a solution for $q$ in \eqref{action-value Bellman equation} in $\calM$. Additionally, because $\eta \X_t = \eta \sum_{s, a} Q_t(s, a)$ converges to $\eta \sum_{s, a} q_\infty(s, a) = r_* + c$, we have $\bar R_t = \eta \X_t - c$ 
converges to $r_*$ almost surely. The almost-sure convergence of $r(\pi_{t}, s)$ to $r_*,\ \forall s$ then follows from a variant of Theorem 8.5.5 by Puterman (1994), the continuous mapping theorem, and the convergence of $Q_t$.
\end{proof}

\textbf{Remark}: Interestingly, RVI Q-learning and Differential Q-learning make the same updates to $Q_t$ in special cases. For RVI Q-learning, the special case is when the reference function is the mean of all state--action pairs' values. For Differential Q-learning, the special case is when $\eta = \frac{1}{|\calS||\calA|}$. These special cases are not particularly good for either algorithm, and therefore their special-case equivalence tells us little about the relationship between the algorithms in practice. In RVI Q-learning, it is generally better for the reference function to emphasize state--action pairs that are frequently visited rather than to weight all state--action pairs equally (an example of this is shown and discussed in Section~\ref{sec:control-exps}). In Differential Q-learning, the special-case setting of $\eta = \frac{1}{|\calS||\calA|}$ would often be much too small on problems with large state and action spaces. 

If Differential Q-learning is applied to simulated experience generated from a model of the environment, then it becomes a planning algorithm, which we call \emph{Differential Q-planning}. Formally, the model is a function $\hat p: \calS \times \calR \times \calS \times \calA \to [0, 1]$, analogous to $p$, that, like $p$, sums to 1: $\sum_{s', r} \hat p(s', r \mid s, a) = 1$ for all $s, a$. A model MDP can be thus constructed using $\hat p$ and $\calS, \calA, \calR$. If the model MDP is communicating, then there is a unique optimal reward rate $\hat{r}_*$. 
The simulated transitions are generated as follows: at each planning step $n$, the agent arbitrarily chooses a state $S_n$ and an action $A_n$, and applies $\hat p$  to generate a simulated resulting state and reward $S_n', R_n \sim \hat p(\cdot, \cdot \mid S_n, A_n)$. 

Like Differential Q-learning, Differential Q-planning maintains a table of action-value estimates $Q_n: \calS \times \calA \to \bbR$ 
and a reward-rate estimate $\bar R_n$. At each planning step $n$, these estimates are updated by \eqref{Q-learning: Q update}–\eqref{Q-learning: bar R update}, just as in Differential Q-learning, except now using $S_n, A_n, R_n, S_n'$ instead of $S_t, A_t, R_{t+1}, S_{t+1}$. 
\smallskip
\begin{theorem}[Informal]\label{Differential Q-planning} Under the same assumptions made in Theorem \ref{Differential Q-learning} (except now for the model MDP corresponding to $\hat p$ rather than $p$) the Differential Q-planning algorithm converges, almost surely, $\bar R_n$ to $\hat r_*$ and $Q_n$ to a solution of $q$ in the Bellman equation (cf. \eqref{action-value Bellman equation}) for the model MDP. 
\end{theorem}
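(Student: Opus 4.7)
The plan is to reduce Differential Q-planning to Differential Q-learning applied on the model MDP $\hat{\calM} \doteq (\calS, \calA, \calR, \hat p)$, and then invoke Theorem~\ref{Differential Q-learning} essentially as a black box. The reduction is natural because the updates \eqref{Q-learning: Q update}--\eqref{Q-learning: bar R update} are applied to the simulated transitions $(S_n, A_n, R_n, S_n')$ in exactly the same algebraic form as in the learning case, and the only statistical property the proof of Theorem~\ref{Differential Q-learning} exploits about the data is that $(S_n', R_n)$ is drawn from the transition kernel defining the MDP that governs the updates --- which here is $\hat p$ by construction.

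First I would formalize $\hat{\calM}$ as a finite MDP and observe that, under the planning-side communicating assumption, it admits a unique optimal reward rate $\hat r_*$, so the analogue of the Bellman equation \eqref{action-value Bellman equation} with $p$ replaced by $\hat p$ has a unique reward-rate component and a value component that is unique up to an additive constant. Next I would verify each of the four hypotheses of Theorem~\ref{Differential Q-learning} in the reduced setting: (1) $\hat{\calM}$ is communicating by assumption; (2) the step-size conditions are imposed identically; (3) every state--action pair is updated infinitely often, which is now a direct hypothesis on the planner's selection rule rather than an inference from a behavior policy and trajectory; and (4) the ratio of the update frequencies of the most- and least-updated state--action pairs is bounded, again by direct hypothesis.

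With these in place, Theorem~\ref{Differential Q-learning} applied to $\hat{\calM}$ yields almost-sure convergence of $\bar R_n$ to $\hat r_*$ and of $Q_n$ to some $q_\infty$ that solves the Bellman equation for the model MDP, which is exactly what is claimed. No re-derivation of the two-timescale stochastic approximation argument, nor of the shift-by-$c$ bookkeeping used in the sketch of Theorem~\ref{Differential Q-learning}, is needed: both are inherited from the learning case once the reduction is established.

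The main (and essentially only) obstacle is to confirm that the proof of Theorem~\ref{Differential Q-learning} does not use any property of the data stream beyond (a) the transition kernel governing the sampled next state and reward and (b) the update-frequency conditions on the chosen state--action pairs. In particular, the argument underlying Abounadi et al.'s framework is driven by the asymptotic update pattern on the $Q$ table and by conditional expectations taken with respect to the kernel --- here $\hat p$ --- while the planner's arbitrary (possibly adversarial, possibly state-independent) choice of $(S_n, A_n)$ is absorbed entirely into the infinite-visitation and bounded-ratio conditions. Since the proof sketch of Theorem~\ref{Differential Q-learning} invokes exactly these conditions and nothing more about how the updated pairs are chosen, the reduction is valid.
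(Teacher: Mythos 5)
Your reduction is correct and is essentially the paper's own argument: the appendix proves a single ``General Differential Q'' theorem for an asynchronous update driven by an arbitrary set-valued process $Y_n$ of updated state--action pairs (constrained only by the infinite-visitation and bounded-ratio assumptions) with $(S_n', R_n)$ drawn from the governing kernel, and then exhibits both the learning and the planning algorithms as special cases. Your one caveat --- that Theorem~\ref{Differential Q-learning} cannot be invoked literally as a black box because the planner's arbitrary choice of $(S_n, A_n)$ is not a behavior-policy trajectory, so one must instead check that the \emph{proof} uses only the kernel and the update-frequency conditions --- is exactly the point the paper's general formulation is designed to handle, so nothing is missing.
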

\vspace{-2mm}
\begin{proof}
Essentially as in Theorem 1. Full proof in Appendix \ref{app:convergence-proofs}.
\end{proof}

\begin{figure}[b!]
\centering
    \begin{subfigure}{.5\textwidth}
    \includegraphics[width=0.95\textwidth]{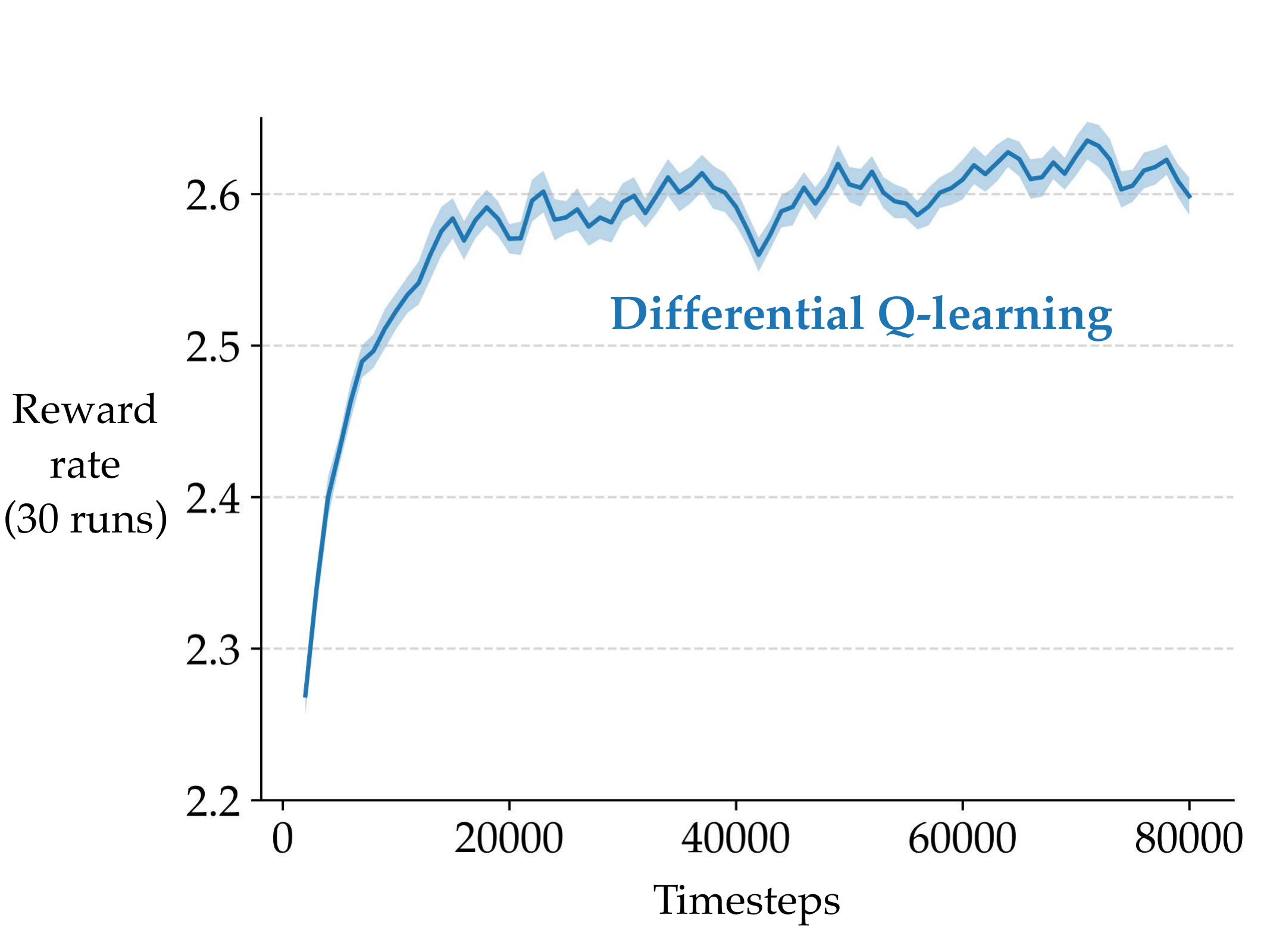}
    \end{subfigure}
    \caption{A typical learning curve for the Access-Control Queuing task. A point on the solid line denotes reward rate over the last 2000 timesteps, and the shaded region indicates one standard error.}
    \label{fig:results-accesscontrol-learningcurve}
\end{figure}


\section{Empirical Results for Control}

\begin{figure*}[t!]
\centering
\begin{subfigure}{.5\textwidth}
    \centering
    \includegraphics[width=0.94\textwidth]{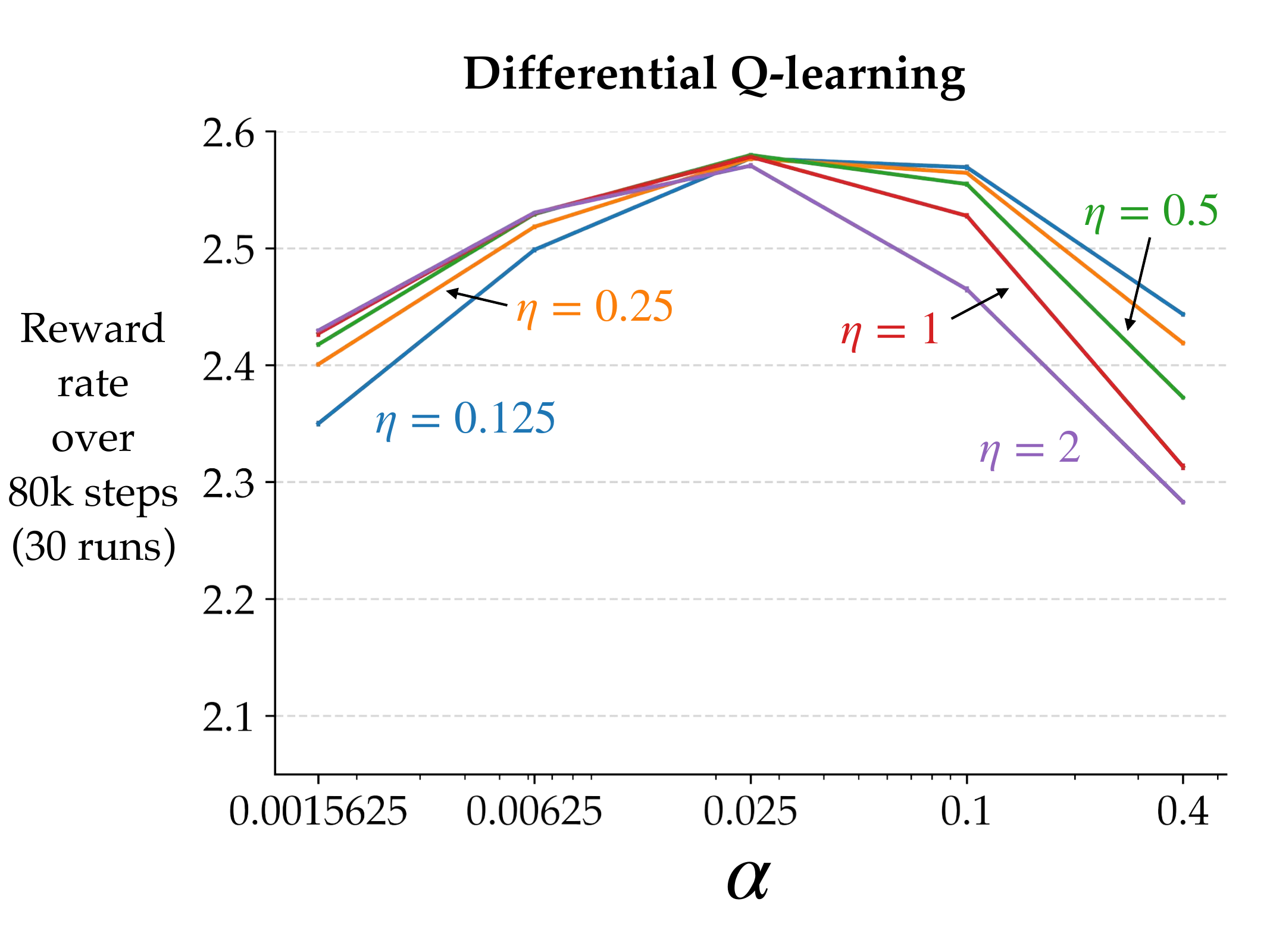}
\end{subfigure}%
\begin{subfigure}{.5\textwidth}
    \centering
    \includegraphics[width=0.94\textwidth]{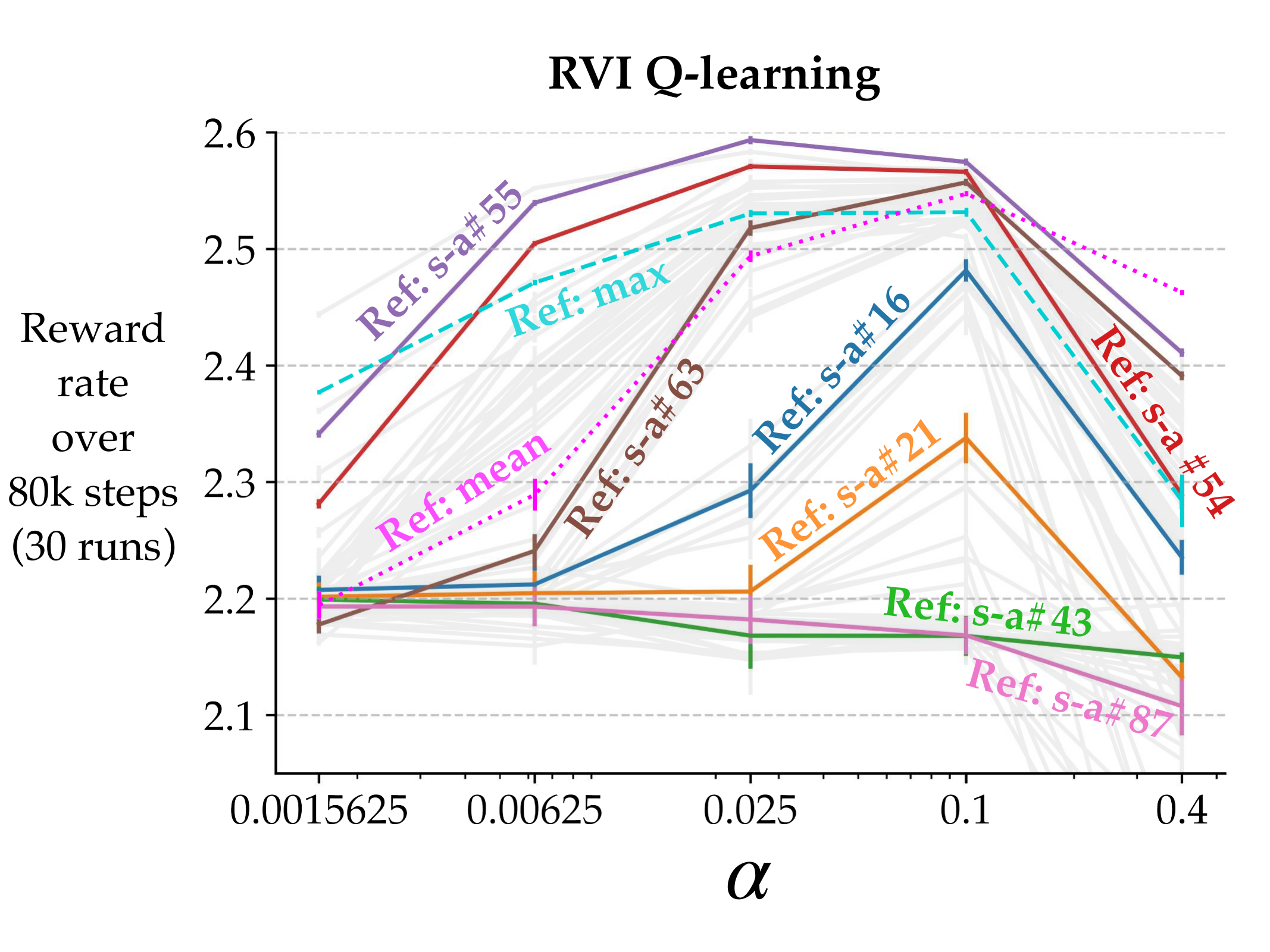}
\end{subfigure}
    \caption{Parameter studies showing the sensitivity of the two algorithms' performance to their parameters. The error bars indicate one standard error, which at times is less than the width of the solid lines. \textit{Left:} Differential Q-learning's rate of learning varied little over a broad range of its parameters. \textit{Right:} RVI Q-learning's rate of learning depended strongly on the choice of the reference function. The solid greyed-out lines mark the performance for each of the 88 state--action pairs considered individually as the single reference pair, with a few representative ones highlighted (labelled as `Ref: s-a'). The dotted lines correspond to the reference function being the mean or the max of all the action-value estimates.}
    \label{fig:results-accesscontrol-sensitivity}
\end{figure*}

\label{sec:control-exps}
In this section we present empirical results with both Differential Q-learning and RVI Q-learning algorithms on the Access-Control Queuing task (Sutton \& Barto 2018).
This task involves customers queuing up to access to one of 10 servers. The customers have differing priorities (1, 2, 4, or 8), which are also the rewards received if and when their service is complete. At each step, the customer at the head of queue is either accepted and allocated a free server (if any) or is rejected (in which case a reward of 0 is received). This decision is made based on the priority of the customer and the number of currently free servers, which together constitute the state of this average-reward MDP. The rest of the details of this test problem are exactly as described by Sutton and Barto (2018).

We applied RVI Q-learning and Differential Q-learning (pseudocodes for both algorithms are in Appendix \ref{app:pseudocodes}) to this task, each for 30 runs of 80,000 steps, and each for a range of step sizes $\alpha$. Differential Q-learning was run with a range of $\eta$ values, and RVI Q-learning was run with
three kinds of reference functions suggested by Abounadi et al.~(2001): (1) the value of a single reference state--action pair, 
for which we considered all possible 88 state--action pairs, 
(2) the maximum value of the action-value estimates,
and (3) the mean of the action-value estimates.
Both algorithms used an $\epsilon$-greedy behavior policy with $\epsilon = 0.1$. 
The rest of the experimental details are in \Cref{app:experiments}.

A typical learning curve is shown in Figure \ref{fig:results-accesscontrol-learningcurve}. While this learning curve is for Differential Q-learning, the learning curves for both algorithms typically started at around 2.2 and plateaued at around 2.6, with different parameter settings leading to different rates of learning. A reward rate of 2.2 corresponds to a policy that accepts every customer irrespective of their priority or the number of free servers---with positive rewards for every accept action, such a policy is learned rapidly in the first few timesteps starting from a zero initialization of value estimates (i.e., a random policy). 
The optimal performance was close to 2.7 (note both algorithms use an $\epsilon$-greedy policy without annealing $\epsilon$).

Figure \ref{fig:results-accesscontrol-sensitivity} shows parameter studies for each algorithm. Plotted is the reward rate averaged over all 80,000 steps, reflecting their rates of learning. The error bars denote one standard error. 

We saw that Differential Q-learning performed well on this task for a wide range of parameter values (left panel). Its two parameters did not interact strongly; the best value of $\alpha$ was independent of the choice of $\eta$. Moreover, the best performance for different $\eta$ values was roughly the same.

RVI Q-learning also performed well on this task for the best choice of the reference state--action pair, but its performance varied significantly for the various choices of the reference function and state--action pairs (right panel).

A closer look at the data revealed a correlation between the performance of a particular reference state--action pair and how frequently it occurs under 
an optimal policy. For example, state--action pairs 55 and 54 occurred frequently and also resulted in good performance. They correspond to states when only two servers are free and the customer at the front of the queue has priority 8 and 4 respectively, and the action is to accept. This is the optimal action in this state. On the other hand, the performance was poor with state--action pairs 43 and 87, which occurred rarely. They correspond to states when all 10 servers are free, a condition that rarely occurs in this problem. 
Finally, the mean of value estimates of all state--action pairs performs moderately well as a reference function. These observations lead us to a conjecture: \textit{an important factor determining the performance of RVI Q-learning with a single reference state--action pair is how often that pair occurs under an optimal policy}. This is problematic because knowing which state--action pairs will occur frequently under an optimal policy is tantamount to knowing the solution of the problem we set out to solve.

The conjecture might lead us to think that the reference function that is the max over all action-value estimates would always lead to good performance because the corresponding state--action pair would occur most frequently under an optimal policy, but this is not true in general. For example, consider an MDP with a state that rarely occurs under any policy. Let all rewards in the MDP be zero except a positive reward from that state. Then the highest action value among all state--action pairs is corresponding to this rarely-occurring state. 

To conclude, our experiments with the Access-Control Queuing task show that the performance of RVI Q-learning can vary significantly over the range of reference functions and state--action pairs. On the other hand, Differential Q-learning does not use a reference function and can be significantly easier to use.


\section{Learning and Planning for Prediction}
\label{sec:prediction_background_algorithms}

In this section, we define the problem setting for the prediction problem and then present our new algorithms for learning and planning.

In the prediction problem, we deal with Markov chains induced by the target and the behavior policies when applied to an MDP. The MDP interactions are the same as described earlier (\Cref{sec:control_background_algorithms}). 

As before, it is convenient to rule out the possibility of the reward rate of the target policy depending on the start state.
In particular, we assume that under the target policy there is only one possible limiting distribution for the resulting Markov chain, independent of the start state.
This is known as the Markov chain being \textit{unichain}.
The reward rate of the target policy then does not depend on the start state. We denote it as $r(\pi)$, where $\pi$ is the target policy:
\vspace{-2mm}
\begin{align}
\label{eq:avg_rew_definition_unichain}
    r(\pi) \doteq \lim_{n \to \infty} \frac{1}{n} \sum_{t = 1}^n \mathbb{E}[R_t \mid A_{0:t-1} \sim \pi].
\end{align}
The \textit{differential state-value function} (also called bias; see, e.g., Puterman 1994) $v_\pi: \mathcal{S} \to \mathbb{R}$ for a policy $\pi$ is:
\begin{align*}
    &v_\pi(s) \doteq \nonumber \\
    &\lim_{n \to \infty} \frac{1}{n} \sum_{k = 1}^n \sum_{t = 1}^k \bbE \left[R_t - r(\pi) \mid S_0 = s, A_{0:t-1} \sim \pi \right],
\end{align*}
for all $s\in\mathcal{S}$.
As usual, the differential state-value function satisfies a recursive Bellman equation:
\begin{align}
    \label{state value Bellman equation}
    v(s) = \sum_a \pi(a | s) \sum_{s', r} p(s', r \,|\, s, a) \big(r - \bar r + v(s')\big),
\end{align}
for all $s\in\mathcal{S}$.
The unique solution for $\bar{r}$ is $r(\pi)$ and the solutions for $v:\mathcal{S}\to\mathbb{R}$ are unique up to an additive constant. 

As usual in off-policy prediction learning, we need an assumption of \emph{coverage}. In this case we assume that every state--action pair for which $\pi(a | s) > 0$ occurs an infinite number of times under the behavior policy.

Our \emph{Differential TD-learning} algorithm updates a table of estimates $V_t: \calS \to \bbR$ as follows: 
\begin{align}
    V_{t+1}(S_t) & \doteq V_t(S_t) + \alpha_t \rho_t \delta_t, \label{TD Learning: V} \\
    V_{t+1}(s) & \doteq V_t(s),\ \forall s \neq S_t, \nonumber
\end{align}
where $\alpha_t$ is a step-size sequence, $\rho_t \doteq \pi(A_t | S_t) / b(A_t | S_t)$ is the importance-sampling ratio, and $\delta_t$ is the TD error:
\begin{equation} \label{TD Learning: TD error}
    \delta_t \doteq R_{t+1} - \bar R_t + V_t(S_{t+1}) - V_t(S_t),
\end{equation} 
where $\bar{R}_t$ is a scalar estimate of $r(\pi)$, updated by:
\begin{align}
\bar R_{t+1} & \doteq \bar R_t + \eta \alpha_t \rho_t \delta_t, \label{TD Learning: bar R}
\end{align}
and $\eta$ is a positive constant.

The following theorem shows that $\bar{R}_t$ converges to $r(\pi)$ and $V_t$ converges to a solution of $v$ in  \eqref{state value Bellman equation}.

\smallskip

\begin{theorem}[Informal] \label{Differential TD-learning}
If 1) the Markov chain induced by the target policy $\pi$ is unichain, 2) every state--action pair for which $\pi(a | s) > 0$ occurs an infinite number of times under the behavior policy, 3) the step sizes, specific to each state, are decreased appropriately, and 4) the ratio of the update frequency of the most-updated state to the update frequency of the least-updated state is finite, then the Differential TD-learning algorithm \eqref{TD Learning: V}–\eqref{TD Learning: bar R} converges, almost surely, $\bar R_t$ to $r(\pi)$ and $V_t$ to a solution of $v$ in the Bellman equation \eqref{state value Bellman equation}.
\end{theorem}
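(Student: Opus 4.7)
The plan is to mirror the proof of Theorem 1: eliminate $\bar R_t$ algebraically so the pair \eqref{TD Learning: V}--\eqref{TD Learning: bar R} collapses to a single RVI-style update, then invoke a prediction analog of the extended RVI convergence theory proved in the appendix for Theorem 1.

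Define $\Sigma_t \doteq \sum_s V_t(s)$. Each per-step increment to $\bar R_t$ is exactly $\eta$ times the increment to $V_t(S_t)$ and hence to $\Sigma_t$, so telescoping gives $\bar R_t = \eta\Sigma_t - c$ with $c \doteq \eta\Sigma_0 - \bar R_0$. Substituting into \eqref{TD Learning: V}--\eqref{TD Learning: TD error} removes $\bar R_t$ and yields
\begin{equation*}
V_{t+1}(S_t) = V_t(S_t) + \alpha_t\rho_t\bigl(\tilde R_{t+1} + V_t(S_{t+1}) - V_t(S_t) - \eta\Sigma_t\bigr),
\end{equation*}
with $\tilde R_{t+1} \doteq R_{t+1}+c$. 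This is an off-policy TD update of RVI form on the shifted MDP $\tilde\calM$ whose rewards are all shifted by $c$, with reference function $f(V) = \eta\Sigma(V)$. As in Theorem 1, this $f$ does not fit the existing RVI hypotheses, so we need an extension.

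Next I would carry out the ODE analysis for the extended family. The candidate ODE is
\begin{equation*}
\dot v(s) = d_b(s)\Bigl(\sum_a \pi(a\mid s)\sum_{s',r} p(s',r\mid s,a)\bigl(r+c+v(s')-v(s)\bigr) - \eta\Sigma(v)\Bigr),
\end{equation*}
where $d_b$ is the stationary distribution of $b$ (which exists and is strictly positive on the coverage support by the coverage assumption). Because $\pi$ is unichain, the linear Bellman operator for $\tilde\calM$ has a one-dimensional family of fixed points differing by a constant shift; the $-\eta\Sigma(v)$ term selects a unique equilibrium $v_\infty$ characterized by $\eta\Sigma(v_\infty) = r(\pi) + c$, and $v_\infty$ solves \eqref{state value Bellman equation} up to the usual additive constant. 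Global asymptotic stability of $v_\infty$ follows from a span-seminorm / Lyapunov argument analogous to Abounadi et al.\ (2001); stability (boundedness of iterates) follows from the standard scaled-ODE argument, whose limiting ODE is the homogeneous version and inherits the same attracting behavior.

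For the stochastic-approximation step, coverage plus boundedness of $\pi$ and $b$ gives $\bbE[\rho_t\delta_t\mid\calF_t]$ equal to the target-policy expected TD update at $S_t$, with residual a square-integrable martingale difference of bounded conditional variance. Per-state step-size conditions together with the bounded ratio of update frequencies (condition 4) then license Borkar's asynchronous SA theorem, yielding $V_t \to v_\infty$ almost surely and hence $\bar R_t = \eta\Sigma_t - c \to \eta\Sigma(v_\infty)-c = r(\pi)$ almost surely. The main obstacle is the ODE analysis: extending the RVI asymptotic-stability argument from the non-expansive max-operator action-value setting to the linear prediction setting with the unusual reference $f(V) = \eta\sum_s V(s)$ and off-policy reweighting by $d_b$. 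Checking that the unichain-driven span contraction and the $-\eta\Sigma$ regularizer together produce a unique globally attracting equilibrium, and that this survives the $d_b$-weighted asynchronous timescale, is where essentially all of the technical work lives.
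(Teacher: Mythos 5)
Your overall route is the same as the paper's: telescope the two updates into $\bar R_t = \eta\Sigma_t - c$, substitute to get a single RVI-style update on the reward-shifted MDP $\tilde\calM$ with reference function $f(V)=\eta\sum_s V(s)$, characterize the unique equilibrium $v_\infty$ by the Bellman equation together with $\eta\sum v_\infty = r(\pi)+c$, and close with an ODE stability argument plus Borkar's asynchronous stochastic-approximation theory. That is exactly the structure of the appendix proof (General Differential TD, Equations \eqref{TD: relation between bar R and V}--\eqref{eq: TD: transformed sync single update} and Lemmas \ref{lemma: TD: unique equilibrium}--\ref{lemma: Synchronous General Differential TD}).

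The one place you diverge — and where you have made the problem harder than it needs to be — is the candidate ODE. You propose analyzing
$\dot v(s) = d_b(s)\bigl(\,\cdot\,\bigr)$, i.e., a $d_b$-weighted mean field, and then defer "essentially all of the technical work" to showing that the span-contraction and the $-\eta\Sigma$ term survive that diagonal weighting. The paper never analyzes that ODE. Because the step sizes are \emph{per-state} (normalized by the local visit counters $\nu(n,s)$, per conditions 3 and 4 of the theorem), Borkar's asynchronous theory (Section 7.4 of Borkar 2009, under Assumptions \ref{assu: asynchronous stepsize 1} and \ref{assu: asynchronous stepsize td 2}) reduces the whole problem to the \emph{unweighted synchronous} ODE $\dot x_t = T_2(x_t) - x_t$, where $T_2(V) = T(V) - (\eta\sum V)e$ and $T$ is the target-policy Bellman operator on $\tilde\calM$; the importance-sampling ratio only corrects the action distribution inside the martingale-difference decomposition, and the state-visitation distribution $d_b$ is absorbed entirely by the asynchronous machinery rather than appearing in the mean field. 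This matters because the decomposition $x_t = y_t + z_t e$ with the scalar ODE $\dot z_t = -uz_t + (\tilde r(\pi) - f(y_t))$, and the sup-norm non-expansiveness of $T_1$, rely on $T_1(y+ze)=T_1(y)+ze$ and $f(y+ze)=f(y)+uz$; a diagonal $d_b$-scaling destroys the first identity (a diagonally rescaled non-expansive map need not be non-expansive in the same norm), so the Lyapunov argument you are counting on would not go through for your weighted ODE. If you instead state the synchronous update \eqref{eq: TD: transformed sync single update}, verify conditions (A1)--(A5) with the $\rho_n$-corrected noise term, and run the Abounadi-style lemmas on the unweighted ODE, the remaining work is a near-verbatim transcription of the Theorem 1 argument with $\max_{a'}$ replaced by the linear $\pi$-expectation and $r_*$ by $r(\pi)$.
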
 
\vspace{-2mm}
\begin{proof}
Essentially as in Theorem 1. Full proof in Appendix \ref{app:convergence-proofs}.
\end{proof}

Note that this result applies to both on-policy and off-policy problems. In off-policy problems, Differential TD-learning is the first model-free average-reward algorithm proved to converge to the true reward rate.

The planning version of Differential TD-learning, called \emph{Differential TD-planning}, 
uses simulated transitions generated just as in Differential Q-planning, except that Differential TD-planning chooses actions according to policy $b$ and not arbitrarily. Differential TD-planning maintains a table of value estimate $V_n: \calS \to \bbR$ 
and a reward rate estimate $\bar R_n$ and updates them just as in Differential TD-learning \eqref{TD Learning: V}–\eqref{TD Learning: bar R} using $S_n, A_n, R_n, S_n'$ instead of $S_t, A_{t}, R_{t+1}, S_{t+1}$. \smallskip

\begin{theorem}[Informal]
Under the same assumptions made in Theorem \ref{Differential TD-learning} (except now for the model MDP corresponding to $\hat p$ rather than $p$) the Differential TD-planning algorithm converges, almost surely, $\bar R_n$ to $\hat r(\pi)$ and $V_n$ to a solution of $v$ in the state-value Bellman equation (cf. \eqref{state value Bellman equation}) for the model MDP.
\end{theorem}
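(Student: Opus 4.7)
The plan is to transplant the argument of Theorem \ref{Differential TD-learning} to simulated experience in essentially the same way that Theorem \ref{Differential Q-planning} transplants the argument of Theorem \ref{Differential Q-learning}. The only difference between the learning and planning settings is how the tuples $(S_n, A_n, R_n, S_n')$ arise: in planning, $S_n$ may be chosen arbitrarily, while $A_n\sim b(\cdot\mid S_n)$ and $R_n, S_n'\sim \hat p(\cdot,\cdot\mid S_n, A_n)$. The two updates to $V_n$ and $\bar R_n$ are otherwise structurally identical to those in the learning case, so the same two-step template (collapse the updates, then apply an extended RVI convergence theorem) should work.

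First I would combine the updates. Letting $\Sigma_n \doteq \sum_s V_n(s)$, the telescoping identity
\begin{align*}
    \bar R_n = \eta \Sigma_n - c, \quad c \doteq \eta \Sigma_0 - \bar R_0,
\end{align*}
follows exactly as in \eqref{eq:proof-rviq-barr}, because each increment of $\bar R_n$ is $\eta$ times the corresponding increment of $V_n$. Substituting into \eqref{TD Learning: V} yields a single RVI TD-prediction-style update
\begin{align*}
    V_{n+1}(S_n) &= V_n(S_n) \\
    &\quad + \alpha_n \rho_n \bigl(\tilde R_n + V_n(S_n') - V_n(S_n) - \eta \Sigma_n\bigr),
\end{align*}
with shifted rewards $\tilde R_n \doteq R_n + c$, corresponding to a shifted-reward model MDP $\tilde{\calM}$ whose target-policy reward rate is $\hat r(\pi) + c$.

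Second, I would invoke the TD-prediction analogue of the extended RVI convergence theorem used in Theorem \ref{Differential TD-learning}. Conditional on $S_n = s$, the importance ratio $\rho_n$ cancels the behavior-policy weighting, so $\bbE[\rho_n \delta_n \mid S_n = s]$ equals the target-policy Bellman residual at $s$ under the model dynamics $\hat p$. This is exactly the conditional expectation driving the ODE argument in the learning-case proof; the marginal distribution of $S_n$ is now arbitrary, but the asynchronous stochastic-approximation framework tolerates this as long as conditions 3 and 4 (appropriately decreasing state-specific step sizes and a bounded ratio of maximum to minimum update frequencies) hold. Applying the extended theorem, $V_n$ converges almost surely to some $v_\infty$ that solves the prediction Bellman equation for $\tilde{\calM}$ and that satisfies $\eta \sum_s v_\infty(s) = \hat r(\pi) + c$. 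Since a constant shift of all rewards only shifts $\bar r$ by the same constant and leaves the $v$-part of \eqref{state value Bellman equation} unchanged, $v_\infty$ is also a solution of \eqref{state value Bellman equation} for the original model MDP, and $\bar R_n = \eta \Sigma_n - c \to \hat r(\pi)$ almost surely.

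The main obstacle, inherited from Theorem \ref{Differential Q-learning}, is that $f(V_n)=\eta\sum_s V_n(s)$ does not satisfy Abounadi et al.'s original Assumption 2.2, so the extended convergence theorem is essential. This obstacle has already been resolved in the learning-case proof, and because the extension depends only on the structural form of the RVI-style update and not on the source of transitions, it carries over verbatim to the planning case. The only planning-specific check is that an arbitrary state-selection rule still fits the asynchronous stochastic-approximation template, which is guaranteed by conditions 3 and 4.
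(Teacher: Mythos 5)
Your proposal is correct and follows essentially the same route as the paper: the appendix likewise folds learning and planning into a single ``General Differential TD'' update, collapses the two updates via the telescoping identity $\bar R_n = \eta \sum_s V_n(s) - c$, passes to the shifted-reward model MDP, and invokes the extended RVI-style convergence argument (with the importance-sampling ratio cancelling the behavior-policy weighting in the conditional expectation, exactly as you note). No substantive differences.
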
 
\vspace{-2mm}
\begin{proof}
Essentially as in Theorem 1. Full proof in Appendix \ref{app:convergence-proofs}.
\end{proof}


\section{Empirical Results for Prediction}
\label{sec:prediction-experiments}

\begin{figure*}[b]
\centering
\begin{subfigure}{.5\textwidth}
    \centering
    \includegraphics[width=0.9\textwidth]{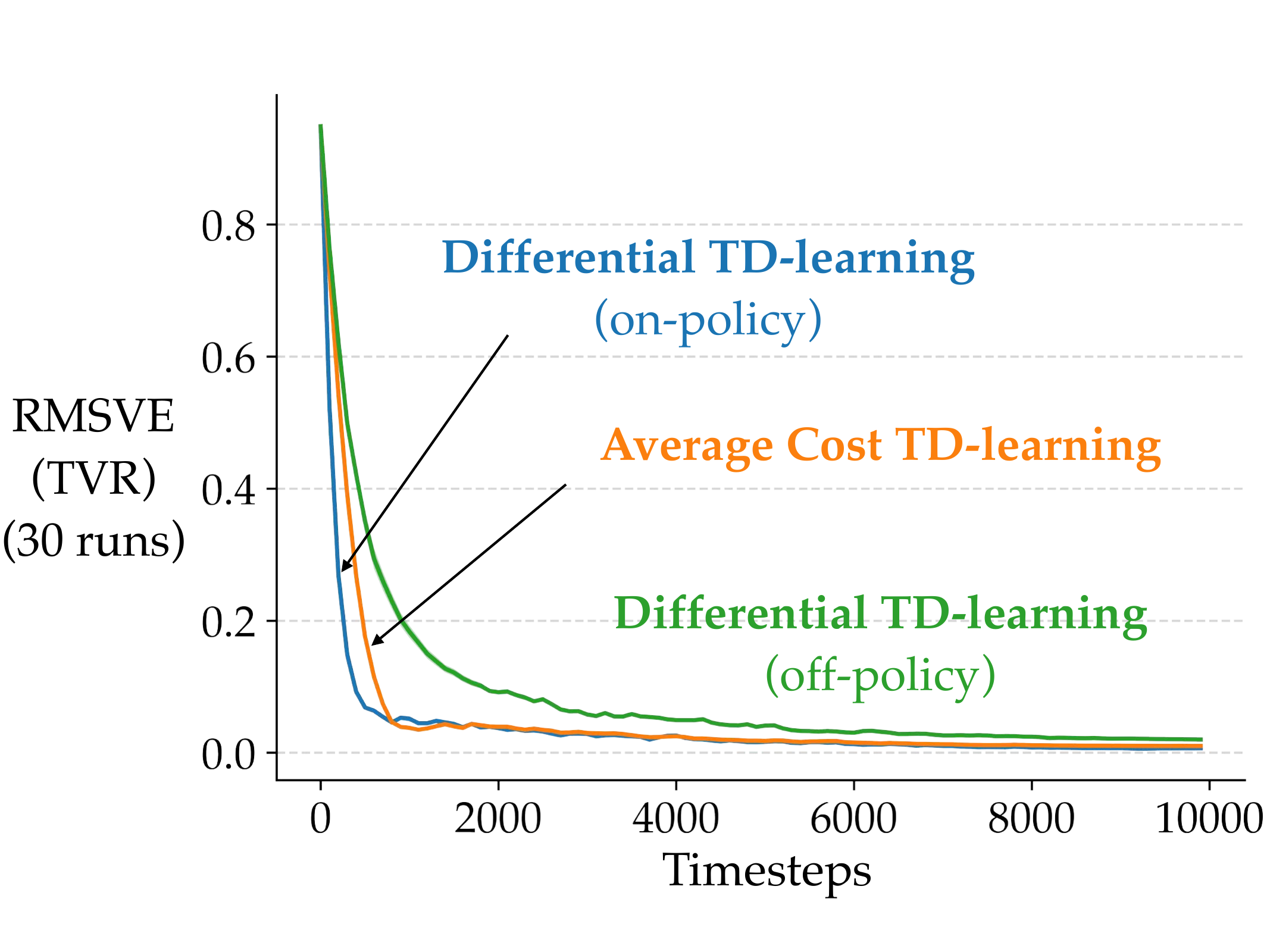}
\end{subfigure}%
\begin{subfigure}{.5\textwidth}
    \centering
    \includegraphics[width=0.9\textwidth]{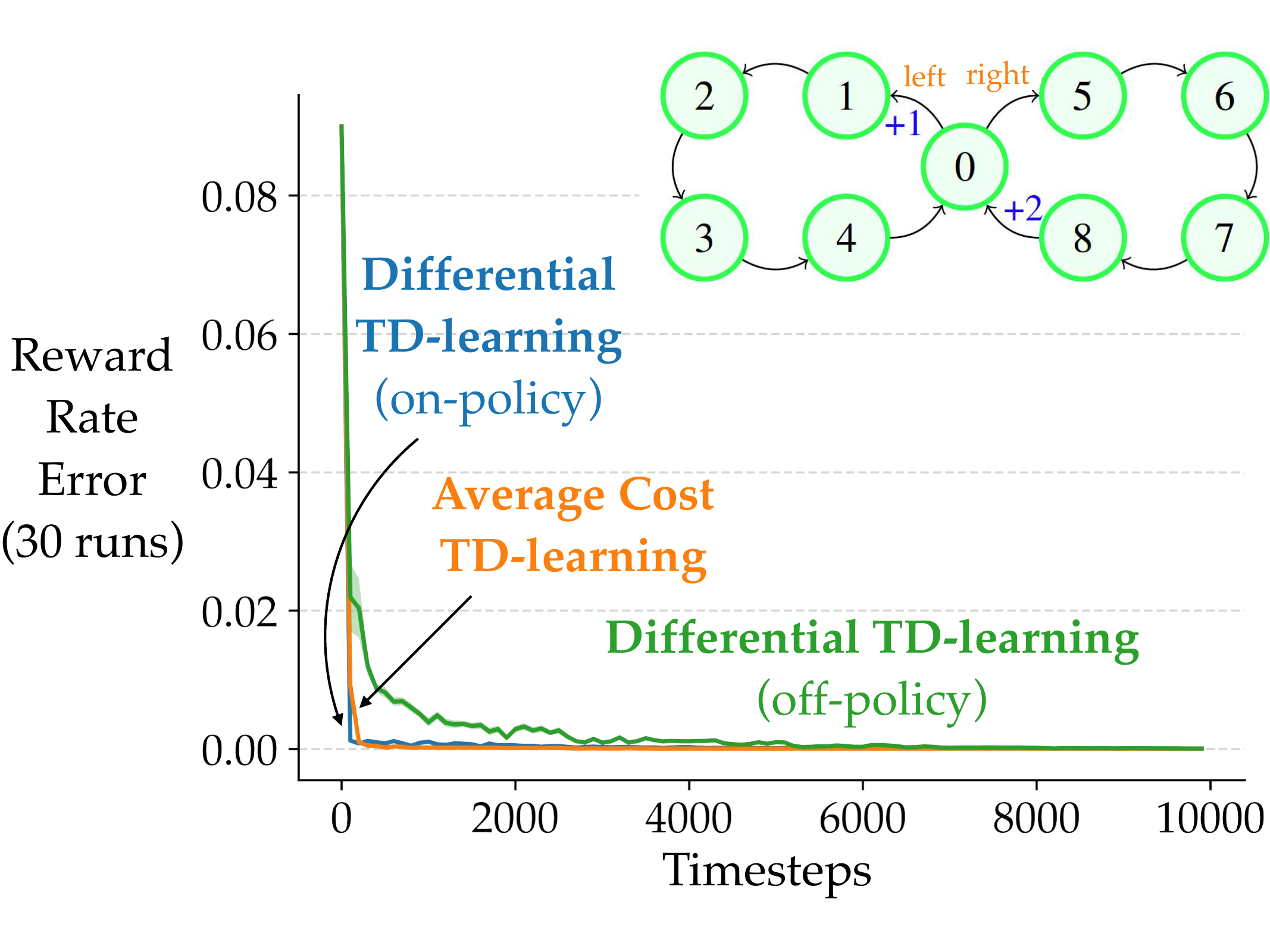}
\end{subfigure}
\begin{subfigure}{.5\textwidth}
    \centering
    \includegraphics[width=0.9\textwidth]{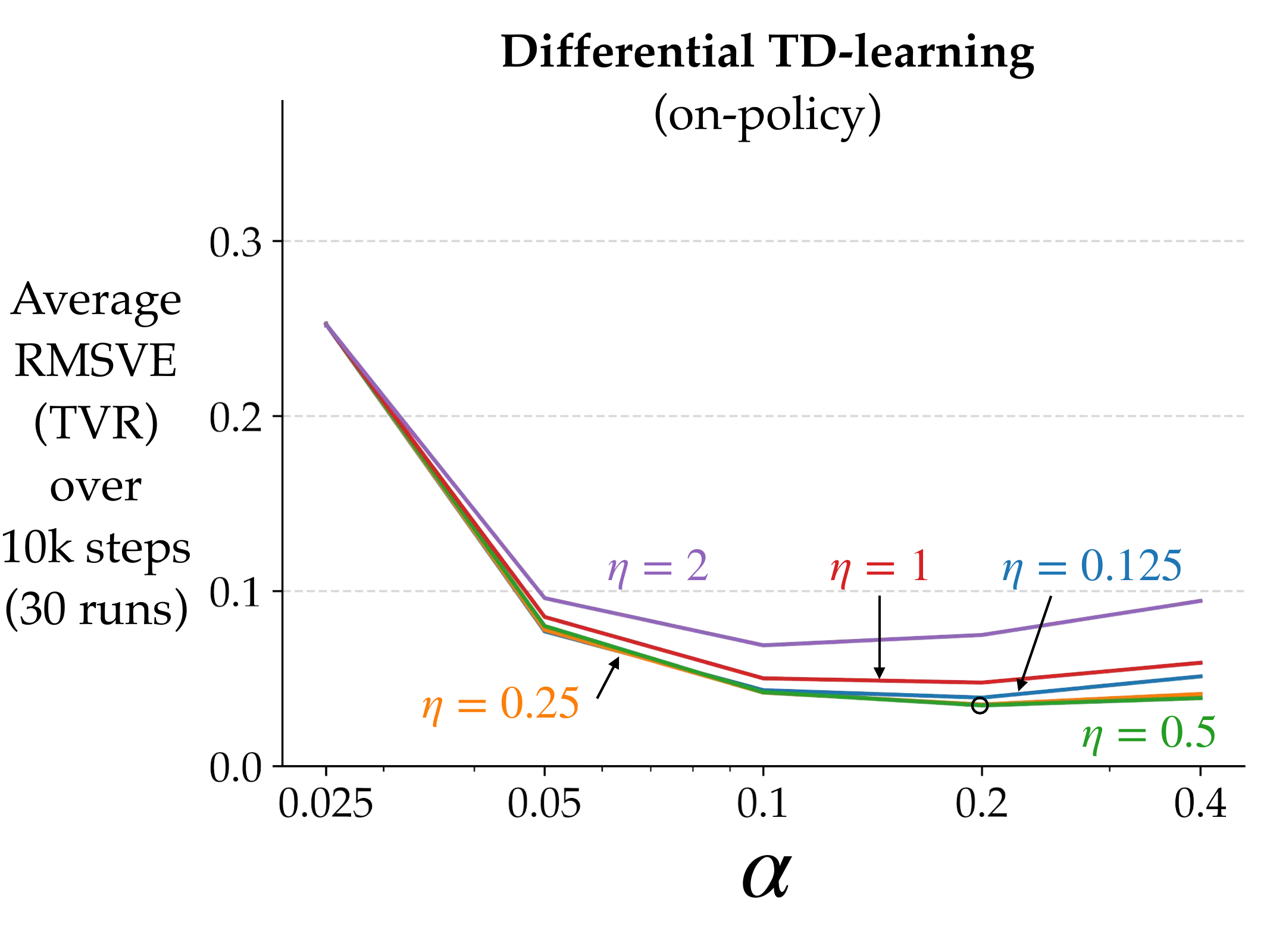}
\end{subfigure}%
\begin{subfigure}{.5\textwidth}
    \centering
    \includegraphics[width=0.9\textwidth]{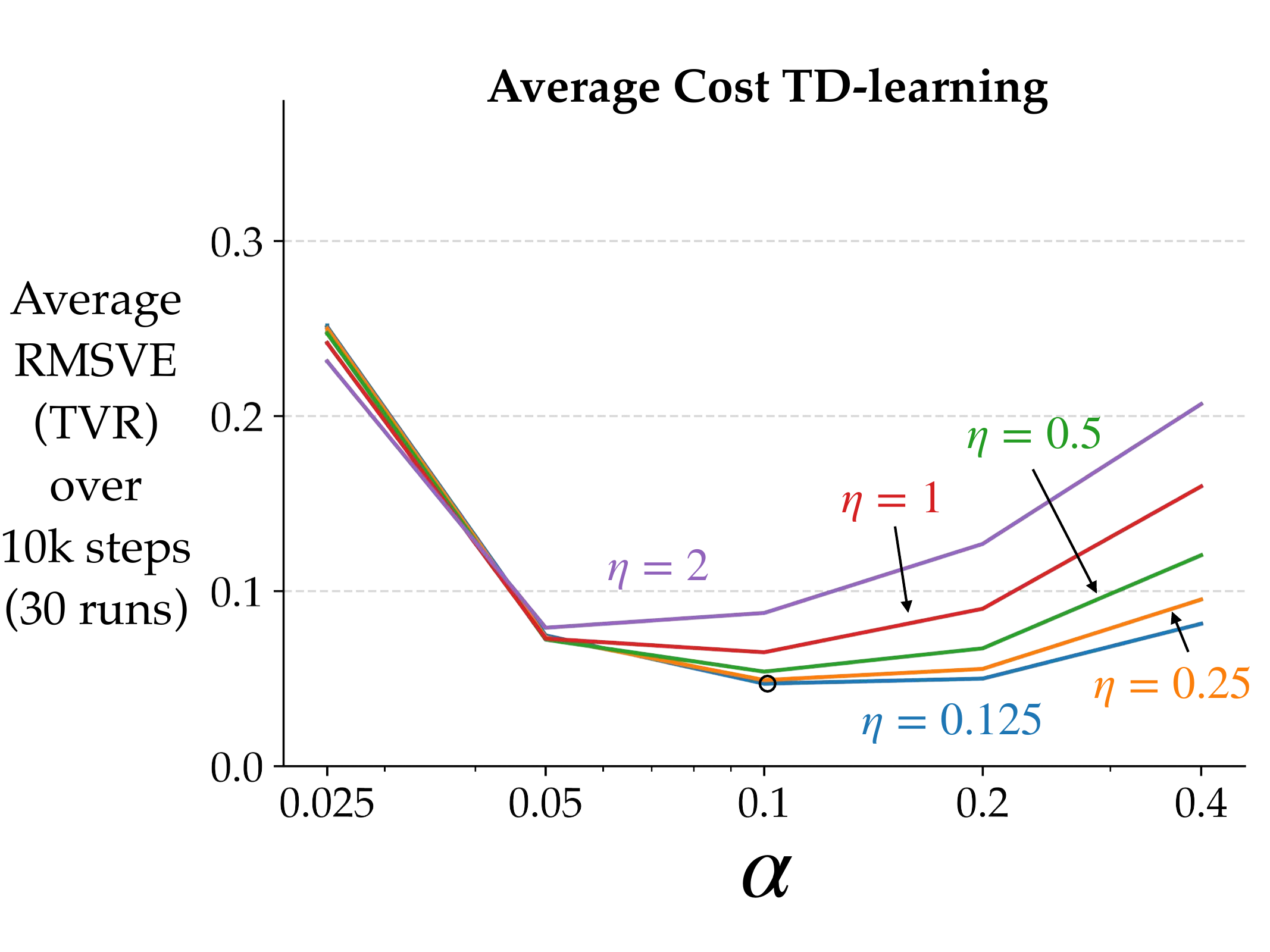}
\end{subfigure}
    \caption{Learning curves and parameter studies for Differential TD-learning and Average Cost TD-learning on the Two Loop task (inset top-right). The standard errors are thinner than width of the solid lines. \textit{Top:} Exemplary learning curves showing all three algorithms tend to zero errors in terms of RMSVE (TVR) and RRE. 
    \textit{Bottom:} Parameter studies showing the performance of Differential TD-learning in terms of average RMSVE (TVR) is less sensitive to the choice of parameters $\alpha$ and $\eta$ than Average Cost TD-learning. The black circles in the bottom row denote the parameter configurations for which the learning curves in the top row are shown. 
    }
\label{fig:results-prediction}
\end{figure*}

In this section we present empirical results with average-reward prediction learning algorithms using the Two Loop task shown in the upper right of Figure \ref{fig:results-prediction} (cf.\ Mahadevan 1996, Naik et al.\ 2019). This is a continuing MDP with only one action in every state except state $0$. Action \texttt{left} in state $0$ gives an immediate reward of +1 and action \texttt{right} leads to a delayed reward of +2 after five steps. The optimal policy is to take the action \texttt{right} in state $0$ to obtain a reward rate of 0.4 per step. The easier-to-find sub-optimal policy of going \texttt{left} results in a reward rate of 0.2.

We performed two prediction experiments: on-policy and off-policy. For the first on-policy experiment, the policy $\pi$ to be evaluated was the one that randomly picks \texttt{left} or \texttt{right} in state $0$ with probability 0.5. The reward rate corresponding to this policy is 0.3. In addition to the on-policy version of Differential TD-learning, we ran Tsitsiklis and Van Roy's (1999) Average Cost TD-learning. It is an on-policy algorithm with the following updates:
\begin{align}
    V_{t+1}(S_t) &\doteq V_t(S_t) + \alpha_t \delta_t, \nonumber \\
    \bar{R}_{t+1} &\doteq \bar{R}_t + \eta\alpha_t \big( R_{t+1} - \bar{R}_{t} \big), \label{eq:AvgCostTD_update_rbar}
\end{align}
where $\delta_t$ is the TD error as in \eqref{TD Learning: TD error}. Both algorithms have the same two step-size parameters. For each parameter setting, 30 runs of 10,000 steps each were performed. 

We evaluated the accuracy of the estimated value function as well as the estimated reward rate of the target policy. The top-left panel in Figure \ref{fig:results-prediction} shows the learning curves of the two algorithms (blue and orange) in terms of root-mean-squared value error (RMSVE) w.r.t.\ timesteps. We used Tsitsiklis and Van Roy's (1999) variant of RMSVE which measures the distance of the estimated values to the nearest solution that satisfies the state-value Bellman equation \eqref{state value Bellman equation}. We denote this metric by `RMSVE (TVR)'. Details on how it was computed are provided in Appendix \ref{app:experiments} along with the complete experimental details. We saw that the RMSVE (TVR) went to zero in a few thousand steps for both on-policy Differential TD-learning and Average Cost TD-learning. The top-right panel shows the learning curves of the two algorithms (blue and orange) in terms of squared error in the estimate of the reward rate w.r.t.\ the true reward rate of the target policy (($r(\pi)-\bar{R}_t)^2$, denoted as reward rate error or `RRE'), which also went to zero for both algorithms. 

The plots in the bottom row indicate the sensitivity of the performance of these two algorithms to the two step-size parameters $\alpha$ and $\eta$. The average RMSVE (TVR) over all the 10k timesteps was equal or lower for Differential TD-learning than Average Cost TD-learning across the range of parameters tested. In addition, on-policy Differential TD-learning was less sensitive to the values of both $\alpha$ and $\eta$ than Average Cost TD-learning. This was also the case with RRE, the plots for which are reported in Appendix \ref{app:experiments}. 

The green learning curves in the top row of Figure \ref{fig:results-prediction} correspond to the off-policy version of Differential TD-learning. This was used in the second off-policy experiment: the same policy as in the on-policy experiment was evaluated (i.e., target policy takes either action in state $0$ with probability 0.5), now using data collected with a behavior policy that picks the \texttt{left} and \texttt{right} actions with probabilities 0.9 and 0.1 respectively. Both RMSVE (TVR) and RRE went to zero for off-policy Differential TD-learning within a reasonable amount of time. Its parameter studies for both RMSVE (TVR) and RRE are presented in Appendix \ref{app:experiments} along with additional experimental details.\footnote{Average Cost TD-learning cannot be extended to the off-policy setting due to the use of a sample average of the observed rewards to estimate the reward rate \eqref{eq:AvgCostTD_update_rbar}. For more details, please refer to Appendix \ref{app:additional}.}

Our experiments 
show that our on- and off-policy Differential TD-learning algorithms can accurately estimate the value function and the reward rate of a given target policy, as expected from Theorem \ref{Differential TD-learning}. In addition, on-policy Differential TD-learning can be easier to use than Average Cost TD-learning.


\section{Estimating the Actual Differential Value Function}
\label{sec:centering}

All average-reward algorithms, including the ones we proposed, converge to an \emph{uncentered} differential value function, in other words, the actual differential value function plus some unknown offset that depends on the algorithm itself and design choices such as initial values or reference states. 

We now introduce a simple technique to compute the offset in the value estimates for both on- and off-policy learning and planning. Once the offset is computed, it can simply be subtracted from the value estimates to obtain the estimate of the actual (centered) differential value function. 

We demonstrate how the offset can be eliminated in Differential TD-learning; the other cases (Differential TD-planning, Differential Q-learning and Differential Q-planning) are shown in Appendix \ref{app:convergence-proofs}. For this purpose, we introduce, in addition to the first estimator \eqref{TD Learning: V}–\eqref{TD Learning: bar R}, a second estimator for which the rewards are the value estimates of the first estimator. The second estimator maintains an estimate of the scalar offset $\bar V_t$, an auxiliary table of estimates $F_t(s), \forall s \in \calS$, and uses the following update rules:
\vspace{-1mm}
\begin{align}
    F_{t+1}(S_t) & \doteq F_t(S_t) + \beta_t \rho_t \Delta_t,  \label{Centered TD Learning: F} \\
    F_{t+1}(s) & \doteq F_t(s),\ \forall s \neq S_t,
    \nonumber
\end{align}
where $\beta_t$ is a step-size sequence, $\Delta_t$ is the TD error of the second estimator:
\vspace{-1mm}
\begin{align} \label{Centered TD Learning: TD error}
    \Delta_t \doteq V_{t}(S_{t}) - \bar V_t + F_t(S_{t+1}) - F_t(S_t), 
\end{align}
where:
\begin{align}
    \bar V_{t+1} & \doteq \bar V_t + \kappa \beta_t \rho_t \Delta_t, \label{Centered TD Learning: bar V}
\end{align}
and $\kappa$ is a positive constant. 
We call \eqref{TD Learning: V}–\eqref{TD Learning: bar R} with \eqref{Centered TD Learning: F}–\eqref{Centered TD Learning: bar V} \emph{Centered Differential TD-learning}. 
Before presenting the convergence theorem, we briefly give some intuition on why this technique can successfully compute the offset. By Theorem \ref{Differential TD-learning}, $\bar R_t$ converges to $r(\pi)$ and $V_t$ converges to some $v_\infty$ almost surely, where $v_\infty(s) = v_\pi(s) + c, \forall s \in \calS$ for some offset $c \in \bbR$. In Appendix \ref{app:convergence-proofs}, we show $\sum_s d_\pi(s) v_\pi(s) = 0$, where $d_\pi$ is the limiting state distribution following policy $\pi$, which implies $\sum_s d_\pi(s) v_\infty(s) = c$. As $V_t$ converges to $v_\infty$, $\sum_s d_\pi(s) V_t(s)$ converges to $c$. Now note that $\sum_s d_\pi(s) V_t(s)$ and $r(\pi) = \sum_s d_\pi(s) r_\pi(s)$ are of the same form. Therefore $\sum_s d_\pi(s) V_t(s)$ can be estimated similar to how $r(\pi)$ is estimated, using $V_t$ as the reward. This leads to the second estimator: \eqref{Centered TD Learning: F}–\eqref{Centered TD Learning: bar V}.\smallskip

\begin{theorem}[Informal]
If the assumptions in Theorem \ref{Differential TD-learning} hold, and the step sizes, specific to each state, are decreased appropriately, then Centered Differential TD-learning \eqref{Centered TD Learning: F}–\eqref{Centered TD Learning: bar V} converges, almost surely, $V_t(s) - \bar V_t$ to $v_\pi(s)$ for all $s$ and $\bar R_t$ to $r(\pi)$.
\end{theorem}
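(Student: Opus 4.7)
The plan is to exploit the observation that the second estimator $(F_t, \bar V_t)$ has exactly the structure of Differential TD-learning, except that the ``rewards'' driving it are the current value estimates $V_t(S_t)$ of the first estimator. Since Theorem \ref{Differential TD-learning} already gives $V_t \to v_\infty$ almost surely, where $v_\infty = v_\pi + c$ for some scalar offset $c$, the second estimator is asymptotically Differential TD-learning applied to a Markov chain whose per-step reward is $v_\infty(S_t)$. Such a chain has reward rate $\sum_s d_\pi(s) v_\infty(s)$, so once this quantity is identified with $c$, both $\bar V_t \to c$ and $V_t(s) - \bar V_t \to v_\pi(s)$ follow immediately.

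The first substantive step is the centering identity $\sum_s d_\pi(s) v_\pi(s) = 0$, already promised in the exposition. I would derive it from the ergodic representation of the bias: writing $v_\pi(s) = \lim_n \frac{1}{n}\sum_{k=1}^n \sum_{t=1}^k \bbE[R_t - r(\pi) \mid S_0 = s]$ and integrating against $d_\pi$, stationarity of $d_\pi$ makes each inner expectation equal $r(\pi) - r(\pi) = 0$, so the whole sum vanishes. Consequently $\sum_s d_\pi(s) v_\infty(s) = \sum_s d_\pi(s)\bigl(v_\pi(s) + c\bigr) = c$, which is the target for $\bar V_t$.

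The second step is the convergence analysis of $(F_t, \bar V_t)$ itself. I would view \eqref{Centered TD Learning: F}--\eqref{Centered TD Learning: bar V} as Differential TD-learning applied to the Markov chain under $\pi$, with ``reward'' process $\tilde R_{t+1} \doteq V_t(S_t)$, value table $F_t$, and scalar estimate $\bar V_t$. Because $V_t \to v_\infty$ almost surely, this is a vanishing perturbation of an idealised system whose exogenous reward is $v_\infty(S_t)$. Applying Theorem \ref{Differential TD-learning} to that idealised system yields $\bar V_t \to \sum_s d_\pi(s) v_\infty(s) = c$ and $F_t$ converging to a solution of the associated Bellman equation. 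Combined with Step~1, this gives $V_t(s) - \bar V_t \to v_\pi(s)$ and $\bar R_t \to r(\pi)$ almost surely.

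The main obstacle is making the coupling between the two estimators rigorous: $\tilde R_{t+1}$ is not exogenous but generated by the first estimator, whose dynamics evolve concurrently. The cleanest remedy is a two-timescale stochastic approximation argument in the spirit of Borkar (1997), for which the implicit step-size condition in the theorem statement would enforce $\beta_t/\alpha_t \to 0$; on the slower timescale of $(F_t, \bar V_t)$, the faster iterate $V_t$ can be treated as frozen at its limit $v_\infty$. An alternative single-timescale route decomposes the TD error $\Delta_t$ into an ideal part built from $v_\infty(S_t)$ plus a residual proportional to $V_t(S_t) - v_\infty(S_t)$, and absorbs the residual into the martingale-noise term using that it vanishes almost surely. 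Either way, once the coupling is handled, the ODE-based asymptotic analysis inherited from Abounadi, Bertsekas, and Borkar (2001) and used in Theorem \ref{Differential TD-learning} applies verbatim, which is why I expect the argument to be a modest extension of the earlier proofs rather than a fundamentally new construction.
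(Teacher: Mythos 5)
Your proposal is correct and follows essentially the same route as the paper: establish the centering identity $\sum_s d_\pi(s)\,v_\pi(s)=0$, view the second estimator as Differential TD-learning with reward $v_\infty(S_t)$ plus a vanishing residual $\rho_t\bigl(V_t(S_t)-v_\infty(S_t)\bigr)$, and conclude $\bar V_t \to \sum_s d_\pi(s)\,v_\infty(s) = c$ so that $V_t - \bar V_t e \to v_\pi$. The paper adopts your second (single-timescale) alternative---treating the residual as a bounded $o(1)$ perturbation handled by the extension in Section 2.2 of Borkar (2009), with no requirement that $\beta_t/\alpha_t \to 0$---and derives the centering identity from Puterman's relation $P^*_\pi H_{P_\pi}=0$ rather than your direct ergodic computation, but these differences are cosmetic.
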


The proof is presented in Appendix \ref{app:convergence-proofs}. We also demonstrate how this technique can be used to learn the actual differential value function with an experiment in Appendix \ref{app:experiments} (with full pseudocode in Appendix \ref{app:pseudocodes}). 


\section{Discussion and Future Work}

We have presented several new learning and planning algorithms for average-reward MDPs. 
Our algorithms 
differ from previous work in that they do not involve reference functions, they apply in off-policy settings for both prediction and control, and they find centered value functions. In our opinion, these changes make the average-reward formulation more appealing for use in reinforcement learning.

The most important way in which our work is limited is that it treats only the tabular case, whereas some form of function approximation is necessary for large-scale applications and the larger ambitions of artificial intelligence. Indeed, the need for function approximation is a large part of the motivation for studying the average-reward setting. We present some ideas for extending our algorithms to linear function approximation in Appendix \ref{app:extensions_lfa}. However, the theory and practice are both more challenging in the function approximation setting, and much future research is needed. 

Our work is also limited in ways that are unrelated to function approximation. 
One is that we treat only one-step methods and not $n$-step, $\lambda$-return, or sophisticated eligibility-trace methods (van Seijen et al.~2016, Sutton \& Barto 2018).
Another important direction for future work is to extend these algorithms to semi-Markov decision processes so that they can be used with temporal abstractions like options (Sutton, Precup, \& Singh 1999).


\section*{Acknowledgements}
The authors were supported by DeepMind, Amii, NSERC, and CIFAR. 
The authors wish to thank Vivek Borkar, Huizhen Yu, Martha White, Csaba Szepesv\'{a}ri, Dale Schuurmans, and Benjamin Van Roy for their valuable feedback during various stages of the work. Computing resources were provided by Compute Canada.



\section*{References}
\parskip=5pt
\parindent=0pt
\def\hangin{\hangindent=0.2in}
\def\bibitem[#1]#2{\hangin}

\hangin
Abounadi, J., Bertsekas, D., Borkar, V. S. (1998). \textit{Stochastic Approximation for Nonexpansive Maps: Application to Q-Learning}, Report LIDS-P-2433, Laboratory for Information and Decision Systems, MIT.

\hangin
Abounadi, J., Bertsekas, D., Borkar, V. S. (2001).  Learning algorithms for Markov decision processes with average cost. \emph{SIAM Journal on Control and Optimization, 40}(3):681--698.

\hangin
Abbasi-Yadkori, Y., Bartlett, P., Bhatia, K., Lazic, N., Szepesvari, C., Weisz, G. (2019a). {POLITEX}: Regret bounds for policy iteration using expert prediction. In \emph{Proceedings of the International Conference on Machine Learning}, pp.~3692--3702.

\hangin
Abbasi-Yadkori, Y., Lazic, N., Szepesvari, C., Weisz, G. (2019b). Exploration-enhanced {POLITEX}. \emph{ArXiv:1908.10479.}

\hangin
Auer, R., Ortner, P. (2006). Logarithmic online regret bounds for undiscounted reinforcement learning. In \emph{Advances in Neural Information Processing Systems}, pp.~49--56.

\hangin
Bellman, R. E. (1957). \textit{Dynamic Programming}. Princeton University Press.

\hangin
Bertsekas, D. P., Tsitsiklis, J. N. (1996). \emph{Neuro-dynamic Programming}. Athena Scientific.


\hangin
Borkar, V. S. (1998). Asynchronous stochastic approximations. \emph{SIAM Journal on Control and Optimization, 36}(3):840--851.

\hangin
Borkar, V. S. (2009). \emph{Stochastic Approximation: A Dynamical Systems Viewpoint}. Springer.


\hangin
Brafman, R. I., Tennenholtz, M. (2002). R-{MAX} — a general polynomial time algorithm for near-optimal reinforcement learning. \emph{Journal of Machine Learning Research, 3}(10):213--231.

\hangin
Das, T. K., Gosavi, A., Mahadevan, S. Marchalleck, N. (1999). Solving semi-Markov decision problems using average reward reinforcement learning. \emph{Management Science, 45}(4):560--574.

\hangin
Dewanto, V., Dunn, G., Eshragh, A., Gallagher, M., Roosta, F. (2020). Average-reward model-free reinforcement learning: a systematic review and literature mapping. \emph{ArXiv:2010.08920.}

\hangin
Gosavi, A. (2004). Reinforcement learning for long-run average cost. \textit{European Journal of Operational Research, 155}(3):654--674.

\hangin
Howard, R. A. (1960). \textit{Dynamic Programming and Markov Processes}. MIT Press.

\hangin
Jalali, A., Ferguson, M. J. (1989). Computationally efficient adaptive control algorithms for Markov chains. In \emph{Proceedings of the IEEE Conference on Decision and Control}, pp.~1283--1288. 

\hangin
Jalali, A., Ferguson, M. J. (1990). A distributed asynchronous algorithm for expected average cost dynamic programming. In \emph{Proceedings of the IEEE Conference on Decision and Control}, pp.~1394--1395.

\hangin
Jaksch, T., Ortner, R., Auer, P. (2010). Near-optimal Regret Bounds for Reinforcement Learning. \emph{Journal of Machine Learning Research, 11}(4):1563--1600.

\hangin
Kakade, S. M. (2001). A natural policy gradient. In \emph{Advances in Neural Information Processing Systems}, pp.~1531--1538.

\hangin
Kearns, M., Singh, S. (2002). Near-optimal reinforcement learning in polynomial time. \emph{Machine Learning, 49}(2):209--232.


\hangin
Konda, V. R., (2002). \textit{Actor-critic algorithms}. Ph.D. dissertation, MIT.

\hangin
Liu, Q., Li, L., Tang, Z., Zhou, D. (2018). Breaking the curse of horizon: Infinite-horizon off-policy estimation. In \emph{Advances in Neural Information Processing Systems}, pp.~5356--5366.

\hangin
Mahadevan, S. (1996). Average reward reinforcement learning: Foundations, algorithms, and empirical results. \emph{Machine Learning, 22}(1--3):159--195.

\hangin
Marbach, P., Tsitsiklis, J. N. (2001). Simulation-based optimization of Markov reward processes. \emph{IEEE Transactions on Automatic Control, 46}(2):191--209.

\hangin
Mousavi, A., Li, L., Liu, Q., Zhou, D. (2020). Black-box off-policy estimation for infinite-horizon reinforcement learning. \emph{ArXiv:2003.11126.}

\hangin
Naik, A., Shariff, R., Yasui, N., Sutton, R. S. (2019). Discounted reinforcement learning is not an optimization problem. \emph{Optimization Foundations for Reinforcement Learning Workshop at the Conference on Neural Information Processing Systems}. Also arXiv:1910.02140.

\hangin
Puterman, M. L. (1994). \emph{Markov Decision Processes: Discrete Stochastic Dynamic Programming.} John Wiley \& Sons.

\hangin
Ren, Z., Krogh, B. H. (2001). Adaptive control of Markov chains with average cost. \emph{IEEE Transactions on Automatic Control, 46}(4):613--617.

\hangin
Schwartz, A. (1993). A reinforcement learning method for maximizing undiscounted rewards. In \emph{Proceedings of the International Conference on Machine Learning,} pp.~298--305. 

\hangin
Schweitzer, P. J., \& Federgruen, A. (1978). The Functional Equations of Undiscounted Markov Renewal Programming. \emph{Mathematics of Operations Research, 3}(4), pp.~308--321.


\hangin
Singh, S. P. (1994). Reinforcement learning algorithms for average-payoff Markovian decision processes. In \textit{Proceedings of the AAAI Conference on Artificial Intelligence}, pp.~700--705.

\hangin
Sutton, R. S. (1990). Integrated architectures for learning, planning, and reacting based on approximating dynamic programming. In \emph{Proceedings of the International Conference on Machine Learning}, pp.~216--224. 

\hangin
Sutton, R. S., McAllester, D. A., Singh, S. P., Mansour, Y. (1999). Policy gradient methods for reinforcement learning with function approximation. In \emph{Advances in Neural Information Processing Systems}, pp.~1057--1063.

\hangin
Sutton, R. S., Barto, A. G. (2018). \emph{Reinforcement Learning: An Introduction.} MIT Press.



\hangin
Tang, Z., Feng, Y., Li, L., Zhou, D., Liu, Q. (2019). Doubly robust bias reduction in infinite horizon off-policy estimation. \emph{ArXiv:1910.07186}.

\hangin
Tsitsiklis, J. N., Van Roy, B. (1999). Average cost temporal-difference learning. \emph{Automatica, 35}(11):1799--1808.

\hangin
van Seijen, H., Mahmood, A. R., Pilarski, P. M., Machado, M. C., Sutton, R. S. (2016). True online temporal-difference learning. \textit{Journal of Machine Learning Research, 17}(145):1--40.

\hangin
Wen, J., Dai, B., Li, L., Schuurmans, D. (2020). Batch Stationary Distribution Estimation. In \textit{Proceedings of the International Conference on Machine Learning}, pp.~10203--10213.

\hangin
Wheeler, R., Narendra, K. (1986). Decentralized learning in finite Markov chains. \emph{IEEE Transactions on Automatic Control, 31}(6):519--526.


\hangin
White, D. J. (1963). Dynamic programming, Markov chains, and the method of successive approximations. \emph{Journal of Mathematical Analysis and Applications, 6}(3):373--376.

\hangin
Yang, S., Gao, Y., An, B., Wang, H., Chen, X. (2016). Efficient average reward reinforcement learning using constant shifting values. In \emph{Proceedings of the AAAI Conference on Artificial Intelligence}, pp.~2258--2264.

\hangin
Yu, H., \& Bertsekas, D. P. (2009). Convergence results for some temporal difference methods based on least squares. \emph{IEEE Transactions on Automatic Control, 54}(7):1515--1531.

\hangin
Zhang, R., Dai, B., Li, L., Schuurmans, D. (2020a). Gen{DICE}: Generalized offline estimation of stationary values. \emph{ArXiv:2002.09072.}

\hangin
Zhang, S., Liu, B., Whiteson, S. (2020b). Gradient{DICE}: Rethinking generalized offline estimation of stationary values. In \textit{Proceedings of the International Conference on Machine Learning}, pp.~11194--11203.




\onecolumn
\appendix

\counterwithin{figure}{section} 
\counterwithin{table}{section}
\counterwithin{theorem}{section}
\counterwithin{lemma}{section}
\counterwithin{assumption}{section}
\counterwithin{equation}{section}

\section{Algorithm Pseudocodes}
\label{app:pseudocodes}

This section contains the pseudocodes for the algorithms used in the experiments in this paper:
\begin{itemize}
    \item Section \ref{sec:control-exps} - Empirical Results for Control: \\Differential Q-learning and RVI Q-learning
    \item Section \ref{sec:prediction-experiments} - Empirical Results for Prediction: \\Differential TD-learning and Average Cost TD learning
    \item Section \ref{sec:centering} - Estimating the Actual Differential Value Function: \\Centered Differential Q-learning
\end{itemize}

\begin{algorithm}[h]
\SetAlgoLined
\KwIn{The policy $b$ to be used (e.g., $\epsilon$-greedy)}
\SetKwInput{AP}{Algorithm parameters}
\AP{step-size parameters $\alpha, \eta$}
Initialize $Q(s,a)\ \forall s,a; \bar{R}$ arbitrarily (e.g., to zero) \\
Obtain initial $S$ \\
 \While{still time to train}
 {
    $A \leftarrow$ action given by $b$ for $S$ \\
    Take action $A$, observe $R, S'$ \\
    $\delta = R - \bar{R} + \max_{a}Q(S',a) - Q(S,A)$ \\
    $Q(S,A) = Q(S,A) + \alpha \delta$ \\
    $\bar{R} = \bar{R} + \eta \alpha \delta$ \\
    $S = S'$ \\
 }
 return $Q$
 \caption{Differential Q-learning (one-step off-policy control)}
 \label{algo:diffQ-uncentered}
\end{algorithm}

\begin{algorithm}[h]
\SetAlgoLined
\KwIn{The policy $b$ to be used (e.g., $\epsilon$-greedy)}
\SetKwInput{AP}{Algorithm parameters}
\AP{step-size parameter $\alpha$}
Initialize $Q(s,a)\ \forall s,a$ arbitrarily (e.g., to zero) \\
Choose function $f(Q) \big($e.g., a single reference state--action pair — $f(Q) = Q(s_0,a_0)\big)$ \\
Obtain initial $S$ \\
 \While{still time to train}
 {
    $A \leftarrow$ action given by $b$ for $S$ \\
    Take action $A$, observe $R, S'$ \\
    $\delta = R - f(Q) + \max_{a}Q(S',a) - Q(S,A)$ \\
    $Q(S,A) = Q(S,A) + \alpha \delta$ \\
    $S = S'$ \\
 }
 return $Q$
 \caption{RVI Q-learning (one-step off-policy control)}
 \label{algo:rviQ}
\end{algorithm}

\begin{algorithm}[h]
\DontPrintSemicolon
\SetAlgoLined
\KwIn{The policy $\pi$ to be evaluated, and $b$ to be used}
\SetKwInput{AP}{Algorithm parameters}
\AP{step-size parameters $\alpha, \eta$}
Initialize $V(s)\; \forall s,\; \bar{R}$ arbitrarily (e.g., to zero) \;
 \While{still time to train}
 {
    $A \leftarrow$ action given by $b$ for $S$\;
    Take action $A$, observe $R, S'$\;
    $\delta = R - \bar{R} + V(S') - V(S)$\;
    $\rho = \pi(A | S)\, /\, b(A | S) $\;
    $V(S) = V(S) + \alpha \rho \delta$\;
    $\bar{R} = \bar{R} + \eta \alpha \rho \delta$\;
    $S = S'$ \;
 }
 return $V$\;
 \caption{Differential TD-learning (one-step off-policy prediction)}
 \label{algo:diffTD0-off}
\end{algorithm}

\begin{algorithm}[h]
\DontPrintSemicolon
\SetAlgoLined
\KwIn{The policy $\pi$ to be evaluated}
\SetKwInput{AP}{Algorithm parameters}
\AP{step-size parameters $\alpha, \eta$}
Initialize $V(s)\; \forall s,\; \bar{R}$ arbitrarily (e.g., to zero) \;
 \While{still time to train}
 {
    $A \leftarrow$ action given by $\pi$ for $S$\;
    Take action $A$, observe $R, S'$\;
    $\delta = R - \bar{R} + V(S') - V(S)$\;
    $V(S) = V(S) + \alpha \delta$\;
    $\bar{R} = \bar{R} + \eta \alpha (R - \bar{R})$\;
    $S = S'$ \\
 }
 return $V$\;
 \caption{Average Cost TD-learning (one-step on-policy prediction)}
 \label{algo:avgcostTD0}
\end{algorithm}

\begin{algorithm}[h]
\SetAlgoLined
\KwIn{The policy $b$ to be used (e.g., $\epsilon$-greedy)}
\SetKwInput{AP}{Algorithm parameters}
\AP{step-size parameters $\alpha, \eta, \beta, \kappa$}
Initialize $Q(s,a), F(s,a)\ \forall s,a; \bar{R}, \bar{Q}$ arbitrarily (e.g., to zero) \\
Obtain initial $S$ \\
 \While{still time to train}
 {
    $A \leftarrow$ action given by $b$ for $S$ \\
    Take action $A$, observe $R, S'$ \\
    $\delta = R - \bar{R} + \max_{a}Q(S',a) - Q(S,A)$ \\
    $Q(S,A) = Q(S,A) + \alpha \delta$ \\
    $\bar{R} = \bar{R} + \eta \alpha \delta$ \\
    $\Delta = Q(S,A) - \bar{Q} + F(S',\argmax_{a}Q(S',a)) - F(S,A)$ \\
    $F(S,A) = F(S,A) + \beta \Delta$ \\
    $\bar{Q} = \bar{Q} + \kappa \beta \Delta$ \\
    $S = S'$ \\
 }
 return $Q - \bar{Q} e$, where $e$ is a $\abs{\calS \times \calA}$ vector of all ones.
 \caption{Centered Differential Q-learning}
 \label{algo:diffQ-centered}
\end{algorithm}

\clearpage
\section{Convergence Proofs}
\label{app:convergence-proofs}

In this section, we present the convergence proofs of Differential Q-learning and Differential Q-planning in subsection \ref{app:convergence-proof-diffq}, of Differential TD-learning and Differential TD-planning in subsection \ref{app:convergence-proof-difftd}, and that of the centered version of these algorithms in subsection \ref{app:convergence-proof-centering}.

For convenience, the following notations are used for all the proofs:
\begin{itemize}\itemsep0mm
    \item Given any vector $x$, $\sum x$ denotes sum of all elements in $x$. Formally, $ \sum x \doteq \sum_{i} x(i)$.
    \item $e$ denotes an all-ones vector, whose length may be $\vert \calS \times \calA \vert$ or $\vert \calS \vert$ depending on the context. 
    \item Finally, $\exp(\cdot)$ is used instead of $e^{(\cdot)}$ to denote the exponential function.
\end{itemize}

\subsection{Proof of Differential Q-learning and Differential Q-planning}
\label{app:convergence-proof-diffq}

In this section, we analyze a general algorithm that includes both Differential Q-learning and Differential Q-planning cases. We call it \emph{General Differential Q}. We first formally define it and then explain why both Differential Q-learning and Differential Q-planning are special cases of General Differential Q. We then provide assumptions and the convergence theorem of General Differential Q. The theorem would lead to convergence of Differential Q-learning and Differential Q-planning. Finally, we provide a proof for the theorem.

Given a MDP $\calM \doteq (\calS, \calA, \calR, p)$, for each state $s \in \calS $ action $a \in \calA$ and discrete step $n \geq 0$, let $R_n(s, a), S'_n(s, a) \sim p(\cdot, \cdot \mid s, a)$ denote a sample of resulting state and reward. We hypothesize a set-valued process $\{Y_n\}$ taking
values in the set of nonempty subsets of $\calS \times \calA$ with the interpretation: $Y_n = \{(s, a): (s, a)$ component of $Q$ was updated at time $n\}$. Let $\nu(n, s, a) \doteq \sum_{k=0}^n I\{(s, a) \in Y_k\}$, where $I$ is the indicator function. Thus $\nu(n, s, a) =$ the number of times the $(s, a)$ component was updated up to step $n$. The update rules of General Differential Q are 
\begin{align}
    Q_{n+1}(s, a) & \doteq Q_n(s, a) + \alpha_{\nu(n, s, a)} \delta_n(s, a) I\{(s, a) \in Y_n\}, \quad \forall s \in \calS, a \in \calA, \label{eq: async Q update}\\
    \bar R_{n+1} & \doteq \bar R_n + \eta \sum_{s, a} \alpha_{\nu(n, s, a)} \delta_n(s, a) I\{(s, a) \in Y_n\}, \label{Q: async bar R update}
\end{align}
where
\begin{align}
    \delta_n(s, a) \doteq R_n(s, a) - \bar R_n + \max_{a'} Q_n(S_n'(s, a), a') - Q_n(s, a) . \label{eq: Q: async TD error}
\end{align}
Here $\alpha_{\nu(n, s, a)}$ is the stepsize at step $n$ for state--action pair $(s, a)$. The quantity $\alpha_{\nu(n, s, a)}$ depends on the sequence $\{\alpha_n\}$, which is an algorithmic design choice, and also depends on the visitation of state--action pairs $\nu(n, s, a)$. To obtain the stepsize, the algorithm could maintain a $\vert \calS \times \calA \vert$-size table counting the number of visitations to each state--action pair, which is exactly $\nu(\cdot, \cdot, \cdot)$. Then the stepsize $\alpha_{\nu(n, s, a)}$ can be obtained as long as the sequence $\{\alpha_n\}$ is specified.

Now we show Differential Q-learning is a special case of General Differential Q. Consider a sequence of real experience $\ldots, S_t, A_t, R_{t+1}, S_{t+1}, \ldots$. By choosing step $n$ = time step $t$, 
\begin{align*}
    Y_t(s, a) &= 1 \text{, if } s = S_t, a = A_t,\\
    Y_t(s, a) &= 0 \text{ otherwise,}
\end{align*}
and $S'_n(S_t, A_t) = S_{t+1}, R_n(S_t, A_t) = R_{t+1}$, update rules \eqref{eq: async Q update}, \eqref{Q: async bar R update}, and \eqref{eq: Q: async TD error} become
\begin{align}
    Q_{t+1}(S_t, A_t) & \doteq Q_t (S_t, A_t) + \alpha_{\nu(t, S_t, A_t)} \delta_t \text{\ , and\ } Q_{t+1}(s, a) \doteq Q_t (s, a), \forall s \neq S_t, a \neq A_t, \\
    \bar R_{t+1} & \doteq \bar R_t + \eta \alpha_{\nu(t, S_t, A_t)} \delta_t,\\
    \delta_t & \doteq R_{t+1} - \bar R_t + \max_{a'} Q_t (S_{t+1}, a') - Q_t (S_t, A_t),
\end{align}
which are Differential Q-learning's update rules with stepsize at time $t$ being $\alpha_{\nu(t, S_t, A_t)}$.

Similarly we can show Differential Q-planning is a special case of General Differential Q. Consider a sequence of simulated experience $\ldots, \hat S_n, \hat A_n, \hat R_n, \hat S_n', \ldots$. By choosing step $n$ to be the planning step,
\begin{align*}
    Y_n(s, a) &= 1, \text{ if } s = \hat S_n, a = \hat A_n,\\
    Y_n(s, a) &= 0, \text{ otherwise,}
\end{align*}
and $S'_n(\hat S_n, \hat A_n) = \hat S_{n}', R_n(\hat S_n, \hat A_n) = \hat R_{n}$, update rules \eqref{eq: async Q update}, \eqref{Q: async bar R update}, and \eqref{eq: Q: async TD error} become
\begin{align}
    Q_{n+1}(\hat S_n, \hat A_n) & \doteq Q_n (\hat S_n, \hat A_n) + \alpha_{\nu(n, \hat S_n, \hat A_n)} \delta_n \text{,\ and\ } Q_{n+1}(s, a) \doteq Q_n (s, a), \forall s \neq \hat S_n, a \neq \hat A_n, \\
    \bar R_{n+1} & \doteq \bar R_n + \eta \alpha_{\nu(n, \hat S_n, \hat A_n)} \delta_n,\\
    \delta_n & \doteq \hat R_{n+1} - \bar R_n + \max_{a'} Q_n (\hat S_{n+1}, a') - Q_n (\hat S_n, \hat A_n),
\end{align}
which are Differential Q-planning's update rules with stepsize $\alpha_{\nu(n, \hat S_n, \hat A_n)}$.

We now specify assumptions on General Differential Q, which are required by our convergence theorem. 


\begin{assumption}[Communicating Assumption] \label{assu: communicating}
The MDP $\calM$ has a single communicating class, that is, each state in $\calM$ is accessible from every other state under some deterministic stationary policy.
\end{assumption}

\begin{assumption}[Action-Value Function Uniqueness] \label{assu: action-value function uniqueness}
There exists a unique solution of $q$ only up to a constant in \eqref{action-value Bellman equation}.
\end{assumption}

\begin{assumption}[Stepsize Assumption] \label{assu: stepsize} $\alpha_n > 0$, $\sum_{n = 0}^\infty \alpha_n = \infty$, $\sum_{n = 0}^\infty \alpha_n^2 < \infty$.
\end{assumption}

\begin{assumption}[Asynchronous Stepsize Assumption A] \label{assu: asynchronous stepsize 1}
Let $[\cdot]$ denote the integer part of $(\cdot)$, for $x \in (0, 1)$, 
\begin{align*}
    \sup_i \frac{\alpha_{[xi]}}{\alpha_i} < \infty
\end{align*}
and 
\begin{align*}
    \frac{\sum_{j=0}^{[yi]} \alpha_j}{\sum_{j=0}^i \alpha_j} \to 1
\end{align*} 
uniformly in $y \in [x, 1]$.
\end{assumption}

\begin{assumption}[Asynchronous Stepsize Assumption B] \label{assu: asynchronous stepsize 2}
There exists $\Delta > 0$ such that 
\begin{align*}
    \liminf_{n \to \infty} \frac{\nu(n, s, a)}{n+1} \geq \Delta,
\end{align*}
a.s., for all $s \in \calS, a \in \calA$.
Furthermore, for all $x > 0$, let 
\begin{align*}
    N(n, x) = \min \Bigg \{m > n: \sum_{i = n+1}^m \alpha_i \geq x \Bigg \},
\end{align*}
the limit 
\begin{align*}
    \lim_{n \to \infty} \frac{\sum_{i = \nu(n, s, a)}^{\nu(N(n, x), s, a)} \alpha_i}{\sum_{i = \nu(n, s', a')}^{\nu(N(n, x), s', a')} \alpha_i}
\end{align*}
exists a.s. for all $s, s', a, a'$.
\end{assumption}

We now explain the meanings of these assumptions. 

Assumption~\ref{assu: communicating} is the standard communicating assumption for the MDP. If this is not satisfied (i.e., there exist states from which it is impossible to get back to the others), no learning algorithm can be guaranteed to learn differential value function up to an additive constant for any policy in that MDP using a single stream of experience. The reward rate of a learned policy can still converge to the optimal reward rate under a slightly weaker \emph{weakly communicating} assumption, which assumes that the MDP has a single communicating class and some additional \emph{transient} states. Whenever a weakly communicating MDP starts from a transient state, eventually it will never visit that state again under any policy. 
The differential values of states in the communicating class can be learned well using some algorithms but that of transient states can not. Both the theorem and proof of convergence under the weakly communicating assumption would require distinguishing between these two class of states. For a simpler analysis, we use the communicating assumption here. 
In case of our control planning problem, transient states can appear in the simulated experience for an infinite number of times, and thus differential values of transient states can be learned accurately. Therefore the communicating assumption for the planning algorithm can be replaced by the more general weakly communicating assumption. However, to give a simple theorem and proof which cover both our learning and planning algorithms, we choose to present our result using the communicating MDP assumption.

\cref{assu: action-value function uniqueness} is required by average-reward learning and planning algorithms to guarantee convergence of estimates of $Q$ to a unique solution (up to a constant). 
A necessary and sufficient condition for \cref{assu: action-value function uniqueness} is provided by Schweitzer \& Federgruen (1978). The condition is that there exists a randomized stationary optimal policy that induces a single recurrent class of states $\mathcal{C}$ such that recurrent states induced by any randomized stationary optimal policy are members of $\mathcal{C}$.

Assumptions~\ref{assu: stepsize}, \ref{assu: asynchronous stepsize 1}, and \ref{assu: asynchronous stepsize 2} originate from the another result showing convergence of stochastic approximation algorithms (Borkar 1998) and were also required by the convergence theorem of RVI Q-learning.
Assumptions~\ref{assu: stepsize} and \ref{assu: asynchronous stepsize 1} can be satisfied if the sequence $\{\alpha_n\}$ decreases to 0 appropriately. The sequence $\{\alpha_n\}$ could be, for example, $1/ n$, $1 / (n \log n)$, or $\log n / n$ (Abounadi, Bertsekas, and Borkar 2001). 
The first part of Assumption~\ref{assu: asynchronous stepsize 2} requires that, for each state--action pair, the limiting ratio of the number of visitations to the pair and the number of visitations to all pairs is greater than or equal to any fixed positive number. The second part of the assumption requires that the relative update frequency between any two elements is finite. For example, Borkar (personal communication; see also page 403 by Bertsekas and Tsitsiklis 1996) showed that with a common $\alpha_n = 1/n$, Assumption \ref{assu: asynchronous stepsize 2} can be satisfied (Assumption \ref{assu: stepsize} and \ref{assu: asynchronous stepsize 1} can also be satisfied with this stepsize). 

It is easy to verify that under the communicating assumption the following system of equations:
\begin{align}
    q(s, a) & = \sum_{s', r} p(s', r \mid s, a) (r - \bar r + \max_{a'} q(s', a')) \text{\ , for all } s \in \calS, a \in \calA, \quad \label{eq: action-value Bellman equation 2}\\
    r_* - \bar R_0 & = \eta \left(\sum q - \sum Q_0 \right),  \label{eq: determination equation for q infty}
\end{align}
has a unique solution for $q$. Denote the solution as $q_\infty$.

\begin{theorem}[Convergence of General Differential Q]\label{eq: convergence of uncentered differential q update}
If \Cref{assu: communicating}-\Cref{assu: asynchronous stepsize 2} hold, then the General Differential Q algorithm (Equations~\ref{eq: async Q update}-\ref{eq: Q: async TD error}) converges a.s. $\bar R_n$ to $r_*$, $Q_n$ to $q_\infty$, and $r(\pi_t)$ to $r_*$ where $\pi_t$ is any greedy policy w.r.t. $Q_t$.
\end{theorem}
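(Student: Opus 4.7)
The plan is to execute the two-step reduction outlined in the proof sketch of Theorem~1, but done carefully at the level of the asynchronous stochastic approximation framework of Borkar (1998). First I would eliminate $\bar R_n$ by telescoping \eqref{Q: async bar R update} against $\sum Q_n$: in the asynchronous scheme the increment to $\bar R_n$ equals $\eta$ times the increment to $\sum Q_n$ at every step, so one obtains the algebraic identity $\bar R_n = \eta \sum Q_n - c$ with $c \doteq \eta \sum Q_0 - \bar R_0$. Substituting this into the TD error \eqref{eq: Q: async TD error} rewrites the $Q$-update \eqref{eq: async Q update} as the single asynchronous update
\begin{align*}
Q_{n+1}(s,a) &= Q_n(s,a) + \alpha_{\nu(n,s,a)} \bfI\{(s,a) \in Y_n\}\\
&\quad \times \Big( \tilde R_n(s,a) + \max_{a'} Q_n(S'_n(s,a),a') - Q_n(s,a) - \eta \textstyle\sum Q_n \Big),
\end{align*}
with $\tilde R_n(s,a) \doteq R_n(s,a) + c$. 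This is precisely an RVI Q-learning update on the shifted MDP $\tilde\calM$ (identical to $\calM$ except rewards translated by $c$) with reference function $f(Q) = \eta \sum Q$.

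The second step is to extend the RVI Q-learning convergence theorem to cover this choice of $f$, since $f(Q) = \eta \sum Q$ violates Assumption~2.2 of Abounadi et al.~(2001): the normalization $f(x+ke) = f(x) + k$ fails unless $\eta |\calS||\calA| = 1$. Using Borkar's (1998) asynchronous framework, which is precisely what motivates \Cref{assu: stepsize}--\Cref{assu: asynchronous stepsize 2}, the analysis reduces to studying the limiting ODE
\begin{align*}
\dot q(t) = T(q(t)) - q(t) - \eta \bigl(\textstyle\sum q(t)\bigr) e,
\end{align*}
where $T$ is the Bellman optimality operator of $\tilde\calM$ on action-values and $e$ is the all-ones vector. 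I would prove that this ODE has a unique globally asymptotically stable equilibrium $q_\infty$, characterized by the Bellman equation \eqref{eq: action-value Bellman equation 2} in $\tilde\calM$ together with the scalar normalization $\eta \sum q_\infty = r_* + c$, matching \eqref{eq: determination equation for q infty}. Existence and uniqueness follow from the one-dimensional degree of freedom of Bellman solutions combined with the nonzero coefficient $\eta$. For stability I would mirror the Lyapunov argument of Abounadi et al., decomposing a trajectory into its span-seminorm component (contracting by nonexpansiveness of $T$ under the communicating assumption) and its mean component (governed after a change of variables by a stable scalar linear ODE). Boundedness of the iterates follows from the Borkar--Meyn criterion applied to the appropriately scaled ODE, using that $f$ is linear and therefore has a well-defined scaling limit.

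Once $Q_n \to q_\infty$ a.s.\ is established, $\bar R_n = \eta \sum Q_n - c \to \eta \sum q_\infty - c = r_*$ a.s.\ by continuity. I would then observe that $q_\infty$ is also a solution of the original Bellman equation \eqref{action-value Bellman equation} with $\bar r = r_*$, since a uniform reward shift by $c$ changes only the scalar $\bar r$ by $c$ while leaving the action-value solutions invariant. The final claim $r(\pi_n) \to r_*$ a.s.\ for any greedy $\pi_n$ with respect to $Q_n$ then follows by combining the continuous mapping theorem with the variant of Puterman's (1994) Theorem~8.5.5 invoked in the sketch of Theorem~1, which gives continuity of the reward rate in the $Q$-estimates near the optimal set.

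The main obstacle I anticipate is the ODE-stability portion of the extension: every step of Abounadi et al.'s argument relies on translation invariance $f(x+ke) = f(x)+k$, and this must be systematically replaced throughout by the scaled version $f(x+ke) - f(x) = \eta |\calS||\calA| k$ with a correspondingly rescaled equilibrium normalization. Care will be needed to verify that this rescaling preserves the clean decoupling between span-seminorm dynamics and mean dynamics; because $f$ remains linear and affine-equivariant this should go through, but the scalar reduction has to be rewritten rather than cited. The stepsize hypotheses \ref{assu: asynchronous stepsize 1}--\ref{assu: asynchronous stepsize 2} and the noise handling enter only through a standard invocation of Borkar's asynchronous theorem and pose no new difficulty.
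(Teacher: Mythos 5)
Your proposal is correct and follows essentially the same route as the paper's proof: the telescoping identity $\bar R_n = \eta\sum Q_n - c$, the reduction to an RVI Q-learning update on the reward-shifted MDP $\tilde\calM$ with $f(Q)=\eta\sum Q$, the extension of Abounadi et al.'s theorem to reference functions satisfying $f(e)=u$ and $f(x+ke)=f(x)+ku$ for $u>0$, the decomposition $x_t = y_t + z_t e$ with the rescaled scalar ODE $\dot z_t = -uz_t + (r_*-f(y_t))$, the Borkar--Meyn boundedness check, and the final appeal to Puterman's Theorem 8.5.5 with the continuous mapping theorem. The obstacle you flag --- rewriting rather than citing the scalar reduction so that the $u$-rescaling propagates through the Lyapunov and asymptotic-stability arguments --- is exactly the content of the paper's Lemmas on the connection between the two ODEs and on global asymptotic stability.
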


We now prove this theorem.

\subsubsection{Proof of Theorem \ref{eq: convergence of uncentered differential q update}}

At each step, the increment to $\bar R_n$ is $\eta$ times the increment to $Q_n$ and $\sum Q_n$. Therefore, the cumulative increment can be written

\begin{align}
    \bar R_n - \bar R_0 &= \eta  \sum_{i = 0}^{n-1} \sum_{s, a} \alpha_{\nu(i, s, a)} \delta_i(s, a) I\{(s, a) \in Y_i\} \nonumber  \\
    & = \eta \left(\sum Q_n - \sum Q_0 \right) \nonumber \\
    \implies \bar R_n &= \eta \sum Q_n - \eta \sum Q_0 + \bar R_0 = \eta \sum Q_n - c \label{Q: relation between bar R and Q} ,\\
    \text{ where } c &\doteq \eta \sum Q_0 - \bar R_0. \label{eq: definition of c}
\end{align}

Now substituting $\bar R_n$ in \eqref{eq: async Q update} with \eqref{Q: relation between bar R and Q}, we have $\forall s \in \calS, a \in \calA$: 

\begin{align}
    & Q_{n+1}(s, a) = Q_{n}(s, a) + \alpha_{\nu(n, s, a)} \left(R_n(s, a) + \max_{a'} Q_n(S_n'(s, a), a') - Q_n(s, a) - \eta \sum Q_n + c \right) I\{(s, a) \in Y_n\} \nonumber\\
    & = Q_{n}(s, a) + \alpha_{\nu(n, s, a)} \left(\tilde R_n(s, a) + \max_{a'} Q_n(S'_n(s, a), a') - Q_n(s, a) - \eta \sum Q_n \right) I\{(s, a) \in Y_n\}, \label{Q: transformed async single update}
\end{align}

where $\tilde R_n(s, a) \doteq R_n(s, a) + c$. 

\eqref{Q: transformed async single update} is in the same form as the RVI Q-learning's update (Equation 2.7 by Abounadi, Bertsekas, and Borkar (2001), see also \eqref{RVI Q-learning}) with $f(Q_n) = \eta \sum Q_n$, for a MDP $\tilde \calM$ whose rewards are all shifted by $c$ from the original MDP $\calM$. 

This transformed MDP has the same state and action space as the original MDP and has the transition probability defined as
\begin{align}
    \tilde p(s', r + c \mid s, a) \doteq p(s', r \mid s, a). \label{Q: relation between p and tilde p}
\end{align}
In other words, $\tilde \calM \doteq (\calS, \calA, \calR, \tilde p)$.

Note that the communicating assumption we made for the original MDP is still valid for the transformed MDP. For this transformed MDP, denote the best possible average reward rate as $\tilde r_*$.  Then
\begin{align}
    \tilde r_* = r_* + c \label{eq: relation between r star and tilde r star}
\end{align}
because the reward in the transformed MDP is shifted by $c$ compared with the original MDP. Combining \eqref{eq: relation between r star and tilde r star}, \eqref{eq: determination equation for q infty}, and \eqref{eq: definition of c}, we have
\begin{align}
    \tilde r_* = \eta \sum q_\infty \; . \label{eq: relation between tilde r and eta sum q infty}
\end{align}

Furthermore, because
\begin{align}
    q_\infty(s, a)  & = \sum_{s', r} p(s', r \mid s, a) (r + \max_{a'} q_\infty (s', a') - r_*) \quad \text{(from \eqref{eq: action-value Bellman equation 2})} \nonumber\\
    & = \sum_{s', r} p(s', r \mid s, a) (r + c + \max_{a'} q_\infty (s', a') - \tilde r_*) \quad \text{(from \eqref{eq: relation between r star and tilde r star})} \nonumber\\
    & = \sum_{s', r} \tilde p(s', r \mid s, a) (r + \max_{a'} q_\infty (s', a') - \tilde r_*) \quad \text{(from \eqref{Q: relation between p and tilde p})} \label{Q: action-value optimality equations for tilde p},
\end{align}
$q_\infty$ is a solution of $q$ in the action-value Bellman equations for not only the original MDP $\calM$ but also the transformed MDP $\tilde \calM$.

If the convergence theorem of the RVI Q-learning applies, then $Q_n \to q_\infty$ and $\eta \sum Q_n \to \tilde r_*$. However, in general, $f(x) \doteq \eta \sum x$ does not satisfy some requirements on $f$ by Abounadi, Bertsekas, and Borkar (2001). In particular,
\begin{align}
    f(e) = 1 \text{\ , and \ } f(x + c e) = f(x) + c, \forall x \in \mathbb{R}^{\abs{\calS \times \calA}} \label{RVI Q-learning: old conditions on f}
\end{align}
in Assumption 2.2 (Abounadi, Bertsekas, and Borkar 2001) are violated. In the next section, \Cref{Extension of RVI Q-learning}, we extend the RVI Q-learning family of algorithms by replacing \eqref{RVI Q-learning: old conditions on f} with the following weaker assumptions:
\begin{align}
    \exists\ u > 0 \text{\ s.t. \ } f(e) = u \text{\ , and \ } f(x + ce) = f(x) + cu, \forall x \in \mathbb{R}^{\abs{\calS \times \calA}} \label{RVI Q-learning: new conditions on f}.
\end{align}
It can be seen that \eqref{RVI Q-learning: old conditions on f} is a special case of \eqref{RVI Q-learning: new conditions on f} when $u = 1$. Therefore RVI Q-learning family is a subset of the extended RVI Q-learning family.


Because $f(x) = \eta \sum x$ satisfies assumptions on $f$ required by \Cref{Extension of RVI Q-learning} and \Cref{assu: communicating}-\Cref{assu: asynchronous stepsize 2} also hold for the transformed MDP $\tilde \calM$, \eqref{Q: transformed async single update} converges a.s. $Q_n$ to $q_\infty$, which is the solution of
\begin{align*}
    q(s, a) & = \sum_{s', r} \tilde p(s', r \mid s, a) (r - \bar r + \max_{a'} q(s', a')) \text{\ , for all } s \in \calS, a \in \calA\\
    \eta \sum q &= \tilde r_*
\end{align*}
by \eqref{Q: action-value optimality equations for tilde p} and \eqref{eq: relation between tilde r and eta sum q infty}.

Now consider $\bar R_n$. Combining \eqref{Q: relation between bar R and Q} and $Q_n \to q_\infty$, we have $\bar R_n \to \eta \sum q_\infty - c$. In addition, because $\eta \sum q_\infty = \tilde r_*$ (Equation \ref{eq: relation between tilde r and eta sum q infty}), we have $\bar R_n \to \tilde r_* - c$. Because $\tilde r_* = r_* + c$ (Equation \ref{eq: relation between r star and tilde r star}), we have 
\begin{align} \label{eq: Q: R_n converges to r_*}
    \bar R_n \to r_* \text{\ \  a.s. as \ \  } n \to \infty.
\end{align}

Finally consider $r(\pi_t)$ where $\pi_t$ is a greedy policy w.r.t. $Q_t$. 
From Theorem 8.5.5 by Puterman (1994), we have,
\begin{align}
    \min_{s, a} (TQ_t(s, a) - Q_t(s, a)) \leq r(\pi_t) \leq r_* \leq \max_{s, a} (TQ_t(s, a) - Q_t(s, a)) \\
    \implies
    \abs{r_* - r(\pi_t)} \leq sp(TQ_t - Q_t)
\end{align}
where $TQ(s, a) \doteq \sum_{s', r} \tilde p(s', r \mid s, a) (r + \max_{a'} Q(s', a'))$.
Because $Q_t \to q_\infty$ a.s., and $sp(TQ_t - Q_t)$ is a continuous function of $Q_t$, by continuous mapping theorem, $sp(TQ_t - Q_t) \to sp(Tq_\infty - q_\infty) = 0$ a.s. Therefore we conclude that $r(\pi_t) \to r_*$.

\Cref{eq: convergence of uncentered differential q update} is proved.

\begin{theorem}[Convergence of the Extended RVI Q-learning] \label{Extension of RVI Q-learning}
For any $Q_0 \in \mathbb{R}^{\abs{\calS \times \calA}}$, let $R_n, Y_n, \alpha_{\nu(n, s, a)}$ be defined as aforementioned, consider an update rule
\begin{align}
    Q_{n+1}(s, a) = Q_{n} (s, a) + \alpha_{\nu(n, s, a)} \left(R_n(s, a) + \max_{a'} Q_n(S_{n}'(s,a), \cdot) - Q_n(s, a) - f(Q_n)\right) I\{(s, a) \in Y_n\}, \label{eq: async RVI Q-learning}
\end{align}
if
\begin{enumerate}
    \item \Cref{assu: communicating}-\Cref{assu: asynchronous stepsize 2} hold,
    \item $f: \mathbb{R}^{\abs{\calS \times \calA}} \to \mathbb{R}$ is Lipschitz and there exists some $u > 0$ such that $\forall c \in \mathbb{R}$ and $x \in \mathbb{R}^{\abs{\calS \times \calA}}$, $f(e) = u$, $f(x + ce) = f(x) + cu$ and $f(cx) = cf(x)$,
\end{enumerate}
then $Q_n$ converges a.s. to $q_*$, where $q_*$ is the solution to action-value optimality equation (\Cref{eq: action-value Bellman equation 2}) satisfying $f(q_*) = r_*$.
\end{theorem}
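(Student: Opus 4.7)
The plan is to adapt the ODE-based proof of Theorem 3.4 in Abounadi, Bertsekas, and Borkar (2001), which applies Borkar's (1998) asynchronous stochastic approximation theorem to show that the iterates $Q_n$ track trajectories of the limiting ODE
\begin{equation*}
\dot q(t) = h(q(t)) := T q(t) - q(t) - f(q(t))\, e,
\end{equation*}
where $T$ is the Bellman optimality operator, $(Tq)(s,a) = \sum_{s',r} p(s', r \mid s, a)(r + \max_{a'} q(s', a'))$. The two substantive tasks are (i) almost-sure boundedness of $\{Q_n\}$ and (ii) global asymptotic stability of a unique equilibrium $q_*$ of the ODE. Once these are in hand, Borkar (1998) yields $Q_n \to q_*$ a.s.\ under \Cref{assu: stepsize}--\Cref{assu: asynchronous stepsize 2}.

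I would first establish existence and uniqueness of $q_*$. Any equilibrium of the ODE satisfies $T q - q = f(q)\, e$, so by the communicating assumption and standard MDP theory (Puterman 1994) one must have $f(q) = r_*$, and any two solutions differ by a constant multiple of $e$. If $q_1 = q_2 + c e$ with $f(q_1) = f(q_2) = r_*$, then $f(q_1) - f(q_2) = cu = 0$ forces $c = 0$ since $u > 0$, which pins down $q_*$ uniquely.

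For boundedness of the iterates I would analyze the scaled field $h_\infty(q) := \lim_{c \to \infty} h(cq)/c$. The positive homogeneity $f(cq) = c f(q)$ (an essential addition to the classical hypotheses) combined with the boundedness of expected rewards in $T$ gives $h_\infty(q) = T_0 q - q - f(q)\, e$, where $(T_0 q)(s,a) := \sum_{s'} p(s'\mid s, a) \max_{a'} q(s', a')$ is the Bellman operator of the zero-reward MDP. Under the communicating assumption, $T_0$ is a span contraction, and the scaled ODE has the origin as its unique globally asymptotically stable equilibrium; this establishes the stability hypothesis required by Borkar's asynchronous SA theorem. For global asymptotic stability of $q_*$ in the unscaled ODE, I would use the span-seminorm Lyapunov function $V(q) := \mathrm{sp}(q - q_*)$: the Bellman term strictly decreases $V$ along trajectories until $q - q_* \in \mathrm{span}\{e\}$, at which point the constraint $f(q) = r_*$ forces $q = q_*$ by the uniqueness argument above.

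The chief obstacle will be carefully propagating the factor $u$ through those parts of Abounadi et al.'s argument that used $f(e) = 1$ numerically, in particular the Lyapunov decrease estimate and the boundary analysis of the scaled ODE. Each such identity picks up a harmless multiplicative $u > 0$ but the qualitative structure---span contraction, convergence of the scaled ODE to the origin, and uniqueness of $q_*$---is preserved. The additional homogeneity hypothesis $f(cq) = c f(q)$ is precisely what is needed to ensure $h_\infty$ is well-defined: the motivating example $f(Q) = \eta \sum Q$ is positively homogeneous, so the hypothesis is non-restrictive in the intended applications.
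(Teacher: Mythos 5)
Your overall architecture --- reduce to the synchronous case via Borkar's asynchronous SA theorem, pin down $q_*$ by the normalization $f(q_*)=r_*$ using $f(q+ce)=f(q)+cu$ with $u>0$, and verify almost-sure boundedness through the scaled field $h_\infty(q)=T_0q-q-f(q)e$ using positive homogeneity --- matches the paper's. The gap is in your global-asymptotic-stability step. You assert that under the communicating assumption $T_0$ is a span contraction and that $V(q)=\mathrm{sp}(q-q_*)$ strictly decreases along trajectories. Neither holds in general: under the communicating assumption alone the one-step Bellman operator is only \emph{nonexpansive} in the span seminorm (a deterministic two-state cycle is communicating, yet $T_0$ permutes coordinates and preserves the span exactly), and this is precisely why Abounadi et al.\ --- and the paper --- appeal to the theory of nonexpansive maps rather than to a contraction or strict-Lyapunov-decrease argument. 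Moreover, even granting that trajectories approach the affine set $q_*+\mathrm{span}\{e\}$, your closing step ``the constraint $f(q)=r_*$ forces $q=q_*$'' only applies at equilibria; along a trajectory the component in the $e$ direction has its own nontrivial dynamics that must be shown to converge.

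The paper closes this gap differently. It introduces the auxiliary ODE $\dot y_t=T(y_t)-y_t-r_*e$, whose trajectories converge to \emph{some} ($y_0$-dependent) equilibrium by nonexpansive-map results (Lemma~\ref{lemma: aux ode convergence}, with no strict decrease needed), and then proves (Lemma~\ref{lemma: connection between original and aux ode}) that the original trajectory satisfies $x_t=y_t+z_te$ where the scalar $z_t$ obeys the linear ODE $\dot z_t=-uz_t+\left(r_*-f(y_t)\right)$ --- this is exactly where the hypothesis $f(e)=u>0$ enters and where the factor $u$ you worried about actually lives. Variation of parameters then gives Lyapunov stability via $\norm{x_t-q_*}_\infty\le(1+L)\norm{x_0-q_*}_\infty$ and yields $z_t\to(r_*-f(y_*))/u$, whence $x_t\to y_*+(r_*-f(y_*))e/u$, which must coincide with $q_*$ by uniqueness of the equilibrium (Lemma~\ref{lemma: unique equilibrium}). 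The same two lemmas are then reused, with zero rewards, to show the origin is globally asymptotically stable for the scaled ODE, so no span-contraction claim is ever needed. To repair your argument you would need at minimum a LaSalle-type invariance argument for the nonexpansive span seminorm plus a separate treatment of the $e$-component, which essentially reproduces the paper's decomposition.
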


If we set $u = 1$ in the above theorem, then we recover the convergence result of RVI Q-learning. 

The rest part of this section proves the above theorem. We use arguments similar to those of RVI Q-learning.

First, note that \eqref{eq: async RVI Q-learning} is in the same form as the asynchronous update (Equation 7.1.2) by Borkar (2009). We apply the result in Section 7.4 of the same text (Borkar 2009) (see also Theorem 3.2 by Borkar (1998)), which shows convergence for Equation 7.1.2, to show convergence of \eqref{Q: transformed async single update}. This result, given Assumption~\ref{assu: asynchronous stepsize 1}, \ref{assu: asynchronous stepsize 2}, only requires showing the convergence of the following \emph{synchronous} version of \eqref{eq: async RVI Q-learning}:
\begin{align}
    & Q_{n+1}(s, a) = Q_{n}(s, a) + \alpha_n \left( R_n(s, a) + g( Q_n(S'_n(s, a), \cdot)) - Q_n(s, a) - f( Q_n) \right) \text{\ , for all } s \in \calS, a \in \calA, \label{sync RVI Q-learning }
\end{align}

Like the proof of RVI Q-learning, first define operators $T, T_1, T_2$:
\begin{align*}
    T (Q) (s, a) & \doteq \sum_{s', r} p (s', r \mid s, a) (r + g( Q(s', \cdot))), \\
    T_1 (Q) & \doteq T (Q) - r_* e, \\
    T_2 (Q) & \doteq T (Q) - f(Q) e = T_1(Q) + \left (r_* - f(Q) \right) e.
\end{align*}
Consider two ordinary differential equations (ODEs):
\begin{align}
    \dot y_t & = T_1 (y_t) - y_t, \label{aux ode}\\
    \dot x_t & = T_2 (x_t) - x_t. \label{original ode}
\end{align}
Note that by the properties of $T_1$ and $T_2$, both \eqref{aux ode} and \eqref{original ode} have Lipschitz r.h.s.'s and thus are well-posed.

The next two lemmas are the same as Lemma 3.1 and Lemma 3.2 by Abounadi, Bertsekas, and Borkar (2001). Their proofs do not rely on properties of $f$ and therefore they hold with our more general $f$ function.

\begin{lemma}\label{lemma: aux ode convergence}
Let $\bar y$ be an equilibrium point of the ODE defined in \eqref{aux ode}. Then $\norm{y_t - \bar y}_\infty$ is nonincreasing, and $y_t \to y_*$ for some equilibrium point $y_*$ of \eqref{aux ode} that may depend on $y_0$.
\end{lemma}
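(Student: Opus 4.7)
The plan is to proceed in three conceptual steps: establish nonexpansiveness of $T_1$, use it to show the Lyapunov decrease, and then upgrade boundedness plus a subsequential limit to full convergence to an equilibrium.

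First I would verify that the operator $T$ is nonexpansive in the sup norm. For any $Q, Q'$, the definition of $T$ together with $g(Q(s',\cdot)) = \max_{a'} Q(s',a')$ gives $|T(Q)(s,a) - T(Q')(s,a)| \le \sum_{s',r} p(s',r\mid s,a)\,|g(Q(s',\cdot)) - g(Q'(s',\cdot))| \le \|Q - Q'\|_\infty$, using the standard fact that $\max_{a'}$ is $1$-Lipschitz in sup norm. Hence $T_1(Q) = T(Q) - r_* e$ is also nonexpansive, and by hypothesis $\bar y$ satisfies $T_1(\bar y) = \bar y$.

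Next I would show that $V(y) := \|y - \bar y\|_\infty$ is nonincreasing along trajectories of \eqref{aux ode}. Let $I(t) \doteq \arg\max_i |y_t(i) - \bar y(i)|$. For any $i^* \in I(t)$, a standard envelope / Danskin argument gives $\frac{d^+}{dt} V(y_t) \le \mathrm{sgn}(y_t(i^*) - \bar y(i^*))\bigl(T_1(y_t)(i^*) - T_1(\bar y)(i^*) - (y_t(i^*) - \bar y(i^*))\bigr)$. By nonexpansiveness, the signed quantity $\mathrm{sgn}(y_t(i^*) - \bar y(i^*))\bigl(T_1(y_t)(i^*) - T_1(\bar y)(i^*)\bigr)$ is at most $|y_t(i^*) - \bar y(i^*)| = V(y_t)$, so $\frac{d^+}{dt} V(y_t) \le 0$. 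In particular $V(y_t)$ is nonincreasing, which proves the first claim and also shows that $\{y_t\}_{t \ge 0}$ lies in the compact set $\{y: \|y - \bar y\|_\infty \le \|y_0 - \bar y\|_\infty\}$.

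For the second (convergence) claim, I would invoke compactness to extract a sequence $t_n \to \infty$ with $y_{t_n} \to y_*$. I would then argue that $y_*$ is itself an equilibrium of \eqref{aux ode}, e.g., by considering the shifted trajectories $z^{(n)}_s \doteq y_{t_n + s}$; continuous dependence on initial conditions (the r.h.s.\ of \eqref{aux ode} is Lipschitz) together with $V(y_t)$ converging to a finite limit $V_\infty$ forces $V(z^{(n)}_s) \to V_\infty$ for every $s \ge 0$, so the limiting trajectory starting at $y_*$ has $V \equiv V_\infty$; combined with the strict-decrease analysis of the previous step this forces $y_*$ to satisfy $T_1(y_*) = y_*$. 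Finally, having an equilibrium $y_*$, apply the monotonicity established in the previous step with $\bar y$ replaced by $y_*$ to conclude that $\|y_t - y_*\|_\infty$ is nonincreasing; since a subsequence converges to $y_*$, the full trajectory does.

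The main obstacle I anticipate is the third step: sup-norm nonexpansive ODEs are not automatically convergent (only bounded), so the argument that every subsequential limit must be an equilibrium, and that the Lyapunov level set structure forces a unique limit, requires care. The key leverage comes from the fact that $T_1$ is built from the max Bellman operator plus a constant shift, ensuring a sufficiently rich equilibrium set and continuous-in-initial-conditions flow; routine application of LaSalle-type reasoning should then close the argument.
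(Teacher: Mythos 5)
Your first two steps are correct and are exactly the standard argument: $T$ is sup-norm nonexpansive because $\max_{a'}$ is $1$-Lipschitz, hence so is $T_1$, and the Dini-derivative computation $\frac{d^+}{dt}\norm{y_t-\bar y}_\infty \le \norm{T_1(y_t)-T_1(\bar y)}_\infty - \norm{y_t-\bar y}_\infty \le 0$ gives the nonincreasing claim. Note, however, that the paper does not prove this lemma at all: it imports it verbatim as Lemma 3.1 of Abounadi, Bertsekas, and Borkar (2001), observing only that the proof there does not involve $f$; that lemma in turn rests on Theorem 3.1 of Borkar and Soumyanath (1997) on ODEs driven by nonexpansive maps. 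The hard content of that cited result is precisely the part of your argument that does not close.

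The gap is in your third step. You established only $\frac{d^+}{dt}V \le 0$, with no strict decrease off the equilibrium set, so there is no ``strict-decrease analysis'' to combine with. LaSalle-type reasoning gives that the $\omega$-limit set is invariant and contained in the level set $\{V = V_\infty\}$ of \emph{each} Lyapunov function $\norm{\cdot - \bar y}_\infty$, but a trajectory along which $\norm{y_t - \bar y}_\infty$ is constant need not be stationary: constancy of the sup norm forces equality only in the maximizing coordinates, and says nothing about the others, so it does not yield $T_1(y_*) = y_*$. (Sup-norm nonexpansive flows can, in general, have nontrivial invariant sets on spheres; ruling this out here requires either the specific structure of the fixed-point set of $T_1$ --- the line $\{q_* + ce\}$, which lets one intersect spheres centered at two distinct equilibria --- or the dedicated nonexpansive-semigroup argument of Borkar and Soumyanath.) Your final step (once $y_*$ is known to be an equilibrium, monotonicity of $\norm{y_t - y_*}_\infty$ plus a convergent subsequence forces full convergence) is fine, so the missing piece is exactly the proof that every $\omega$-limit point is a fixed point of $T_1$.
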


\begin{lemma} \label{lemma: unique equilibrium}
\eqref{original ode} has a unique equilibrium at $q_*$.
\end{lemma}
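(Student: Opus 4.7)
The plan is to verify that $q_*$ is indeed an equilibrium of \eqref{original ode} and then show uniqueness by exploiting two facts: the uniqueness of the optimal reward rate for a communicating MDP, and the fact that solutions of the average-reward Bellman equation are unique only up to an additive constant, which is then pinned down by the additional normalization $f(q_*)=r_*$.

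First I would check existence. By definition $q_*$ satisfies the action-value optimality equation \eqref{eq: action-value Bellman equation 2}, which rewrites as $T(q_*) - q_* = r_* e$. Combined with the normalization $f(q_*) = r_*$, this gives $T_2(q_*) - q_* = T(q_*) - q_* - f(q_*) e = r_* e - r_* e = 0$, so $q_*$ is an equilibrium.

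For uniqueness, let $x$ be any equilibrium, so that $T(x) - x = f(x) e$. This is exactly the average-reward Bellman optimality equation with constant reward rate $f(x)$ playing the role of $\bar r$. Under the communicating assumption (\Cref{assu: communicating}), the unique scalar $\bar r$ for which the equation $T(x) - x = \bar r e$ admits a solution is $r_*$ (see, e.g., Puterman 1994, Theorem 8.4.3). Hence $f(x) = r_*$, and $x$ is a solution of $T(x) - x = r_* e$. Because solutions of the average-reward Bellman optimality equation are unique up to an additive constant, we must have $x = q_* + c e$ for some $c \in \mathbb{R}$. Applying the second hypothesis on $f$, we obtain $r_* = f(x) = f(q_* + c e) = f(q_*) + c u = r_* + c u$, which forces $c u = 0$. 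Since $u > 0$, we conclude $c = 0$ and $x = q_*$.

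The main obstacle I would anticipate is invoking the right uniqueness statement for the action-value Bellman optimality equation under the communicating assumption, namely that (i) the attainable constant $\bar r$ on the right-hand side is unique and equal to $r_*$, and (ii) the solutions for $q$ are unique up to an additive constant multiple of $e$. Both facts are classical (Puterman 1994, Chapter 8), but it is worth noting that they are precisely what allows the single linear condition $f(\cdot) = r_*$, combined with the translation property $f(x+ce) = f(x)+cu$ and $u>0$, to select a unique point out of the one-dimensional affine family of Bellman solutions.
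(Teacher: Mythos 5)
Your proof is correct and is essentially the argument the paper relies on: the paper defers to Lemma 3.2 of Abounadi, Bertsekas, and Borkar (2001), whose proof is exactly your existence-plus-uniqueness argument (equilibria satisfy $T(x)-x=f(x)e$, so $f(x)=r_*$ and $x=q_*+ce$ by the up-to-a-constant uniqueness of Bellman solutions, and then the translation property forces $cu=0$, hence $c=0$ since $u>0$). If anything you are slightly more careful than the paper, which asserts that the cited proof ``does not rely on properties of $f$''---in fact it relies on the translation property $f(x+ce)=f(x)+cu$, precisely where you invoke it.
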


We then show the relation between $x_t$ and $y_t$ using the following lemma. It shows that the difference between $x_t$ and $y_t$ is a vector with identical elements and this vector satisfies a new ODE.

\begin{lemma}\label{lemma: connection between original and aux ode}
Let $x_0 = y_0$, then $x_t= y_t + z_t e$, where $z_t$ satisfies the ODE $\dot z_t= - u z_t + (r_* - f(y_t))$.
\end{lemma}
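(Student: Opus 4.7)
The plan is to define a candidate solution $\tilde x_t \doteq y_t + z_t e$ where $z_t$ solves the scalar ODE $\dot z_t = -u z_t + (r_* - f(y_t))$ with initial condition $z_0 = 0$, and then to invoke uniqueness of solutions for the well-posed ODE \eqref{original ode} to conclude that $\tilde x_t = x_t$. Since $\tilde x_0 = y_0 + 0 \cdot e = y_0 = x_0$, the initial conditions match; it only remains to verify that $\tilde x_t$ satisfies \eqref{original ode}, i.e., $\dot{\tilde x}_t = T_2(\tilde x_t) - \tilde x_t$.

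The key observation enabling this verification is that both the Bellman operator $T$ and the reference function $f$ interact linearly with shifts by multiples of $e$. The operator $T$ satisfies $T(Q + ce) = T(Q) + ce$ because $\max_{a'}(Q(s',a') + c) = \max_{a'} Q(s',a') + c$ and the transition probabilities sum to one. The second assumption of \Cref{Extension of RVI Q-learning} gives $f(Q + ce) = f(Q) + cu$. Applying these with $Q = y_t$ and $c = z_t$,
\begin{align*}
T_2(\tilde x_t) - \tilde x_t &= T(y_t) + z_t e - \bigl(f(y_t) + u z_t\bigr) e - y_t - z_t e \\
&= \bigl(T(y_t) - r_* e - y_t\bigr) + \bigl(r_* - f(y_t) - u z_t\bigr) e \\
&= T_1(y_t) - y_t + \bigl(r_* - f(y_t) - u z_t\bigr) e.
\end{align*}
On the other hand, differentiating the candidate directly gives $\dot{\tilde x}_t = \dot y_t + \dot z_t e = (T_1(y_t) - y_t) + (-u z_t + r_* - f(y_t)) e$, which coincides with the expression above.

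Thus $\tilde x_t$ satisfies \eqref{original ode} with the same initial condition as $x_t$, and uniqueness (guaranteed by the Lipschitz r.h.s.\ noted just before \Cref{lemma: aux ode convergence}) forces $x_t = \tilde x_t = y_t + z_t e$. The only subtle point worth being careful about is ensuring that the scalar ODE for $z_t$ is itself well-posed so that $z_t$ is defined on the same time interval as $y_t$; this is immediate because, once $y_t$ is known (and continuous), the equation for $z_t$ is a linear scalar ODE with bounded inhomogeneous term $r_* - f(y_t)$, whose global solution is given explicitly by $z_t = \int_0^t \exp(-u(t-s))(r_* - f(y_s))\, ds$. I do not anticipate a real obstacle; the main content is simply recognizing and applying the two shift-equivariance properties of $T$ and $f$ in tandem.
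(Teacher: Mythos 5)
Your proof is correct, and it is organized differently from the paper's. The paper's proof takes the decomposition $x_t = y_t + z_t e$ as given by citing Lemma 3.3 of Abounadi, Bertsekas, and Borkar (2001), and then derives the scalar ODE by computing $\dot z_t e = \dot x_t - \dot y_t$ and using the shift properties $f(y_t + z_t e) = f(y_t) + u z_t$ and $T_1(y_t + z_t e) = T_1(y_t) + z_t e$. You instead \emph{construct} the candidate $\tilde x_t = y_t + z_t e$ with $z_t$ defined by the scalar ODE (explicitly solvable by variation of parameters, so globally defined), verify via the same two shift-equivariance properties that $\tilde x_t$ solves \eqref{original ode} with $\tilde x_0 = x_0$, and conclude $x_t = \tilde x_t$ from uniqueness of solutions, which the paper has already secured by noting the Lipschitz right-hand sides. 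The computations are essentially identical in both arguments; what your route buys is self-containedness --- it establishes the decomposition and the scalar ODE in one stroke rather than importing the decomposition from an external lemma --- and it makes explicit the well-posedness of the $z_t$ equation, which the paper leaves implicit. Both are valid; yours is arguably the cleaner write-up of the same underlying idea.
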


\begin{proof}
The proof of $x_t = y_t + z_t e$ is the same with the Lemma 3.3 by Abounadi, Bertsekas, and Borkar (2001). 

Now we show $\dot z_t= - u z_t + (r_* - f(y_t) )$. Note that $f(x_t) = f(y_t + z_t e) = f(y_t) + u z_t$. In addition, $T_1(x_t) - T_1(y_t) = T_1(y_t + z_t e) - T_1(y_t) = T_1(y_t) + z_t e - T_1(y_t) = z_t e$, therefore we have, for $z_t \in \bbR$:
\begin{align*}
    \dot z_t e & = \dot x_t - \dot y_t\\
    & = \left(T_1 (x_t) - x_t + \left(r_* - f(x_t) \right) e \right) - ( T_1 (y_t) - y_t) \quad \text{(from \eqref{aux ode} and \eqref{original ode})}\\
    & = - (x_t - y_t) + (T_1 (x_t) - T_1(y_t)) + \left(r_* - f(x_t) \right) e \quad \\
    & = - z_t e + z_t e + \left(r_* - f(x_t) \right) e\\
    & = - u z_t e + u z_t e + \left (r_* - f(x_t) \right) e\\
    & = - u z_t e + \left (r_* - f(y_t) \right) e\\
    \implies \dot z_t &= - u z_t + \left(r_* - f(y_t) \right).
\end{align*}
\end{proof}

With the above lemmas, we have:

\begin{lemma} \label{lemma: globally asymptotically stable equilibrium lemma}
$q_*$ is the globally asymptotically stable equilibrium for \eqref{original ode}.
\end{lemma}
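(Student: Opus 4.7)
The plan is to reduce the analysis of the original ODE \eqref{original ode} to that of the auxiliary ODE \eqref{aux ode} via \Cref{lemma: connection between original and aux ode}. Fix any $x_0 \in \mathbb{R}^{|\calS \times \calA|}$ and run the two flows from the common initial condition $y_0 = x_0$, so that $x_t = y_t + z_t e$ with $z_t$ governed by $\dot z_t = -u z_t + (r_* - f(y_t))$ and $z_0 = 0$. Global asymptotic stability then splits into a global attraction claim and a Lyapunov stability claim, each of which I would treat in turn.

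For global attraction, I would first invoke \Cref{lemma: aux ode convergence} to obtain $y_t \to y_*$ for some equilibrium $y_*$ of \eqref{aux ode} that may depend on $x_0$. Lipschitzness of $f$ implies $f(y_t) \to f(y_*)$, so the forcing in the $z_t$ equation converges; since $u > 0$, a standard variation-of-constants argument applied to $\dot z = -u z + h(t)$ with convergent $h(t)$ yields $z_t \to z_* \doteq (r_* - f(y_*))/u$, hence $x_t \to x_\infty \doteq y_* + z_* e$. To identify $x_\infty$, I would combine the identity $T(v + c e) = T(v) + c e$ (immediate since $\max$ commutes with constant shifts) with the hypothesis $f(v + c e) = f(v) + u c$ and the equilibrium property $T_1(y_*) = y_*$; a short computation then gives $T_2(x_\infty) - x_\infty = (r_* - f(y_*) - u z_*) e$, which vanishes by definition of $z_*$. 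Thus $x_\infty$ is an equilibrium of \eqref{original ode}, and by \Cref{lemma: unique equilibrium} must equal $q_*$.

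For Lyapunov stability, I would exploit the fact that $q_*$ is itself an equilibrium of the auxiliary ODE: from $T_2(q_*) = q_*$ and $f(q_*) = r_*$ we get $T(q_*) = q_* + r_* e$, hence $T_1(q_*) = q_*$. \Cref{lemma: aux ode convergence} therefore gives $\|y_t - q_*\|_\infty \leq \|x_0 - q_*\|_\infty$ for all $t$. Writing $z_t$ via variation of constants and bounding $|r_* - f(y_t)| = |f(q_*) - f(y_t)| \leq L \|y_t - q_*\|_\infty$ with the Lipschitz constant $L$ of $f$ yields $|z_t| \leq (L/u)\|x_0 - q_*\|_\infty$; combining the two estimates produces $\|x_t - q_*\|_\infty \leq (1 + L/u)\|x_0 - q_*\|_\infty$, which is the desired stability bound. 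The main delicate step is the convergence of $z_t$ under time-varying forcing, which I would handle by splitting the variation-of-constants integral into a transient part (exponentially decaying at rate $u$) and a residual part whose kernel has uniformly bounded $L^1$ mass while $h(t) - h_\infty \to 0$. Everything else reduces to the algebraic identities imposed on $f$ and $T$.
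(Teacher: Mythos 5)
Your proposal is correct and follows essentially the same route as the paper's proof: both decompose $x_t = y_t + z_t e$ via \Cref{lemma: connection between original and aux ode}, use variation of parameters and the Lipschitzness of $f$ together with \Cref{lemma: aux ode convergence} to get the stability bound $\|x_t - q_*\|_\infty \leq (1 + L/u)\|x_0 - q_*\|_\infty$, and identify the limit $y_* + (r_* - f(y_*))e/u$ with $q_*$ via \Cref{lemma: unique equilibrium}. Your explicit algebraic verification that the limit point is an equilibrium of \eqref{original ode} is a small (welcome) addition to the paper's argument, which simply asserts this, and your integral-splitting argument for $z_t \to z_*$ replaces the paper's use of L'Hospital's rule; neither changes the substance.
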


\begin{proof}
We have shown that $q_*$ is the unique equilibrium in Lemma \ref{lemma: unique equilibrium}.

With that result, we first prove Lyapunov stability. That is, we need to show that given any $\epsilon > 0$, we can find a $\delta > 0$ such that $\norm{q_* - x_0}_\infty \leq \delta$ implies $\norm{q_* - x_t}_\infty \leq \epsilon$ for $t \geq 0$. 

First, from Lemma \ref{lemma: connection between original and aux ode} we have $\dot z_t= - u z_t + (r_* - f(y_t))$. By variation of parameters and $z_0 = 0$, we have
\begin{align*}
    z_t = \int_0^t \exp(u (\tau - t)) \left (r_* - f(y_\tau) \right) d\tau .
\end{align*}
Then
\begin{align}
    \norm{q_* - x_t}_\infty & = \norm{q_* - y_t - z_t ue}_\infty \nonumber\\ 
    & \leq \norm{q_* - y_t}_\infty + u \abs{z_t} \nonumber \\
    & \leq \norm{q_* - y_0}_\infty + u \int_0^t \exp(u (\tau - t)) \abs{r_* - f(y_\tau)} d\tau \nonumber\\
    & = \norm{q_* - x_0}_\infty + u \int_0^t \exp(u (\tau - t)) \abs{f(q_*) - f(y_\tau)} d\tau \quad \text{(from \eqref{eq: relation between tilde r and eta sum q infty})} \label{Q: lemma: globally asymptotically stable equilibrium lemma: eq 1}.
\end{align}

Because $f$ is $L$-lipschitz, we have
\begin{align*}
    \abs{ f(q_\infty) - f( y_\tau)} & \leq L \norm{q_* - y_\tau}_\infty \\
    & \leq L \norm{q_* - y_0}_\infty \quad \text{(from Lemma \ref{lemma: aux ode convergence})} \\
    & = L \norm{q_* - x_0}_\infty,
\end{align*}

\begin{align*}
    \int_0^t \exp(u (\tau - t)) \abs{ f(q_*) - f(y_\tau) } d\tau & \leq \int_0^t \exp(u (\tau - t)) L \norm{q_* - x_0 }_\infty d\tau \\
    & = L \norm{q_* - x_0 }_\infty \int_0^t \exp(u (\tau - t)) d\tau \\
    & = L \norm{q_* - x_0 }_\infty \frac{1}{u }(1 - \exp(-u t)) \\
    & = \frac{L}{u }\norm{q_* - x_0 }_\infty (1 - \exp(-u t))
\end{align*}

Substituting the above equation in \eqref{Q: lemma: globally asymptotically stable equilibrium lemma: eq 1}, we have
\begin{align*}
    \norm{q_* - x_t}_\infty \leq (1 + L) \norm{q_* - x_0}_\infty.
\end{align*}

Lyapunov stability follows.

Now in order to prove the asymptotic stability, in addition to Lyapunov stability, we need to show that there exists $\delta >0$ such that if $\norm{x_0 - q_*}_\infty < \delta$ , then $\lim_{t \to \infty } \norm{x_t - q_*}_\infty=0$. Note that

\begin{align*}
    \lim_{t \to \infty} z_t & = \lim_{t \to \infty} \int_0^t \exp(u (\tau - t)) \left(r_* - f(y_\tau) \right) d\tau\\
    & = \lim_{t \to \infty} \frac{\int_0^t \exp(u \tau) ( r_* - f(y_\tau) ) d\tau}{\exp(ut)} \\
    & = \lim_{t \to \infty} \frac{\exp(u t) (r_* - f(y_t))}{u \exp(ut)} \quad \text{(by L'Hospital's rule)}\\
    & = \frac{r_* - f(y_*)}{u} \quad \text{(by Lemma \ref{lemma: aux ode convergence})} .
\end{align*}

Because $x_t = y_t + z_t e$ (Lemma \ref{lemma: connection between original and aux ode}) and $y_t \to y_*$ (Lemma \ref{lemma: aux ode convergence}), we have $x_t \to y_* + (r_* - f(y_*)) e / u$, which must coincide with $q_*$ because that is the only equilibrium point for \eqref{original ode} (Lemma \ref{lemma: unique equilibrium}). Therefore $\lim_{t \to \infty} \norm{x_t - q_*}_\infty = 0$ for any $x_0$. Asymptotic stability is shown and the proof is complete.
\end{proof}

\begin{lemma}\label{lemma: Synchronous General Differential Q}
Equation \ref{sync RVI Q-learning } converges a.s. $Q_{n}$ to $q_*$ 
as $n \to \infty$.
\end{lemma}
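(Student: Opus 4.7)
The plan is to recast \eqref{sync RVI Q-learning } as a standard Robbins--Monro stochastic approximation driven by the ODE $\dot x_t = T_2(x_t) - x_t$ and to appeal to the Borkar--Meyn stability theorem (Theorem 7, Chapter 3 of Borkar 2009) together with the standard ODE-based convergence result. Setting $h(Q) \doteq T_2(Q) - Q$, I would rewrite the update as $Q_{n+1} = Q_n + \alpha_n\bigl(h(Q_n) + M_{n+1}\bigr)$, where
\[
M_{n+1}(s,a) \doteq R_n(s,a) + g\bigl(Q_n(S_n'(s,a),\cdot)\bigr) - \sum_{s',r} p(s',r \mid s,a)\bigl(r + g(Q_n(s',\cdot))\bigr)
\]
is a martingale difference with respect to the natural filtration $\calF_n$, and which satisfies a conditional second-moment bound $\bbE\bigl[\norm{M_{n+1}}^2 \,\big|\, \calF_n\bigr] \leq K\bigl(1 + \norm{Q_n}^2\bigr)$ since $\calR$ is finite and $g = \max$ is non-expansive in sup-norm.

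Next I would verify the easy ingredients. The map $h$ is Lipschitz because $g$ is non-expansive and $f$ is Lipschitz by assumption~(2) of \Cref{Extension of RVI Q-learning}; \Cref{assu: stepsize} supplies the required step-size summability; and \Cref{lemma: globally asymptotically stable equilibrium lemma} shows that $q_*$ is the unique globally asymptotically stable equilibrium of the limiting ODE.

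The hard part will be establishing a priori boundedness of $\{Q_n\}$, without which the standard ODE method does not apply. For this I would invoke the Borkar--Meyn scaling criterion: define $h_c(x) \doteq h(cx)/c$ for $c \geq 1$ and study the limit $c \to \infty$. The assumed degree-one homogeneity of $f$ (namely $f(cx) = cf(x)$), the positive homogeneity of $\max$, and the fact that the additive reward term vanishes after dividing by $c$ together imply that $h_c \to h_\infty$ uniformly on compact sets, where
\[
h_\infty(x)(s,a) = \sum_{s',r} p(s',r \mid s,a)\max_{a'} x(s',a') - f(x) - x(s,a).
\]
This is exactly the field one would obtain by running the same scheme on the reward-free version of $\calM$, for which the optimal reward rate is $0$ and, by the communicating assumption, the only solutions of the associated Bellman equation are multiples of $e$, with $f(ce) = cu$ pinning down the multiple to $0$. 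Repeating the argument of \Cref{lemma: aux ode convergence}--\Cref{lemma: globally asymptotically stable equilibrium lemma} verbatim for this reward-free system shows that the origin is the unique globally asymptotically stable equilibrium of $\dot x_t = h_\infty(x_t)$.

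With the scaling hypothesis verified, the Borkar--Meyn theorem yields $\sup_n \norm{Q_n} < \infty$ almost surely; the standard ODE-based convergence theorem for stochastic approximation (Chapter 2 of Borkar 2009), combined with the global asymptotic stability of $q_*$ for the unscaled ODE provided by \Cref{lemma: globally asymptotically stable equilibrium lemma}, then delivers $Q_n \to q_*$ a.s., which is the claim of the lemma.
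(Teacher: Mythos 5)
Your proposal is correct and follows essentially the same route as the paper: the identical decomposition $Q_{n+1}=Q_n+\alpha_n(h(Q_n)+M_{n+1})$ with $h=T_2-\mathrm{id}$, verification of the Lipschitz/step-size/martingale-difference conditions, the Borkar--Meyn scaling criterion with the reward-free limiting field $h_\infty(x)=T_0(x)-x-f(x)e$ to obtain a.s.\ boundedness, and then the ODE method combined with \Cref{lemma: globally asymptotically stable equilibrium lemma}. The only cosmetic difference is that you spell out why the origin is the unique equilibrium of the scaled ODE (constant solutions pinned down by $f(ce)=cu$), where the paper simply cites \Cref{lemma: unique equilibrium} and \Cref{lemma: globally asymptotically stable equilibrium lemma} as applying to the zero-reward special case.
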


\begin{proof}
The proof uses Theorem 2 in Section 2 of Borkar (2009) and is essentially the same as Lemma 3.8 by Abounadi, Bertsekas and Borkar (2001). For completeness, we repeat the proof (with more details) here.

First write the synchronous update \eqref{sync RVI Q-learning } as 
\begin{align*}
    Q_{n+1} = Q_{n} + \alpha_n (h(Q_n) + M_{n+1})
\end{align*}
where 
\begin{align*}
    h(Q_n)(s, a) & \doteq \sum_{s', r} p (s', r \mid s, a) (r + \max_{a'} Q_n(s', a')) - Q_n(s, a) - f(Q_n) \\
    & = T(Q_n)(s, a) - Q_n(s, a) - f(Q_n) \\
    & = T_2(Q_n)(s, a) - Q_n(s, a) ,\\
    M_{n + 1}(s, a) & \doteq R_n(s, a) + \max_{a'} Q_n(S_n'(s, a), a') - T(Q_n)(s, a) .
\end{align*}
Theorem 2 requires verifying the following conditions and concludes that $Q_n$ converges
to a (possibly sample path dependent) compact connected internally chain transitive invariant set of ODE $\dot x_t = h(x_t)$. This is exactly the ODE defined in \eqref{original ode}. Lemma \ref{lemma: unique equilibrium} and \ref{lemma: globally asymptotically stable equilibrium lemma} conclude that this ODE has $q_\infty$ as the unique globally asymptotically stable equilibrium. Therefore the (possibly sample path dependent) compact connected internally chain transitive invariant set is a singleton set containing only the unique globally asymptotically stable equilibrium. Thus Theorem 2 concludes that $Q_n \to q_\infty$ a.s. as $n \to \infty$. We now list conditions required by Theorem 2:

\begin{itemize}
    \item \textbf{(A1)} The function $h$ is Lipschitz: $\norm{h(x) - h(y)} \leq L \norm{x - y}$ for some $0 < L < \infty$.
    \item \textbf{(A2)} The sequence $\{ \alpha_n\}$ satisfies $ \alpha_n > 0$, and $\sum \alpha_n = \infty$, $\sum \alpha_n^2 < \infty$.
    \item \textbf{(A3)} $\{M_n\}$ is a martingale difference sequence with respect to the increasing family of $\sigma$-fields 
        \begin{align*}
            \calF_n \doteq \sigma(Q_i, M_i,i \leq n), n \geq 0
        \end{align*}
        That is
        \begin{align*}
            \bbE[M_{n+1} \mid \calF_n] = 0 \text{ \ \ a.s., } n \geq 0.
        \end{align*}
        Furthermore, $\{M_n\}$ are square-integrable 
        \begin{align*}
            \bbE[\norm{M_{n+1}}^2 \mid \calF_n] \leq K(1 + \norm{Q_n}^2) \text{ \ \ a.s., \ \ } n \geq 0,
        \end{align*}
        for some constant $K > 0$.
    \item \textbf{(A4)} $\sup_n \norm{Q_n} \leq \infty$ a.s..
\end{itemize}

Let us verify these conditions now.

(A1) is satisfied as both $T$ and $\sum$ operators are Lipschitz.

(A2) is satisfied by \Cref{assu: stepsize}.

(A3) is also satisfied because for any $s \in \calS, a \in \calA$
\begin{align*}
    \bbE[M_{n+1}(s, a) \mid \calF_n] & = \bbE \left[R_n(s, a) + \max_{a'}Q_n(S_n'(s, a), a') - T (Q_n)(s, a) \mid \calF_n \right] \\
    & = \bbE \left [R_n(s, a) + \max_{a'}Q_n(S_n'(s, a), a') \mid \calF_n \right] - T (Q_n)(s, a) \\
    & = 0
\end{align*}
and $\bbE [\norm{M_{n+1}}^2 \mid \calF_n] \leq K (1 + \norm{Q_n}^2)$ for a suitable constant $K > 0$ can be verified by a simple application of triangle inequality.

To verify (A4), we apply Theorem 7 in Section 3 by Borkar (2009), which shows $\sup_n \norm{Q_n} \leq \infty$ a.s., if (A1), (A2), and (A3) are all satisfied and in addition we have the following condition satisfied:

\textbf{(A5)} The functions $h_d(x) \doteq h(dx)/d$, $d \geq 1, x \in \bbR^{k}$, satisfy $h_d(x) \to h_\infty(x)$ as
$d \to \infty$, uniformly on compacts for some $h_\infty \in C(\bbR^k)$. Furthermore, the ODE $\dot x_t = h_\infty(x_t)$ has the origin as its unique globally asymptotically stable equilibrium.

Note that
\begin{align*}
    & h_\infty(x) = \lim_{d \to \infty} h_d(x) = \lim_{d \to \infty} \left (T(dx) - dx - f( dx) e \right) / d \\
    & = T_0 (x) - x - f(x)  e
\end{align*}
where 
\begin{align*}
    T_0 (x) \doteq \sum_{s', r} p (s', r \mid  s, a) \max_{a'} x(s', a').
\end{align*}
The function $h_\infty$ is clearly continuous in every $x \in \bbR^k$ and therefore $h_\infty \in C(\bbR^k)$.

Now consider the ODE $\dot x_t = h_\infty(x_t) = T_0 (x_t) - x_t - f(x_t) e$. Clearly the origin is an equilibrium. This ODE is a special case of \eqref{original ode}, corresponding to the reward being always zero, therefore Lemma \ref{lemma: unique equilibrium} and \ref{lemma: globally asymptotically stable equilibrium  lemma} also apply to this ODE and the origin is the unique globally asymptotically stable equilibrium.

(A1), (A2), (A3), (A4) are all verified and therefore 
\begin{align}
    Q_n \to q_* \text{\ \  a.s. as \ \  } n \to \infty. \label{Q: convergence of Q_n to q_infty}
\end{align}
\end{proof}

\subsection{Proof of Differential TD-learning and Differential TD-planning} \label{app:convergence-proof-difftd}
The proof is similar to that of Differential Q-learning and Differential Q-planning. We consider a General Differential TD algorithm which includes both Differential TD-learning and Differential TD-planning. 

Given a MDP $\calM \doteq (\calS, \calA, \calR, p)$, a behavior policy $b$, and a target policy $\pi$, for any state $s \in \calS$ and discrete step $n \geq 0$, let $A_n(s) \sim b(\cdot \mid s)$, $R_n(s, A_n(s)), S'_n(s, A_n(s)) \sim p(\cdot, \cdot \mid s, A_n(s))$. We hypothesize a set-valued process $\{Y_n\}$ taking
values in the set of nonempty subsets of $\calS$ with the interpretation: $Y_n = \{s: s$ component of $V$ was updated at time $n\}$. Define $\nu(n, s) = \sum_{i=0}^n I\{s \in Y_i\}$ where $I$ is the indicator function. Thus $\nu(n, s) =$ the number of times $V(s)$ was updated up to time $n$. Then the update rules of General Differential TD are, for $n \geq 0$:
\begin{align}
    V_{n+1}(s) & \doteq V_n(s) + \alpha_{\nu(n, s)} \rho_n(s) \delta_n(s) I\{s \in Y_n\} \quad \forall s \in \calS \label{eq: TD: async V update}\\
    \bar R_{n+1} & \doteq \bar R_n + \sum_s \alpha_{\nu(n, s)} \rho_n(s) \delta_n(s) I\{s \in Y_n\} \label{TD: async bar R update},
\end{align}
where
\begin{align}
    \delta_n(s) & \doteq R_n(s, A_n(s)) + V_n(S_n'(s, A_n(s))) - V_n(s) - \bar R_n \label{eq: TD: async TD error},
\end{align}
and $\rho_n(s) \doteq \pi(A_n(s) \mid s) / b(A_n(s) \mid s)$ is the importance sampling ratio (this is always well-defined given Assumption~\ref{assu: coverage}).

The quantity $\alpha_{\nu(n, s)}$ is the stepsize at step $n$ for state $s$ and can be obtained the same way as introduced in \ref{app:convergence-proof-diffq}. It can be shown, using similar arguments as those in \ref{app:convergence-proof-diffq}, that Differential TD-learning and Differential TD-planning are special cases of General Differential TD. And therefore we only need to prove the convergence of General Differential TD. We now specify required assumptions for the convergence proof.

\begin{assumption}\label{assu: unichain}
The Markov chain induced by the target policy is unichain.
\end{assumption}

\begin{assumption}[Coverage Assumption] \label{assu: coverage}
$b(a \mid s) > 0$ if $\pi(a \mid s) > 0$ for all $s \in \calS$, $a \in \calA$. 
\end{assumption}

The above assumption requires that the behavior policy covers all possible state--action pairs the target policy may incur. To guarantee the full coverage, we will need that the behavior policy visit all states for an infinite number of times.

\begin{assumption}[Asynchronous Stepsize Assumption B] \label{assu: asynchronous stepsize td 2}
There exists $\Delta > 0$ such that 
\begin{align*}
    \liminf_{n \to \infty} \frac{\nu(n, s)}{n+1} \geq \Delta ,
\end{align*}
a.s., for all $s \in \calS$.
Furthermore, for all $x > 0$, and
\begin{align*}
    N(n, x) = \min \Bigg \{m \geq n: \sum_{i = n+1}^m \alpha_i \geq x \Bigg\} ,
\end{align*}
the limit 
\begin{align*}
    \lim_{n \to \infty} \frac{\sum_{i = \nu(n, s)}^{\nu(N(n, x), s)} \alpha_i}{\sum_{i = \nu(n, s')}^{\nu(N(n, x), s')} \alpha_i}
\end{align*} 
exists a.s. for all $s, s'$.
\end{assumption}

It can be easily verified that
\begin{align}
    v(s) & = \sum_a \pi(a \mid s) \sum_{s', r} p(s', r \mid s, a) (r - \bar r + v(s')), \text{    for all } s\in\calS, \label{state value Bellman equation 2} \\
    r(\pi) - \bar R_0 & = \eta \left(\sum v - \sum V_0 \right) \label{eq: TD: determination equation for v infty}
\end{align}
has a unique solution of $v$. Denote the solution as $v_\infty$.

\begin{theorem}[Convergence of General Differential TD]\label{thm: convergence of uncentered differential td update}

If Assumptions~\ref{assu: unichain}, \ref{assu: stepsize}, \ref{assu: asynchronous stepsize 1}, \ref{assu: asynchronous stepsize td 2}, and \ref{assu: coverage} hold, then General Differential TD (Equations \ref{eq: TD: async V update}-\ref{eq: TD: async TD error}) converges a.s., $\bar R_n$ to $r(\pi)$ and $V_n$ to $v_\infty$.
\end{theorem}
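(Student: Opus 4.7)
My plan is to mirror the proof of Theorem~\ref{eq: convergence of uncentered differential q update} for General Differential Q, adapting each step from the Bellman optimality operator to the policy-evaluation operator induced by $\pi$. I first exploit the algebraic invariant that $\bar R_n - \bar R_0 = \eta(\sum V_n - \sum V_0)$, which follows because every increment to $\bar R_n$ equals $\eta$ times the total increment to the components of $V_n$. Rearranging gives $\bar R_n = \eta \sum V_n - c$ with $c \doteq \eta\sum V_0 - \bar R_0$. Substituting this into \eqref{eq: TD: async V update} eliminates $\bar R_n$ and produces a single asynchronous update for $V_n$ of the form
\begin{align*}
V_{n+1}(s) = V_n(s) + \alpha_{\nu(n,s)}\,\rho_n(s)\bigl(\tilde R_n(s,A_n(s)) + V_n(S'_n) - V_n(s) - \eta \textstyle\sum V_n\bigr) I\{s\in Y_n\},
\end{align*}
where $\tilde R_n \doteq R_n + c$ corresponds to a reward-shifted MDP $\tilde\calM$. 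This is the TD analogue of the ``extended RVI'' reduction used in \eqref{Q: transformed async single update}, now with $f(V) = \eta \sum V$ satisfying $f(e)=\eta|\calS|$ and $f(V+ce) = f(V) + c\eta|\calS|$.

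Next I would establish a TD version of Theorem~\ref{Extension of RVI Q-learning}. Define $T_\pi V(s) \doteq \sum_a \pi(a|s)\sum_{s',r} p(s',r|s,a)(r + V(s'))$, together with $T_{1,\pi}(V) \doteq T_\pi V - r(\pi)e$ and $T_{2,\pi}(V) \doteq T_\pi V - f(V)e$, and study the auxiliary ODE $\dot y_t = T_{1,\pi}(y_t) - y_t$ and the target ODE $\dot x_t = T_{2,\pi}(x_t) - x_t$. Under the unichain assumption, $T_\pi$ is non-expansive in the span seminorm and the solution set of the Bellman equation \eqref{state value Bellman equation 2} is the affine line $\{v_\pi + ce : c\in\bbR\}$; this gives the TD analogue of Lemma~\ref{lemma: aux ode convergence}, namely $y_t \to y_*$ for some equilibrium depending on $y_0$. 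With $u = \eta|\calS|$ the condition $f(V+ce) = f(V)+cu$ yields, via the argument of Lemma~\ref{lemma: connection between original and aux ode}, that $x_t = y_t + z_t e$ with $\dot z_t = -u z_t + (r(\pi) - f(y_t))$, and hence (by L'Hospital's rule) $z_t \to (r(\pi)-f(y_*))/u$. Together these imply that $v_\infty$ defined by \eqref{eq: TD: determination equation for v infty} is the unique, globally asymptotically stable equilibrium of $\dot x_t = T_{2,\pi}(x_t) - x_t$.

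To lift this ODE convergence to the stochastic iteration I would invoke the asynchronous stochastic approximation result of Borkar (1998, 2009, Sec.~7.4), exactly as in Lemma~\ref{lemma: Synchronous General Differential Q}. Assumptions~\ref{assu: stepsize}, \ref{assu: asynchronous stepsize 1}, and \ref{assu: asynchronous stepsize td 2} supply the stepsize and update-frequency conditions; the martingale difference $M_{n+1}(s) \doteq \rho_n(s)(R_n + V_n(S'_n) - V_n(s)) - (T_\pi V_n(s) - V_n(s))$ is bounded in $L^2$ by $K(1+\|V_n\|^2)$ using the coverage assumption (which makes $\rho_n(s)$ well-defined and bounded for each $s$), and stability of the iterates is checked by the same scaling argument applied to $h_\infty(V) = T_{0,\pi}(V) - V - f(V)e$ where $T_{0,\pi}$ is the zero-reward evaluation operator; its ODE has the origin as the unique globally asymptotically stable equilibrium by the same argument with $r(\pi)=0$. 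This gives $V_n \to v_\infty$ a.s., and then $\bar R_n = \eta\sum V_n - c \to \eta \sum v_\infty - c = r(\pi)$ by \eqref{eq: TD: determination equation for v infty}.

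The main obstacle is establishing the TD analogue of Lemma~\ref{lemma: aux ode convergence}: the policy-evaluation operator $T_\pi$ is not a max-based non-expansion in $\|\cdot\|_\infty$ but only a non-expansion in the span seminorm on the quotient $\bbR^{|\calS|}/\mathrm{span}(e)$ (assuming unichain, and aperiodicity that can be side-stepped by a standard lazy-chain argument or by working with the continuous-time ODE directly). Verifying that this is enough to guarantee convergence of $y_t$ to an equilibrium of $\dot y_t = T_{1,\pi}(y_t) - y_t$, rather than mere boundedness or oscillation along $e$, is the delicate step; it rests on showing that the span seminorm is a strict Lyapunov function whose decrease forces $\|y_t - v_\pi - c_t e\| \to 0$ along the $e$-direction as well, via the drift identity that the sum $\sum y_t$ satisfies along trajectories. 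Once this is in hand, the rest of the argument goes through essentially verbatim from the Differential Q-learning proof, including the continuous mapping argument that transfers convergence of $V_n$ to convergence of $\bar R_n$.
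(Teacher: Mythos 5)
Your overall architecture matches the paper's proof: the same algebraic reduction $\bar R_n = \eta\sum V_n - c$, the same reward-shifted MDP $\tilde\calM$, the same ODE pair built from $T_{1,\pi}$ and $T_{2,\pi}$, the same $x_t = y_t + z_t e$ decomposition with $u = \eta|\calS|$, and the same verification of Borkar's conditions (A1)--(A5) with the importance-sampling ratio absorbed into the martingale difference term. The concluding step $\bar R_n \to r(\pi)$ is, as you say, immediate from the linear identity (no continuous-mapping argument is needed here; that is only used in the control theorem for $r(\pi_t)\to r_*$).

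The one genuine problem is the step you single out as the ``main obstacle.'' Your premise is wrong: the policy-evaluation operator $T_\pi V(s) = \sum_a \pi(a|s)\sum_{s',r}p(s',r|s,a)(r+V(s'))$ \emph{is} a non-expansion in $\norm{\cdot}_\infty$, since its linear part is the stochastic matrix $P_\pi$, whose sup-norm operator norm is $1$; hence $\abs{T_\pi V(s)-T_\pi W(s)} \le \norm{V-W}_\infty$ for every $s$. Consequently the TD analogue of Lemma~\ref{lemma: aux ode convergence} requires no new argument in the span seminorm: the paper obtains it directly as a special case of Theorem 3.1 and Lemma 3.2 of Borkar and Soumyanath (1997), exactly as in the control case, using that the equilibrium set of $\dot y_t = T_{1,\pi}(y_t)-y_t$ is the nonempty affine line $\{v_\pi + ce : c\in\bbR\}$ under the unichain assumption. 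Your proposed workaround --- treating the span seminorm as a strict Lyapunov function and separately controlling the drift of $\sum y_t$ along $e$ --- is left unproven, and as sketched it would not obviously close the gap (decrease of the span seminorm says nothing about the $e$-component, and the claimed ``drift identity'' is not established). Since the difficulty is illusory, you should delete that detour and invoke sup-norm non-expansiveness of $T_{1,\pi}$ directly; with that replacement the rest of your argument goes through and coincides with the paper's proof.
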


We now prove this theorem.

\subsubsection{Proof of Theorem \ref{thm: convergence of uncentered differential td update}}
Similar as what we did in the proof of General Differential Q, we can combine update rules \eqref{eq: TD: async V update}-\eqref{eq: TD: async TD error} to obtain a single update rule.
\begin{align}
    & \bar R_n - \bar R_0 \nonumber\\
    & = \eta \sum_{i = 0}^{n-1} \sum_{s} \alpha_{\nu(i, s)} \rho_i(s) \delta_i (s) I\{s \in Y_k\} \nonumber\\
    & = \eta \left (\sum V_{n} - \sum V_0 \right) \nonumber\\
    & \implies \nonumber \\
    & \bar R_n = \eta \sum V_n - \eta \sum V_0 + \bar R_0 = \eta \sum V_n - c, \label{TD: relation between bar R and V} \\
    & \text{ where } c \doteq \eta \sum V_0 - \bar R_0. \label{eq: TD: definition of c}
\end{align}
Substituting $\bar R_n$ in \eqref{eq: TD: async V update} with \eqref{TD: relation between bar R and V} we have, $\forall s \in \calS$: 
\begin{align}
    V_{n+1}(s) & = V_{n}(s) + \alpha_{\nu(n, s)} \rho_n(s) \left(R_n(s, A_n(s)) + V_n(S_n'(s, A_n(s))) - V_n(s) - \eta \sum V_n + c \right) I\{s \in Y_n\} \nonumber\\
    & = V_{n}(s) + \alpha_{\nu(n, s)} \rho_n(s) \left(\tilde R_n(s, A_n(s)) + V_n(S_n'(s, A_n(s))) - V_n(s) - \eta \sum V_n \right) I\{s \in Y_n\}  \label{eq: TD: transformed async single update} ,
\end{align}
where $\tilde R_n(s, A_n(s)) \doteq R_n(s, A_n(s)) + c$. Now \eqref{eq: TD: transformed async single update} is in the same form with the asynchronous update (Equation 7.1.2) studied by Borkar (2009). Again we can apply the result in Section 7.4 by Borkar (2009) to show convergence of \eqref{eq: TD: transformed async single update}. This result, given Assumption \ref{assu: asynchronous stepsize 1} and \ref{assu: asynchronous stepsize td 2}, only requires showing the convergence of the following \emph{synchronous} version of General Differential TD:
\begin{align}
    V_{n+1}(s) = V_{n}(s) + \alpha_n \rho_n(s) \left(\tilde R_n(s, A_n(s)) + V_n(S'_n(s, A_n(s))) - V_n(s) - \eta \sum V_n \right), \quad \forall s \in \calS . \label{eq: TD: transformed sync single update}
\end{align}
This transformed MDP has the same state and action space as the original MDP and has the transition probability defined as 
\begin{align}
    \tilde p(s', r + c \mid s, a) \doteq p(s', r \mid s, a). \label{TD: relation between p and tilde p}
\end{align}
Note that the unichain assumption (Assumption \ref{assu: communicating}) and the coverage assumption (Assumption \ref{assu: coverage}) we made for the original MDP is still valid for the transformed MDP. For this transformed MDP, denote the average reward rate following policy $\pi$ as $\tilde r(\pi)$.  Then 
\begin{align}
    \tilde r(\pi) = r(\pi) + c \label{eq: TD: relation between r pi and tilde r pi}
\end{align}
because the reward in the transformed MDP is shifted by $c$ compared with the original MDP.

Combining \eqref{eq: TD: relation between r pi and tilde r pi}, \eqref{eq: TD: determination equation for v infty} and \eqref{eq: TD: definition of c}, we have
\begin{align}
    \tilde r(\pi) = \eta \sum v_\infty \label{TD: relation between tilde r and eta sum v infty}.
\end{align}
Furthermore, 
\begin{align*}
    v_\infty(s) & = \sum_a \pi(a \mid s) \sum_{s', r} p(s', r \mid s, a) (r + v_\infty (s') - r(\pi)) \quad \text{(from \eqref{state value Bellman equation 2})}\\
    & = \sum_a \pi(a \mid s) \sum_{s', r} p(s', r \mid s, a) (r + c + v_\infty (s') - \tilde r(\pi)) \quad \text{(from \eqref{eq: TD: relation between r pi and tilde r pi})}\\
    & = \sum_a \pi(a \mid s) \sum_{s', r} \tilde p(s', r \mid s, a) (r + v_\infty (s') - \tilde r(\pi)),
\end{align*}
therefore $v_\infty$ is a solution of $v$ in the state-value Bellman equations for not only the original MDP $\calM$ but also the transformed MDP $\tilde \calM$. 

We now show $V_n \to v_\infty$ and $\eta \sum V_n \to \tilde r(\pi)$. First, define operators $T, T_1, T_2$:
\begin{align*}
    T(V) (s) & \doteq \sum_a \pi(a \mid s) \sum_{s', r} \tilde p (s', r \mid s, a) (r + V(s')) , \\
    T_1 (V) & \doteq T (V) - \tilde r(\pi) e, \\
    T_2 (V) & \doteq T (V) - \left( \eta \sum V \right) e = T_1 (V) + \left(\tilde r(\pi) - \eta \sum V \right) e.
\end{align*}

Consider two ODEs:
\begin{align}
    \dot y_t & = T_1 (y_t) - y_t, \label{eq: TD: aux ode}\\
    \dot x_t & = T_2 (x_t) - x_t. \label{eq: TD: original ode}
\end{align}
Note that by the properties of $T, T_1, T_2$, both \eqref{eq: TD: aux ode} and \eqref{eq: TD: original ode} have Lipschitz R.H.S.'s and thus are well-posed.

The next lemma is similar to Lemma 3.1 by Abounadi, Bertsekas, and Borkar (2001) and is a special case of Theorem 3.1 and Lemma 3.2 by Borkar and Soumyanath (1997).

\begin{lemma}
Let $\bar y$ be an equilibrium point of \eqref{eq: TD: aux ode}. Then $\norm{y_t - \bar y}_\infty$ is nonincreasing, and $y_t \to y_*$ for some equilibrium point $y_*$ of \eqref{eq: TD: aux ode} that may depend on $y_0$.
\end{lemma}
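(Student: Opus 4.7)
The plan is to exploit the fact that the Bellman operator $T_1$ is a nonexpansion with respect to $\|\cdot\|_\infty$, and then to appeal to the classical fact that a nonexpansive semiflow on a finite-dimensional space whose orbits are bounded and which admits an equilibrium must converge to an equilibrium. Both conclusions of the lemma then reduce to verifying the nonexpansiveness of $T_1$ along the flow of \eqref{eq: TD: aux ode}.

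For the nonexpansiveness, I would observe that $T_1 = T - \tilde r(\pi)\,e$, and that for any two value vectors $V,V'$,
\[
|T(V)(s) - T(V')(s)| = \Bigl|\sum_a \pi(a\mid s)\sum_{s',r}\tilde p(s',r\mid s,a)\bigl(V(s') - V'(s')\bigr)\Bigr| \le \|V - V'\|_\infty,
\]
so $T_1$ is a $\|\cdot\|_\infty$-nonexpansion. Given an equilibrium $\bar y$ with $T_1(\bar y) = \bar y$, I would bound the upper Dini derivative of $\phi(t) \doteq \|y_t - \bar y\|_\infty$ via a Danskin-type argument: on any index $s \in \argmax_{s'} |y_t(s') - \bar y(s')|$ with $y_t(s) - \bar y(s) > 0$, nonexpansiveness gives $T_1(y_t)(s) - \bar y(s) \le \|y_t - \bar y\|_\infty = y_t(s) - \bar y(s)$, so the contribution of the $s$-coordinate of $T_1(y_t) - y_t$ to $D^+\phi(t)$ is at most zero; the $y_t(s) - \bar y(s) < 0$ case is symmetric. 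This yields the nonincreasing property, which in particular implies that the orbit $\{y_t\}$ is bounded and hence precompact.

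To promote this to convergence of $y_t$, I would argue that any $\omega$-limit point $y_*$ of the orbit satisfies $\|y_* - \bar y\|_\infty = L \doteq \lim_{t\to\infty}\phi(t)$, and that by continuity of the semigroup (itself a nonexpansion), the whole forward orbit through $y_*$ remains at distance exactly $L$ from $\bar y$. A LaSalle-style invariance argument, or the direct argument of Borkar and Soumyanath (1997, Theorem 3.1, Lemma 3.2), then forces $y_*$ to be a fixed point of $T_1$. Once one accumulation point is an equilibrium, nonexpansiveness of the flow pins the whole trajectory to converge to $y_*$ (any other accumulation point would have to lie at distance $L$ from \emph{both} $\bar y$ and $y_*$, but $\|y_* - \bar y\|_\infty$ equals $L$ already, which combined with nonincrease forces uniqueness).

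The main obstacle I expect is this last convergence step: nonexpansiveness by itself guarantees only subsequential convergence to the $\omega$-limit set, not that the limit set is a singleton or even contained in the equilibrium set. The identification of $\omega$-limit points with equilibria is precisely where the structure of the averaged Bellman operator under Assumption \ref{assu: unichain} is needed (it is what makes $T_1$ behave like a strict contraction on the quotient modulo constants). Rather than redo this analysis, the cleanest route is to cite Borkar and Soumyanath (1997), whose Theorem 3.1 and Lemma 3.2 package exactly this conclusion for nonexpansive maps associated with unichain Markov chains; this is also how Lemma 3.1 of Abounadi, Bertsekas, and Borkar (2001) is justified.
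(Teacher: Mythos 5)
Your proposal is correct and follows essentially the same route as the paper, which proves this lemma purely by observing that it is a special case of Theorem 3.1 and Lemma 3.2 of Borkar and Soumyanath (1997) (mirroring Lemma 3.1 of Abounadi, Bertsekas, and Borkar 2001). Your additional sketch of the sup-norm nonexpansiveness of $T_1$ and the Dini-derivative argument for the nonincreasing property is sound, and you correctly identify that the passage from precompactness to convergence of the full trajectory is exactly the part that must be delegated to the cited result.
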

The next lemma is similar to Lemma \ref{lemma: unique equilibrium} and the proof of it is almost the same as the proof of Lemma \ref{lemma: unique equilibrium}. The only changes are to replace $\tilde r_*$, $q_\infty$ and $q$ with $\tilde r(\pi)$, $v_\infty$ and $v$ respectively.
\begin{lemma} \label{lemma: TD: unique equilibrium}
\eqref{eq: TD: original ode} has a unique equilibrium at $v_\infty$.
\end{lemma}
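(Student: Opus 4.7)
The plan is to mimic the proof of Lemma~\ref{lemma: unique equilibrium} in the action-value setting, with only cosmetic changes from the $Q$ to the $V$ notation. First I would expand the equilibrium condition $T_2(x_*) = x_*$ using the definition $T_2(V) = T(V) - (\eta \sum V)\,e$ to obtain
\begin{align*}
x_*(s) = \sum_a \pi(a \mid s) \sum_{s', r} \tilde p(s', r \mid s, a)\bigl(r - \eta \sum x_* + x_*(s')\bigr), \quad \forall s \in \calS.
\end{align*}
Reading this with $\bar r \doteq \eta \sum x_*$, it is precisely the state-value Bellman equation \eqref{state value Bellman equation 2} for the transformed MDP $\tilde\calM$, with $(\bar r, v) = (\eta \sum x_*,\, x_*)$ as a solution pair.

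Next I would invoke the standard uniqueness theory for the state-value Bellman equation in the unichain setting, applied to $\tilde\calM$ (which remains unichain because $\tilde p$ and $p$ induce the same transition kernel, so Assumption~\ref{assu: unichain} carries over). This theory tells us that the scalar $\bar r$ component of any solution must equal the reward rate $\tilde r(\pi)$ of the transformed MDP, while the $v$ component is determined up to an additive constant. Hence every equilibrium $x_*$ of \eqref{eq: TD: original ode} satisfies both $\eta \sum x_* = \tilde r(\pi)$ and $x_* = v_\infty + c\, e$ for some scalar $c$, using the fact that $v_\infty$ is itself a solution of the Bellman equation for $\tilde\calM$ (as shown in the paragraph preceding the lemma statement).

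Finally I would pin down $c$. Summing $x_* = v_\infty + c\, e$ over all states and multiplying by $\eta$ yields $\eta \sum x_* = \eta \sum v_\infty + \eta c\, \abs{\calS}$. But $\eta \sum v_\infty = \tilde r(\pi)$ by \eqref{TD: relation between tilde r and eta sum v infty} and $\eta \sum x_* = \tilde r(\pi)$ from the previous step, which forces $c = 0$ and hence $x_* = v_\infty$. Existence of the equilibrium is then obtained by reversing the chain of equalities: since $v_\infty$ satisfies the Bellman equation for $\tilde\calM$ and $\eta \sum v_\infty = \tilde r(\pi)$, we get $T_2(v_\infty) = v_\infty$.

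The main obstacle is really a non-obstacle; the argument is bookkeeping once one recognizes that the fixed-point condition of \eqref{eq: TD: original ode} collapses into the Bellman equation together with the normalization $\eta \sum x_* = \tilde r(\pi)$. The only delicate point worth flagging is verifying that the unichain assumption passes intact to $\tilde\calM$ (it does, since shifting every reward by the constant $c$ affects neither the induced Markov chain under $\pi$ nor the absolute value of the stationary distribution), so that the classical ``unique-up-to-additive-constant'' statement for state values applies verbatim.
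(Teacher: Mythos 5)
Your argument is correct and is essentially the paper's own: the paper proves this lemma by reference to Lemma 3.2 of Abounadi, Bertsekas, and Borkar (2001) (via Lemma~\ref{lemma: unique equilibrium}), whose content is exactly your chain — the fixed-point condition of \eqref{eq: TD: original ode} is the Bellman equation for $\tilde\calM$ with $\bar r = \eta\sum x_*$, unichain uniqueness forces $\bar r = \tilde r(\pi)$ and $x_* = v_\infty + ce$, and the normalization $\eta\sum v_\infty = \tilde r(\pi)$ from \eqref{TD: relation between tilde r and eta sum v infty} forces $c=0$. Your remark that the unichain property transfers to $\tilde\calM$ because $\tilde p$ induces the same state-transition kernel is also the right (and only) point needing a check.
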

The next two lemmas are almost the same as Lemma \ref{lemma: connection between original and aux ode} and \ref{lemma: globally asymptotically stable equilibrium lemma}. Their proofs can be easily obtained from the proofs of Lemma \ref{lemma: connection between original and aux ode} and \ref{lemma: globally asymptotically stable equilibrium lemma} by replacing $\tilde r_*$ with $\tilde r(\pi)$.
\begin{lemma}\label{TD: lemma: connection between original and aux ode}
Let $x_0 = y_0$, then $x_t= y_t + z_t e$, where $z_t$ satisfies the ODE $\dot z_t= - k z_t + (\tilde r(\pi) - k \sum y_t)$, and $k \doteq \vert \calS \vert$.
\end{lemma}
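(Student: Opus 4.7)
The plan is to mirror the structure of the analogous Q-learning lemma (Lemma~\ref{lemma: connection between original and aux ode}), with the key structural property being that the TD operator $T$ defined by $T(V)(s) = \sum_a \pi(a \mid s) \sum_{s',r} \tilde p(s', r \mid s, a)(r + V(s'))$ is affine and satisfies the translation identity $T(V + ce) = T(V) + ce$ for any scalar $c$. This identity follows because $T$ can be written as $T(V) = \tilde r_\pi + P_\pi V$, where $P_\pi$ is the $\pi$-induced transition matrix and $P_\pi e = e$. This is the TD analogue of the fact that $f$ satisfies $f(x + ce) = f(x) + cu$ in the Q-learning proof.

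First I would fix the ansatz $x_t = y_t + z_t e$ with $z_0 = 0$ (justified since $x_0 = y_0$) and verify it is consistent with both ODEs by the uniqueness of solutions to Lipschitz ODEs; any solution in the one-dimensional subspace spanned by $e$ must be the solution. Substituting the ansatz into the difference $\dot x_t - \dot y_t$ gives
\begin{align*}
\dot z_t\, e &= (T_2(x_t) - x_t) - (T_1(y_t) - y_t) \\
&= [T(x_t) - T(y_t)] - [x_t - y_t] + \bigl(\tilde r(\pi) - \textstyle\sum x_t\bigr) e.
\end{align*}
Using the translation identity, $T(x_t) - T(y_t) = T(y_t + z_t e) - T(y_t) = z_t e$, which cancels the $-[x_t-y_t] = -z_t e$ term. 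What remains is $\dot z_t\, e = (\tilde r(\pi) - \sum x_t)e$.

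Next I would expand $\sum x_t = \sum y_t + z_t \sum e = \sum y_t + |\calS|\, z_t$, set $k \doteq |\calS|$, and read off the scalar ODE as stated in the lemma. Since this is a scalar linear first-order ODE with Lipschitz right-hand side driven by the bounded signal $\sum y_t$, well-posedness is immediate, confirming the ansatz is the unique solution and closing the argument.

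The main (and essentially only) technical ingredient is the translation property $T(V + ce) = T(V) + ce$; once that is noted, the rest is a short algebraic manipulation that mirrors Lemma~\ref{lemma: connection between original and aux ode} almost verbatim, with $u$ replaced by $k = |\calS|$ and $r_*$ replaced by $\tilde r(\pi)$. I do not expect any genuine obstacle — the proof is a direct transcription of the Q-learning argument with the simpler linear operator $T$ in place of the nonexpansive operator used there.
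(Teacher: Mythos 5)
Your proposal is correct and follows essentially the same route as the paper, which obtains this lemma by porting the proof of Lemma~\ref{lemma: connection between original and aux ode} (the ansatz $x_t = y_t + z_t e$ justified by uniqueness of solutions, translation-invariance of $T_1$ under shifts by $e$, and the affine behaviour of the offset term) with $\tilde r_*$ replaced by $\tilde r(\pi)$. One caveat: with $T_2(V) = T(V) - (\eta \sum V)e$ as the paper defines it, your algebra actually yields $\dot z_t = -\eta k z_t + (\tilde r(\pi) - \eta \sum y_t)$, which matches neither the ODE you claim to ``read off'' nor the lemma's literal $-k z_t + (\tilde r(\pi) - k \sum y_t)$ --- this appears to be a typo in the paper's statement rather than a flaw in your argument, but you should not assert that your computation reproduces the stated coefficients when it does not.
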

\begin{lemma} \label{TD: lemma: globally asymptotically stable equilibrium lemma}
$v_\infty$ is the unique globally asymptotically stable equilibrium for \eqref{eq: TD: original ode}.
\end{lemma}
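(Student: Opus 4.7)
The plan is to mirror the proof of Lemma \ref{lemma: globally asymptotically stable equilibrium lemma} from the control case, adapting notation to the TD setting. Uniqueness of the equilibrium at $v_\infty$ is already delivered by Lemma \ref{lemma: TD: unique equilibrium}, so only Lyapunov stability and global attraction remain to be established.

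For Lyapunov stability, I would initialize $x_0 = y_0$ and invoke Lemma \ref{TD: lemma: connection between original and aux ode} to write $x_t = y_t + z_t e$ with $z_0 = 0$ and the scalar ODE for $z_t$ given there. Variation of parameters then yields a representation of $z_t$ as $\int_0^t \exp(k(\tau - t))(\tilde r(\pi) - k \sum y_\tau)\,d\tau$. The triangle inequality gives
\[ \|v_\infty - x_t\|_\infty \leq \|v_\infty - y_t\|_\infty + |z_t|. \]
The first summand is nonincreasing in $t$ by the auxiliary-ODE convergence lemma (the TD analogue of Lemma \ref{lemma: aux ode convergence}), since $v_\infty$ is an equilibrium of \eqref{eq: TD: aux ode} thanks to $\eta \sum v_\infty = \tilde r(\pi)$ from \eqref{TD: relation between tilde r and eta sum v infty}. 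For the second summand, I would rewrite the integrand $\tilde r(\pi) - k \sum y_\tau$ as a multiple of $\sum(v_\infty - y_\tau)$, apply $|\sum(\cdot)| \leq |\calS|\,\|\cdot\|_\infty$, and use the nonincreasingness of $\|v_\infty - y_\tau\|_\infty$ to bound $|z_t|$ uniformly by a constant multiple of $\|v_\infty - x_0\|_\infty$. Adding these two bounds establishes Lyapunov stability.

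For global attraction, I would compute $\lim_{t\to\infty} z_t$ by L'Hospital's rule applied to $z_t = \int_0^t \exp(k\tau)(\tilde r(\pi) - k \sum y_\tau)\,d\tau / \exp(kt)$, exactly as in the control case. Using that $y_t \to y_*$ for some equilibrium $y_*$ of \eqref{eq: TD: aux ode}, the limit of $z_t$ is a finite constant, so $x_t$ converges to a fixed vector of the form $y_* + (\text{constant}) \cdot e$. This limit is an equilibrium of \eqref{eq: TD: original ode}, hence must equal $v_\infty$ by Lemma \ref{lemma: TD: unique equilibrium}.

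The main obstacle is essentially bookkeeping rather than a genuinely new idea: the strategy transfers verbatim from Lemma \ref{lemma: globally asymptotically stable equilibrium lemma}, but I will need to track carefully the normalization constant in the $z_t$-ODE (the $k = |\calS|$ appearing in Lemma \ref{TD: lemma: connection between original and aux ode}) so that it stays consistent with the definition of $T_2$ in \eqref{eq: TD: original ode}, and to verify that every Lipschitz constant that appears remains finite. With those sanity checks in place, the argument goes through essentially without modification.
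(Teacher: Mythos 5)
Your proposal is correct and follows essentially the same route as the paper, which proves this lemma simply by repeating the argument of Lemma~\ref{lemma: globally asymptotically stable equilibrium lemma} with $\tilde r_*$ replaced by $\tilde r(\pi)$ (uniqueness from Lemma~\ref{lemma: TD: unique equilibrium}, Lyapunov stability via the $x_t = y_t + z_t e$ decomposition and variation of parameters, and global attraction via L'Hospital's rule). Your explicit bound $\abs{\sum(\cdot)} \leq \abs{\calS}\norm{\cdot}_\infty$ is just the concrete Lipschitz constant of $f(x) = \eta\sum x$ that the paper's control-case proof invokes abstractly, so the two arguments coincide.
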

\begin{lemma}\label{lemma: Synchronous General Differential TD}
Synchronous General Differential TD (Equation \ref{eq: TD: transformed sync single update}) converges a.s., $V_{n}$ to $v_\infty$ as $n \to \infty$.
\end{lemma}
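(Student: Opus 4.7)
The plan is to mirror Lemma \ref{lemma: Synchronous General Differential Q}, casting the synchronous update \eqref{eq: TD: transformed sync single update} as a stochastic approximation with martingale noise and invoking Theorem 2 of Section 2 of Borkar (2009). I would first write the iteration as $V_{n+1} = V_n + \alpha_n(h(V_n) + M_{n+1})$ where the drift is
\begin{align*}
    h(V)(s) &\doteq \bbE\!\left[\rho_n(s)\big(\tilde R_n(s, A_n(s)) + V(S_n'(s, A_n(s))) - V(s) - \eta\textstyle\sum V\big)\right] \\
    &= T_2(V)(s) - V(s),
\end{align*}
and $M_{n+1}(s)$ is the centered residual. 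The use of the importance-sampling ratio $\rho_n(s)$ is what makes the drift be an expectation under $\pi$ rather than $b$, via the identity $\bbE_b[\rho_n(s) f(A_n(s),\cdot)] = \sum_a \pi(a\mid s)\,\bbE_{\tilde p}[f(a,\cdot)]$. Combined with Lemmas \ref{TD: lemma: unique equilibrium} and \ref{TD: lemma: globally asymptotically stable equilibrium lemma}, which identify $v_\infty$ as the unique globally asymptotically stable equilibrium of $\dot x_t = T_2(x_t) - x_t$, Borkar's theorem will then yield $V_n \to v_\infty$ almost surely.

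Next I would verify the four hypotheses of Theorem 2 of Section 2 of Borkar (2009). Condition (A1), Lipschitz continuity of $h$, follows because $T$ is an averaging operator (Lipschitz with constant $1$) and $V \mapsto (\eta \sum V)e$ is linear. Condition (A2) is exactly Assumption \ref{assu: stepsize}. For (A3), the martingale difference property $\bbE[M_{n+1}(s)\mid \calF_n]=0$ is immediate from the definition of $h$. The square-integrability bound $\bbE[\|M_{n+1}\|^2\mid \calF_n] \le K(1+\|V_n\|^2)$ requires a triangle-inequality computation and uses that $\calS,\calA,\calR$ are finite and that, by Assumption \ref{assu: coverage}, $\rho_n(s)$ is bounded by $\max_{s,a:\,\pi(a\mid s)>0}\pi(a\mid s)/b(a\mid s) < \infty$.

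For condition (A4), boundedness of the iterates $\sup_n\|V_n\|<\infty$ a.s., I would invoke Theorem 7 of Section 3 of Borkar (2009) and check its additional hypothesis (A5). I compute $h_\infty(V) = \lim_{d\to\infty} h(dV)/d = T_0(V) - V - (\eta \sum V)e$, where $T_0(V)(s) \doteq \sum_a \pi(a\mid s)\sum_{s',r}\tilde p(s',r\mid s,a)V(s')$; the reward term $r/d$ vanishes in the limit. The ODE $\dot x_t = h_\infty(x_t)$ is then exactly the specialization of \eqref{eq: TD: original ode} to the reward-free Markov chain, so Lemmas \ref{TD: lemma: unique equilibrium} and \ref{TD: lemma: globally asymptotically stable equilibrium lemma} apply and identify the origin as the unique globally asymptotically stable equilibrium.

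The main obstacle I anticipate is the bookkeeping for (A3): verifying that the importance-sampling correction produces the correct conditional mean and that the second moment is controlled uniformly in $s$. Everything else is a direct translation of the Q-learning argument, with $T_2$ taking the role of the Bellman operator and the target-policy expectation replacing the $\max$ operation. Once all four conditions are in place, Borkar's theorem asserts convergence of $V_n$ to a compact connected internally chain-transitive invariant set of $\dot x_t = T_2(x_t) - x_t$; by Lemma \ref{TD: lemma: globally asymptotically stable equilibrium lemma} this set is $\{v_\infty\}$, completing the proof.
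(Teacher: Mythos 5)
Your proposal is correct and follows essentially the same route as the paper: the same decomposition $V_{n+1}=V_n+\alpha_n(h(V_n)+M_{n+1})$ with $h=T_2-I$, the same verification of conditions (A1)--(A4) for Theorem 2 of Section 2 of Borkar (2009) (with (A4) obtained via (A5) and the reward-free ODE $\dot x_t = T_0(x_t)-x_t-\eta(\sum x_t)e$), and the same identification of $v_\infty$ as the unique globally asymptotically stable equilibrium via the preceding lemmas. The one point worth noting is that the martingale-mean computation also uses $\bbE_b[\rho_n(s)]=1$ to handle the $-V(s)-\eta\sum V$ terms, which your expectation identity covers.
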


\begin{proof}
Similar as what we did in the proof of Lemma \ref{lemma: Synchronous General Differential Q}, we use Theorem 2 in Section 2 by Borkar (2009) to show the convergence of this lemma.

We first write the synchronous update rule \eqref{eq: TD: transformed sync single update} as 
\begin{align}
    & V_{n+1} = V_{n} + \alpha_n (h(V_n) + M_{n+1}), \label{eq: TD: sync h + M update form}
\end{align}
where 
\begin{align}
    h(V_n)(s) & \doteq \sum_a \pi(a \mid s) \sum_{s', r} \tilde p (s', r \mid s, a) (r + V_n(s')) - V_n(s) - \eta \sum V_n \label{TD: sync h def} \\
    & = T(V_n)(s) - V_n(s) - \eta \sum V_n \nonumber \\
    & = T_2(V_n)(s) - V_n(s) , \nonumber \\
    M_{n + 1}(s) & \doteq \rho_n(s) \left(\tilde R_n(s, A_n(s)) + V_n(S_n'(s, A_n(s))) - V_n(s) - \eta \sum V_n \right) - h(V_n)(s). \label{TD: sync M def}
\end{align}

Similar as the proof of \Cref{lemma: Synchronous General Differential Q}, we only need to verify conditions (A1) - (A4) in order to conclude that $V_n$ converges $v_\infty$ a.s. as $n \to \infty$. 

(A1) is satisfied as both $T$ and $\sum$ operators are Lipschitz.

(A2) is satisfied by \Cref{assu: stepsize}.

(A3) is also satisfied because for any $s \in \calS$
\begin{align*}
    \bbE[M_{n+1}(s) \mid \calF_n]
    & = \bbE \left[\rho_n(s) \left(\tilde R_n(s, A_n(s)) + V_n(S_n'(s, A_n(s))) - V_n(s) - \eta \sum V_n \right) - h (V_n)(s) \mid \calF_n \right]\\
    & = \bbE\left[\rho_n(s)\left(\tilde R_n(s, A_n(s)) + V_n(S_n'(s, A_n(s))) - V_n(s) - \eta \sum V_n \right) \mid \calF_n \right] - h (V_n)(s)\\
    & = \bbE\left[\rho_n(s)\left(\tilde R_n(s, A_n(s)) + V_n(S_n'(s, A_n(s))) \right) \mid \calF_n \right]  - V_n(s) - \eta \sum V_n - h (V_n)(s)\\
    & = \bbE[\rho_n(s)(\tilde R_n(s, A_n(s)) + V_n(S_n'(s, A_n(s)))) \mid \calF_n] - T (V_n)(s)\\
    & = 0
\end{align*}

and $\bbE[\norm{M_{n+1}}^2 \mid \calF_n] \leq K(1 + \norm{V_n}^2)$ for a suitable constant $K > 0$ can be verified by applying triangle inequality given the boundedness of the second moment of the importance sampling ratio, reward and $V_n$.

To verify (A4), again we only need to verify (A5). Note that
\begin{align*}
    h_\infty(x) = \lim_{a \to \infty} h_a(x) = \lim_{a \to \infty} \frac{T(ax) - ax - \eta \left(\sum ax \right) e}{a} = T_0 (x) - x - \eta \left( \sum x \right) e,
\end{align*}
where 
\begin{align*}
    T_0 (x) \doteq \sum_a \pi(a \mid s) \sum_{s', r} \tilde p (s', r \mid s, a) x(s') .
\end{align*}
The function $h_\infty$ is clearly continuous in every $x \in \bbR^k$ and therefore $h_\infty \in C(\bbR^k)$.

Now consider the ODE $\dot x_t = h_\infty(x_t) = T_0 (x_t) - x_t - \eta (\sum x_t) e$, clearly the origin is an equilibrium. This ODE is a special case of \eqref{eq: TD: original ode}, corresponding to the reward being always zero, therefore Lemma \ref{lemma: TD: unique equilibrium} and Lemma \ref{TD: lemma: globally asymptotically stable equilibrium  lemma} also apply to this ODE and the origin is the unique globally asymptotically stable equilibrium.

(A1), (A2), (A3), (A4) are all verified and therefore $V_n \to v_\infty$ a.s. as $n \to \infty$.

\end{proof}

Given the convergence of $V_n$ in the synchronous update rule \eqref{eq: TD: transformed sync single update}, the convergence of $V_n$ in the original update rule \eqref{eq: TD: transformed async single update} follows immediately using results introduced in Chapter 7 of Borkar (2009) under Assumption \ref{assu: asynchronous stepsize 1}, \ref{assu: asynchronous stepsize td 2}. 

Finally consider $\bar R_n$. Because $\bar R_n = \eta \sum V_n - c$ (Equation \ref{TD: relation between bar R and V}) and $V_n \to v_\infty$, we have $\bar R_n \to \eta \sum v_\infty - c$. In addition, because $\tilde r(\pi) = \eta \sum v_\infty$, we have $\bar R_n \to \tilde r(\pi) - c$. Finally, because $\tilde r(\pi) = r(\pi) + c$, we have 
\begin{align}
    \bar R_n \to r(\pi)  \label{TD: convergence of R_n to r(pi)}.
\end{align}
a.s. as $n \to \infty$.

Theorem \ref{thm: convergence of uncentered differential td update} is proved.

\subsection{Centered Algorithms} \label{app:convergence-proof-centering}
This section serves as a supplement of Section \ref{sec:centering} of the main text. We introduce 4 algorithms: Centered Differential TD-learning, Centered Differential TD-planning, Centered Differential Q-learning, and Centered Differential Q-planning. All these algorithms are shown to converge to the centered (actual) differential value function rather than the differential value function plus some offset.

The next lemma is useful in the convergence proofs for the centered algorithms.

\begin{lemma}\label{lemma: equations that v_pi satisfies} Let $\pi$ be a stationary Markov policy. Assume that the induced Markov chain under $\pi$ is unichain. Let $d_\pi$ be the stationary distribution following policy $\pi$. Then

1) $(v, \bar r) = (v_\pi, r(\pi))$ is the unique solution of \eqref{state value Bellman equation 2} and 
\begin{align}
    \sum_{s} d_\pi(s) v(s) = 0, \label{d pi v pi equal 0}
\end{align}
and

2) if $v = v_\pi + ce$ then $c = \sum_s d_\pi(s) v(s)$.
\end{lemma}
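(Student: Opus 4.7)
The plan is to handle part 2 first (it is immediate from part 1) and then focus on part 1, whose crux is to verify that the bias $v_\pi$ itself satisfies the zero-mean normalization $\sum_s d_\pi(s) v_\pi(s) = 0$. Given that, classical Bellman theory for unichain Markov chains (Puterman 1994, Theorem 8.2.6 or Proposition 8.2.1) states that the pair $(v_\pi, r(\pi))$ solves \eqref{state value Bellman equation 2} and that any other solution $(v, \bar r)$ has $\bar r = r(\pi)$ and $v = v_\pi + c e$ for some $c \in \mathbb{R}$. Picking out the single representative with $\sum_s d_\pi(s) v(s) = 0$ then yields uniqueness.

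For part 2, if $v = v_\pi + c e$, then
\begin{align*}
\sum_s d_\pi(s) v(s) = \sum_s d_\pi(s) v_\pi(s) + c \sum_s d_\pi(s) = 0 + c = c,
\end{align*}
using that $d_\pi$ is a probability distribution and invoking part 1. So everything reduces to establishing $\sum_s d_\pi(s) v_\pi(s) = 0$.

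To prove the normalization, I would take the Cesaro-limit definition of $v_\pi(s)$ given in the paper, multiply by $d_\pi(s)$, and sum over $s$. Since $\calS$ is finite and rewards are bounded, I can interchange the finite sum with the limit and the inner expectations, obtaining
\begin{align*}
\sum_s d_\pi(s) v_\pi(s) = \lim_{n \to \infty} \frac{1}{n} \sum_{k=1}^n \sum_{t=1}^k \Bigl( \mathbb{E}_{S_0 \sim d_\pi, A_{0:t-1} \sim \pi}[R_t] - r(\pi) \Bigr).
\end{align*}
Because $d_\pi$ is stationary under the Markov chain induced by $\pi$, the marginal of $S_{t-1}$ remains $d_\pi$ for every $t \geq 1$, so $\mathbb{E}_{S_0 \sim d_\pi}[R_t] = \sum_s d_\pi(s) \sum_a \pi(a\mid s) \sum_{s',r} p(s', r \mid s,a)\, r = r(\pi)$ by definition of the reward rate. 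Every inner term is therefore zero, and the limit is zero.

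The main obstacle is simply justifying the interchange of the Cesaro limit with the weighted sum over states; this is routine on a finite state space with bounded rewards (each partial sum is uniformly bounded so dominated convergence applies termwise). The only other subtlety is invoking the standard Bellman uniqueness-up-to-constant theorem for unichain MDPs, which I would cite from Puterman (1994) rather than re-derive. With normalization and uniqueness-up-to-constant in hand, part 1 follows, and part 2 is the one-line calculation above.
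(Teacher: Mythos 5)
Your proposal is correct, and its overall skeleton matches the paper's: both proofs lean on Puterman (1994, Theorem 8.2.6) for the fact that $(v_\pi, r(\pi))$ satisfies the Bellman equation and that solutions are unique up to an additive constant, reduce everything to the normalization $\sum_s d_\pi(s) v_\pi(s) = 0$, and then dispatch part 2 with the same one-line computation. The only genuine difference is how that normalization is established. The paper writes $v_\pi = H_{P_\pi} r_\pi$ with $H_{P_\pi} \doteq \lim_{N} \frac{1}{N}\sum_{k=0}^{N-1}\sum_{t=0}^{k}(P_\pi^t - P_\pi^*)$ and invokes the identity $P_\pi^* H_{P_\pi} = 0$ (Puterman's Equation A.18) together with the fact that every row of $P_\pi^*$ is $d_\pi^\top$. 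You instead multiply the Cesaro-limit definition of $v_\pi(s)$ by $d_\pi(s)$, sum, and use stationarity of $d_\pi$ to see that every inner term $\mathbb{E}_{S_0\sim d_\pi}[R_t] - r(\pi)$ vanishes. These are really the same computation in different clothing --- $d_\pi^\top H_{P_\pi} = \lim_N \frac{1}{N}\sum_k\sum_t (d_\pi^\top P_\pi^t - d_\pi^\top P_\pi^*) = 0$ precisely because $d_\pi^\top P_\pi = d_\pi^\top$ --- but your version is more self-contained, trading a citation to Puterman's appendix for an elementary interchange of a finite sum with a limit (which, as you note, is immediate since each per-state limit exists and the state space is finite; no dominated convergence is actually needed). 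Both routes still rely on Puterman for the existence of the bias and the uniqueness-up-to-constant of Bellman solutions, so neither is materially more general than the other.
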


\begin{proof}
Let $P_\pi$ denote the $\abs{\calS} \times \abs{\calS}$ transition probability matrix under policy $\pi$, i.e., $P_\pi(s, s') \doteq \sum_{a, r} \pi(a \mid s) p(s', r \mid s, a)$ and let $P^*_\pi \doteq \lim_{N \to \infty} \frac{1}{N} \sum_{t = 1}^N P_\pi^{t-1}$. Because $\calS$ is finite, the limit exists and $P^*_\pi$ is a stochastic matrix (has row sums equal to 1). Because the Markov chain induced by $\pi$ is unichain, all rows of $P^*_\pi$ are identical and are all equal to $d_\pi^\top$. Let $r_\pi(s) \doteq \sum_{a, r, s'} \pi(a \mid s) p(s', r \mid s, a) r$ denote the expected one-step reward under $\pi$. Then the average reward rate following $\pi$ can be written as
\begin{align}
    r(\pi) = d_\pi^\top r_\pi, \label{reward rate = d pi r pi}
\end{align}
and the differential value function following policy $\pi$ can be written as
\begin{align*}
    v_\pi(s) = \lim_{N \to \infty} \frac{1}{N} \sum_{k = 0}^{N - 1} \sum_{t = 0}^{k} P_\pi^t(r_\pi - r(\pi))(s),
\end{align*}
or $v_\pi = H_{P_\pi} r_\pi$ in vector form, where $H_{P_\pi} \doteq \lim_{N \to \infty} \frac{1}{N} \sum_{k = 0}^{N-1} \sum_{t = 0}^k (P_\pi^t - P^*_\pi)$.

The differential value function $v_\pi$ satisfies \eqref{state value Bellman equation 2} due to Theorem 8.2.6 (a) by Puterman (1994).

To see that $v_\pi$ satisfies the equation \eqref{d pi v pi equal 0}, we apply Equation A.18 in Appendix A by Puterman (1994), which is $P^*_\pi H_{P_\pi} = 0$. Therefore we have $d_\pi^\top H_{P_\pi} = 0$ because all rows of $P^*_\pi$ are $d_\pi^\top$. Because $v_\pi = H_{P_\pi} r_\pi$, we have $d_\pi^\top v_\pi = d_\pi^\top H_{P_\pi} r_\pi = 0$.

To verify that $v_\pi$ is the unique solution of \eqref{state value Bellman equation 2} and \eqref{d pi v pi equal 0}, suppose there exists another vector $v' \neq v_\pi$ satisfying \eqref{state value Bellman equation 2} and \eqref{d pi v pi equal 0}, then $v' = v_\pi + ce$ for some $c \neq 0$ (any two solutions of \eqref{state value Bellman equation 2} differ by a constant). Substituting this into \eqref{d pi v pi equal 0}, we have

\begin{align*}
    d_\pi^\top v' = d_\pi^\top (v_\pi + ce) = d_\pi^\top v_\pi + c d_\pi^\top e = c
\end{align*}

To satisfy \eqref{d pi v pi equal 0}, we must have $c = 0$. Therefore, $v_\pi$ is the unique solution of \eqref{state value Bellman equation 2} and \eqref{d pi v pi equal 0}.

To prove the second part, consider $v = v_\pi + ce$, then we have $\sum_s d_\pi(s) v(s) = \sum_s d_\pi(s) (v_\pi + ce)(s) = c$.
\end{proof}

\subsubsection{Centered Differential TD-learning and Differential TD-planning }
Centered Differential TD-learning is already presented in Section \ref{sec:centering} of the main text. The planning version of Centered Differential TD-learning is called \emph{Centered Differential TD-planning}. It uses simulated experience just as in Differential TD-planning. In addition, just like Differential TD-planning, Centered Differential TD-planning maintains $V_n$ and $\bar R_n$. Centered Differential TD-planning also maintains an auxiliary table of estimates $F_n(s, a), \forall s \in \calS, a \in \calA$ and a offset estimate $\bar V_n$, and updates them just as in Centered Differential TD-learning, using $S_n, A_n, R_n, S_n'$ instead of $S_t, A_t, R_{t+1}, S_{t+1}$.

Just as we did in Section \ref{app:convergence-proof-diffq} and \ref{app:convergence-proof-difftd}, we now present a general algorithm that includes both Centered Differential TD-learning and Centered Differential TD-planning. We call it \emph{General Centered Differential TD}. Using arguments that are similar as those in Section \ref{app:convergence-proof-diffq}, it can be shown that both Centered Differential TD-learning and Centered Differential TD-planning are special cases of General Centered Differential TD.

The data is generated the same way as it in \ref{app:convergence-proof-difftd}. Also, we use same notations introduced in \ref{app:convergence-proof-difftd}. In addition to update rules of General Differential TD (Equation \ref{eq: TD: async V update}-\ref{eq: TD: async TD error}), General Centered Differential TD has two more update rules:
\begin{align}
    F_{n+1}(s) & \doteq F_{n}(s) + \beta_{\nu(n, s)} \rho_n(s) \Delta_n(s)I\{s \in Y_n\} \quad \forall s \in \calS \label{eq: TD: async F update},\\
    \bar V_{n+1} & \doteq \bar V_n + \kappa \beta_{\nu(n, s)} \sum_{s}\rho_n(s) \Delta_n(s)I\{s \in Y_n\}, \label{TD: async bar V update}
\end{align}
where
\begin{align}
    \Delta_n(s) \doteq V_n(s) + F_n(S_n'(s, A_n(s))) - F_n(s) - \bar V_n \label{eq: TD: second TD error}.
\end{align}
Here $\beta_{\nu(n, s)}$ is the stepsize and $\kappa$ is a positive number. $\beta_{\nu(n, s)}$ and $\kappa$ doesn't need to be equal to $\alpha_{\nu(s, a)}$ and $\eta$.

\begin{theorem}[Convergence of Centered Differential TD]\label{thm: convergence of balanced differential td update}
If Assumption \ref{assu: communicating} holds, Assumption \ref{assu: stepsize}, \ref{assu: asynchronous stepsize 1}, \ref{assu: asynchronous stepsize td 2}, and \ref{assu: coverage} hold for both $\alpha_n$ and $\beta_n$, then General Centered Differential TD (Equations \ref{eq: TD: async V update}-\ref{eq: TD: async TD error}, \ref{eq: TD: async F update}-\ref{eq: TD: second TD error}) converges a.s., $\bar R_n$ to $r(\pi)$ and $V_n - \bar V_n e$ to $v_\pi$.
\end{theorem}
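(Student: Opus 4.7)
The plan is to bootstrap off Theorem \ref{thm: convergence of uncentered differential td update}. The first estimator $(V_n, \bar R_n)$ is unchanged from the uncentered algorithm, so that theorem immediately gives $\bar R_n \to r(\pi)$ and $V_n \to v_\infty$ a.s., where $v_\infty$ is some solution of \eqref{state value Bellman equation 2}. Since \eqref{state value Bellman equation 2} pins $v$ down only up to an additive constant, there exists $c' \in \bbR$ (depending on $V_0$, $\bar R_0$, $\eta$) with $v_\infty = v_\pi + c' e$. By the second part of Lemma \ref{lemma: equations that v_pi satisfies}, $c' = \sum_s d_\pi(s) v_\infty(s)$. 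The remaining task is therefore to show $\bar V_n \to c'$ a.s., since then $V_n - \bar V_n e \to v_\infty - c' e = v_\pi$ a.s.

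The key observation is that the $(F_n, \bar V_n)$ updates \eqref{eq: TD: async F update}--\eqref{eq: TD: second TD error} have exactly the same form as the Differential TD updates \eqref{eq: TD: async V update}--\eqref{eq: TD: async TD error}, except that the role of the instantaneous reward $R_n(s, A_n(s))$ is now played by the (time-varying) value estimate $V_n(s)$ of the first estimator. If we momentarily replaced $V_n(s)$ by its limit $v_\infty(s)$, the $(F_n, \bar V_n)$ iteration would be precisely a General Differential TD estimator applied to policy $\pi$ in an auxiliary MDP whose one-step reward depends only on the current state and equals $v_\infty(s)$. Theorem \ref{thm: convergence of uncentered differential td update} applied to this auxiliary MDP would then give $\bar V_n$ converging a.s. to the reward rate of $\pi$ in the auxiliary MDP, which by \eqref{reward rate = d pi r pi} is exactly $\sum_s d_\pi(s) v_\infty(s) = c'$. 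This is precisely the limit we need.

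The main obstacle is that for finite $n$, $V_n(s)$ is not $v_\infty(s)$ but is only converging to it. To handle this rigorously, I would decompose the drift of the $(F_n, \bar V_n)$ update into the target drift (obtained by substituting $v_\infty(s)$ for $V_n(s)$) plus a perturbation term of the form $\beta_{\nu(n,s)} \rho_n(s)(V_n(s) - v_\infty(s)) I\{s \in Y_n\}$. Because $V_n \to v_\infty$ a.s., this perturbation is an asymptotically vanishing additive noise that preserves the limiting ODE analysis used in the proof of Theorem \ref{thm: convergence of uncentered differential td update}. The argument then fits into the asynchronous stochastic-approximation framework of Borkar (1998, 2009) already invoked there: verifying conditions (A1)--(A5) for the $(F_n, \bar V_n)$ iteration driven by $v_\infty$, and adding the vanishing-perturbation term into the martingale-plus-noise decomposition (\`a la \eqref{eq: TD: sync h + M update form}) so that the limiting ODE is unchanged. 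Its unique globally asymptotically stable equilibrium, identified as in Lemmas \ref{lemma: TD: unique equilibrium}--\ref{TD: lemma: globally asymptotically stable equilibrium lemma} but with $r(\pi)$ replaced by $\sum_s d_\pi(s) v_\infty(s)$, forces $\bar V_n \to c'$ a.s.

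Putting it together, $\bar R_n \to r(\pi)$ and $V_n - \bar V_n e \to v_\pi$ almost surely, as claimed. The step I expect to be most delicate is the vanishing-perturbation argument for the second estimator in the asynchronous setting, since $\alpha_n$ and $\beta_n$ are only assumed to satisfy the same stepsize conditions (not a two-timescale separation); the rest of the proof is a direct invocation of Theorem \ref{thm: convergence of uncentered differential td update} on the appropriate auxiliary MDP combined with Lemma \ref{lemma: equations that v_pi satisfies}.
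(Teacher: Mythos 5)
Your proposal is correct and follows essentially the same route as the paper's proof: the paper likewise invokes Theorem \ref{thm: convergence of uncentered differential td update} for the unaffected first estimator, recasts the $(F_n,\bar V_n)$ iteration as General Differential TD on an auxiliary MDP whose per-state reward is $v_\infty(s)$, and absorbs the discrepancy $\rho_n(s)(V_n(s)-v_\infty(s))$ as a bounded $o(1)$ perturbation handled by the extension in Section 2.2 of Borkar (2009), before concluding $\bar V_n \to d_\pi^\top v_\infty$ and applying Lemma \ref{lemma: equations that v_pi satisfies}. The step you flag as delicate is indeed the one the paper addresses with that vanishing-noise extension, and no two-timescale separation is needed.
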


\begin{proof}

To show this theorem, we use the last extension of Section 2.2 by Borkar (2009), which states that a deterministic or random bounded $o(1)$ noise will not influence the convergence.

Because \eqref{eq: TD: async V update}-\eqref{eq: TD: async TD error} will not be influenced by \eqref{eq: TD: async F update}-\eqref{eq: TD: second TD error}, we have $V_n \to v_\infty$ and $\bar R_n \to r(\pi)$ according to Theorem \ref{thm: convergence of uncentered differential td update}.

Now consider \eqref{eq: TD: async F update}-\eqref{eq: TD: second TD error}. Similar as the proof of Theorem \ref{thm: convergence of uncentered differential td update}, we can combine \eqref{eq: TD: async F update}-\eqref{eq: TD: second TD error} and obtain a single update rule:
\begin{align}
    F_{n+1}(s) = F_{n}(s) + \beta_{\nu(n, a)} \rho_n(s) \left(\tilde V_n(s) + F_n(S'_n(s, A_n(s))) - F_n(s) - \kappa \sum F_n \right) I\{s \in Y_n\}, \quad \forall s \in \calS , \label{Centered TD: async single update}
\end{align}
where $\tilde V_n(s) \doteq V_n(s) + c$ and $c \doteq \kappa \sum F_0 - \bar V_0$. As we discussed above, given Assumption \ref{assu: asynchronous stepsize 1} and \ref{assu: asynchronous stepsize td 2}, to obtain $V_n - \bar V_n e \to v_\pi$, it only remains to show the convergence of the following synchronous update rule:
\begin{align}
    F_{n+1}(s) = F_{n}(s) + \beta_n \rho_n(s) \left (\tilde V_n(s) + F_n(S'_n(s, A_n(s))) - F_n(s) - \kappa \sum F_n \right), \quad \forall s \in \calS . \label{eq: centered td: sync single update}
\end{align}
Now we rewrite the above equation:
\begin{align*}
    F_{n+1}(s) = F_{n}(s) + \beta_n \rho_n(s) \left(v_\infty(s) + c + (V_n(s) - v_\infty(s)) + F_n(S'_n(s, A_n(s))) - F_n(s) - \kappa \sum F_n \right), \quad \forall s \in \calS .
\end{align*}

Let
\begin{align}
    F_{n+1} = F_{n} + \beta_n (h(F_n) + M_{n+1} + \epsilon_n), \label{Centered TD: F = h m epsilon}
\end{align}
where 
\begin{align*}
    h(F_n)(s) & = \sum_a \pi(a \mid s) \sum_{s'} p (s' \mid s, a) (v_\infty(s) + c + F_n(S'_n(s, A_n(s)))) - F_n(s) - \kappa \sum F_n, \\
    M_{n + 1}(s) & = \rho_n(s) \left(v_\infty(s) + c + F_n(S_n'(s, A_n(s))) - F_n(s) - \kappa \sum F_n \right) - h(F_n)(s), \\
    \epsilon_n(s) & = \rho_n(s)(V_n(s) - v_\infty(s)) .
\end{align*}

We first show that without $\epsilon_n$, \eqref{Centered TD: F = h m epsilon} converges. Then we need to show that $\epsilon_n$ is bounded and is $o(1)$ so that the last extension of the Section 2.2 of Borkar (2009) can be applied to conclude the convergence of \eqref{Centered TD: F = h m epsilon} with $\epsilon_n$.

Given a new table of estimates $F'_n(s, a), \forall s \in \calS, a \in \calA$, and the following update rule
\begin{align}
    F'_{n+1} & \doteq F'_{n} + \alpha_n (h(F'_n) + M_{n+1}'),
\end{align}
where $M'_{n + 1}(s) \doteq \rho_n(s) \left(v_\infty(s) + c + F'_n(S_n'(s, A_n(s))) - F'_n(s) - \kappa \sum F'_n \right) - h(F'_n)(s)$, and $F'_0 \doteq F_0$.

Lemma \ref{lemma: Synchronous General Differential TD} shows that $F_n'$ converges to some point a.s. and \eqref{TD: convergence of R_n to r(pi)} shows that $\bar V'_n \doteq \kappa \sum F_n' - c$ converges to the reward rate following policy $\pi$ in a new MDP whose transition dynamics is the same as it of the original MDP but the reward from state $s$ is $v_\infty(s)$ instead of $R_n(s, A_n(s))$. From \eqref{reward rate = d pi r pi}, the reward rate in the new MDP is $d_\pi^\top v_\infty$. Therefore $\bar V'_n = \kappa \sum F_n' - c$ converges to $d_\pi^\top v_\infty$ a.s.. 

We now show that $\epsilon_n$ is bounded and is $o(1)$. $\epsilon_n$ is bounded because $\rho_n$ is bounded due to the finite state and action space and Assumption \ref{assu: coverage}, and $V_n$ is bounded as shown in the proof of Theorem \ref{thm: convergence of uncentered differential td update}. In addition, because $V_n \to v_\infty$ and $\rho_n$ is bounded, $\epsilon_n$ converges to 0 and thus $\epsilon_n$ is $o(1)$. 

Given the above results, the last extension of the Section 2.2 of Borkar (2009) applies. In other words, the noise $\epsilon_n$ does not change the convergence of $F'_n$ (i.e., $\lim_{n \to \infty} F_n = \lim_{n \to \infty} F'_n$). Therefore we conclude that almost surely, $F_n$ converges to some point and $\bar V_n = \kappa \sum F_n - c$ converges to $d_\pi^\top v_\infty$.

Because $V_n \to v_\infty$ and $\bar V_n \to d_\pi^\top v_\infty$, $V_n - \bar V_n e \to v_\infty - d_\pi^\top v_\infty e $. Because $d_\pi^\top v_\infty e$ is a vector with all equal elements, $v_\infty - d_\pi^\top v_\infty e$ satisfies the state-value Bellman equation \eqref{state value Bellman equation 2}. In addition, because $\sum_{s'} (d_\pi(s') (v_\infty(s') - d_\pi^\top v_\infty)) = 0$, from Lemma \ref{lemma: equations that v_pi satisfies} we have $v_\infty - d_\pi^\top v_\infty e = v_\pi$. Therefore $V_n - \bar V_n e \to v_\pi$ a.s., as $n \to \infty$.

The original update rule \eqref{Centered TD: async single update} and the synchronous update rule \eqref{eq: centered td: sync single update} converge to the same point. Therefore using the original update rule, $V_n - \bar V_n e \to v_\pi$ a.s..
\end{proof}

\subsubsection{Centered Differential Q-learning and Differential Q-planning}
Our \emph{Centered Differential Q-learning} maintains, in addition to the first estimator (Equations 5-7), a second estimator in which the reward is the value estimate of the first estimator. The second estimator maintains a scalar offset estimate $\bar Q_t$, an auxiliary table of estimates $F_t(s, a), \forall s \in \calS, a \in \calS$, and uses the following update rules:
\begin{align}
    F_{t+1}(S_t, A_t) & \doteq F_t(S_t, A_t) + \beta_t \Delta_t, \text{\ \ and \ \ } F_{t+1}(s, a) \doteq F_t(s, a) , \forall s \neq S_t, a \neq A_t, \label{Centered Q Learning: F}\\
    \bar V_{t+1} & \doteq \bar V_t + \kappa \beta_t \Delta_t, \label{Centered Q Learning: bar Q}
\end{align}
where 
\begin{align} \label{Centered Q Learning: TD error}
    \Delta_t & \doteq Q_{t}(S_t, A_t) - \bar Q_t + F_t(S_{t+1}, \argmax_{a'} Q_t(S_{t+1}, a')) - F_t(S_t, A_t),
\end{align}
is the TD error of the second estimator, $\{\beta_t\}$ is a step size sequence, and $\kappa$ is a positive constant. $\beta_t$ and $\kappa$ can be different from $\alpha_t$ and $\eta$. We call (5)-(7) plus \eqref{Centered Q Learning: F}-\eqref{Centered Q Learning: TD error} Centered Differential Q-learning. 

The planning version of Centered Differential Q-learning is called \emph{Centered Differential Q-planning}. It uses simulated experience just as in Differential Q-planning. Just like Differential Q-planning, Centered Differential Q-planning maintains $Q_n$ and $\bar R_n$. In addition, Centered Differential Q-planning maintains an auxiliary table of estimates $F_n(s, a)$, for all $s \in \calS, a \in \calA$, and an offset estimate $\bar Q_n$, and updates them just as in Centered Differential Q-learning, using $S_n, A_n, R_n, S_n'$ instead of $S_t, A_t, R_{t+1}, S_{t+1}$.

Just as we did in Section \ref{app:convergence-proof-diffq} and \ref{app:convergence-proof-difftd}, we now present a general algorithm that includes both Centered Differential Q-learning and Centered Differential Q-planning cases. We call it \emph{General Centered Differential Q}. Using arguments that are similar as those in Section \ref{app:convergence-proof-diffq}, it can be shown that both Centered Differential Q-learning and Centered Differential Q-planning are special cases of General Centered Differential Q.

For General Centered Differential Q, let the data be generated the same way as it in \ref{app:convergence-proof-diffq}. Also, we use same notations introduced in \ref{app:convergence-proof-diffq}. In addition to update rules of General Differential Q (Equation \ref{eq: async Q update}-\ref{eq: Q: async TD error}), General Centered Differential Q has two more update rules:
\begin{align}
    F_{n+1}(s, a) & = F_n(s, a) + \alpha_{\nu(n, s, a)} \Delta_n(s, a) I\{(s, a) \in Y_n\} \quad \forall s \in \calS, a \in \calA, \label{eq: async F update diff q diff q} \\
    \bar Q_{n+1} & = \bar Q_n + \sum_{s, a} \alpha_{\nu(n, s, a)} \Delta_n(s, a) I\{(s, a) \in Y_n\}, \label{Q: async bar Q update} 
\end{align}
where
\begin{align}
    \Delta_n(s, a) = Q_n(s, a) + F_n(S_n'(s, a), \argmax_{a'} Q_n(S_n'(s, a), a')) - F_n(s, a) - \bar Q_n \label{eq: Q: second TD error} .
\end{align}
We now present a convergence theorem for General Centered Differential Q. Unlike the previous theorems, this theorem requires that the optimal policy is unique. The reason is, if there are multiple optimal policies all achieving the optimal average reward, the greedy policy w.r.t. $Q_n$ will jump between these optimal policies even in the limit so the second estimator can not evaluate any particular optimal policy. In addition, unlike the discounted case, where different optimal policies all correspond to the same unique optimal value function, in the average reward case, optimal policies correspond to different differential value functions. Therefore, in order to use the second estimator to evaluate some policy derived from $Q_n$, that policy must converge as $n \to \infty$.

In practice, our algorithms can still deal with problems with multiple optimal policies. This can be achieved by choosing a small threshold $\epsilon > 0$, and then replace the $\argmax_{a} Q(s, a)$ in our algorithms with the first action $\tilde a$ satisfying $Q(s, \tilde a) >= \max_a Q(s, a) - \epsilon$. The resulting policy will converge to an optimal policy if $\epsilon$ is sufficiently small. 

\begin{theorem}[Convergence of General Centered Differential Q]
If Assumption \ref{assu: communicating} holds, Assumption \ref{assu: stepsize}, \ref{assu: asynchronous stepsize 1}, and \ref{assu: asynchronous stepsize 2} hold for both $\alpha_n$ and $\beta_n$, and the optimal policy is unique, denote the differential value function for the optimal policy as $q_*$, then General Centered Differential Q (Equations \ref{eq: async Q update}-\ref{eq: Q: async TD error}, \ref{eq: async F update diff q diff q}-\ref{eq: Q: second TD error}) converges, almost surely, $\bar R_n$ to $r_*$ and $Q_n - \bar Q_n e$ to $q_*$.
\end{theorem}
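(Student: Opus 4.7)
The plan is to mirror the proof of Theorem \ref{thm: convergence of balanced differential td update}, lifting it from the prediction setting to the control setting. First I would apply Theorem \ref{eq: convergence of uncentered differential q update} to the first estimator \eqref{eq: async Q update}--\eqref{eq: Q: async TD error}, which is decoupled from $F_n$ and $\bar Q_n$: this gives $Q_n \to q_\infty$ and $\bar R_n \to r_*$ almost surely, with $q_\infty$ the unique solution of \eqref{eq: action-value Bellman equation 2}--\eqref{eq: determination equation for q infty}. Uniqueness of the optimal policy $\pi_*$ then supplies a strictly positive action gap $\min_s \big(q_*(s, \pi_*(s)) - \max_{a \neq \pi_*(s)} q_*(s, a)\big) > 0$, so on the almost-sure event $\{Q_n \to q_\infty\}$ there is a finite (random) time $N$ after which $\argmax_{a'} Q_n(s', a') = \pi_*(s')$ for every $s' \in \calS$.

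Next I would recast the second estimator \eqref{eq: async F update diff q diff q}--\eqref{eq: Q: second TD error} as an asynchronous General Differential TD recursion on the augmented state space $\calS \times \calA$ that evaluates $\pi_*$, with ``reward'' $q_\infty(s, a)$ and successor pair $(s, a) \mapsto (s', \pi_*(s'))$ for $s' \sim p(\cdot \mid s, a)$. Two perturbations have to be absorbed: the observed ``reward'' in \eqref{eq: Q: second TD error} is $Q_n(s, a)$ rather than $q_\infty(s, a)$, and the successor action $\argmax_{a'} Q_n(s', a')$ differs from $\pi_*(s')$ before time $N$. Both perturbations form bounded sequences converging to zero---the first because $Q_n \to q_\infty$ and $Q_n$ is bounded by the proof of Theorem \ref{eq: convergence of uncentered differential q update}, the second because it vanishes outright for $n \geq N$. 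Applying Theorem \ref{thm: convergence of uncentered differential td update} together with the bounded $o(1)$ noise extension of Section 2.2 of Borkar (2009) would then yield $F_n \to F_\infty$ and $\bar Q_n \to \bar Q_\infty \doteq \sum_{s, a} d_{\pi_*}(s, a) q_\infty(s, a)$ almost surely, where $d_{\pi_*}$ is the stationary distribution on $\calS \times \calA$ induced by $\pi_*$.

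Finally, since $q_\infty - \bar Q_\infty e$ satisfies the action-value Bellman equation \eqref{eq: action-value Bellman equation 2} with $\bar r = r_*$ (shifts by a constant preserve the equation) and also satisfies the centering condition $\sum_{s, a} d_{\pi_*}(s, a) \big(q_\infty(s, a) - \bar Q_\infty\big) = 0$ by construction of $\bar Q_\infty$, the state-action analogue of Lemma \ref{lemma: equations that v_pi satisfies} identifies $q_\infty - \bar Q_\infty e$ with the centered differential action-value function $q_*$, giving $Q_n - \bar Q_n e \to q_*$ almost surely. The hard part will be rigorously handling the time-varying greedy policy embedded in the second TD error: the whole argument hinges on the strict positive action gap furnished by uniqueness of $\pi_*$, without which $\argmax_{a'} Q_n(s', \cdot)$ could oscillate in the limit and the second estimator would fail to evaluate any single fixed policy.
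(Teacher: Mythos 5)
Your overall route is the paper's route: decouple the two estimators and apply Theorem~\ref{eq: convergence of uncentered differential q update} to get $Q_n \to q_\infty$ and $\bar R_n \to r_*$; use uniqueness of the optimal policy to fix the greedy policy $\pi_*$ w.r.t.\ $q_\infty$; view the second estimator as a (prediction-style) differential recursion on the augmented chain over $\calS \times \calA$ with ``reward'' $q_\infty(s,a)$ and transitions $(s,a) \mapsto (s', \pi_*(s'))$; absorb the two perturbations (reward $Q_n$ vs.\ $q_\infty$, successor action $\argmax_{a'} Q_n$ vs.\ $\pi_*$) as vanishing noise; and identify $\lim \bar Q_n = \sum_{s,a} d_{\pi_*}(s,a)\, q_\infty(s,a)$ as the offset via the state--action analogue of Lemma~\ref{lemma: equations that v_pi satisfies}. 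That is exactly the paper's decomposition \eqref{eq: Centered Q: F = h m epsilon}, and your explicit action-gap argument for why $\argmax_{a'} Q_n(s',\cdot)$ locks onto $\pi_*(s')$ after a finite random time is a welcome sharpening of a step the paper leaves implicit.

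The one place you are too quick is the claim that both perturbations ``form bounded sequences converging to zero,'' which you then feed into the standard bounded-$o(1)$-noise extension of Section~2.2 of Borkar (2009). The successor-action perturbation is $F_n\bigl(S'_n(s,a), \argmax_{a'} Q_n(S'_n(s,a), a')\bigr) - F_n\bigl(S'_n(s,a), \pi_*(S'_n(s,a))\bigr)$, i.e.\ a difference of entries of $F_n$ itself --- and the a.s.\ boundedness of $\{F_n\}$ (condition (A4) of the stochastic-approximation machinery) is precisely what remains to be proved at that point, via the scaled-ODE stability argument. Your observation that this term vanishes outright for $n \geq N$ only yields a sample-path-dependent bound in terms of $\sup_{n < N}\norm{F_n}_\infty$, which cannot be used uniformly inside the stability proof. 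The paper resolves this by proposing a \emph{variant} of the Borkar extension that replaces boundedness of $\epsilon_n$ with the linear-growth condition \eqref{eq: Centered Diff Q: weaker boundedness condition}, $\norm{\epsilon_n}_\infty \leq K(1 + \norm{F_n}_\infty)$, and then re-verifying the relevant lemmas of Section~3 of Borkar (2009) to recover $\sup_n \norm{F_n} < \infty$ in the presence of such noise. You should either invoke that modified extension or supply an independent argument for the a priori boundedness of $F_n$; as written, this step of your proof does not go through.
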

\begin{proof}
Similar as what we did to show Theorem \ref{thm: convergence of balanced differential td update} we use the last extension of section 2.2 of (Borkar 2009) to show this theorem.

Because \eqref{eq: async Q update}-\eqref{eq: Q: async TD error} will not be influenced by \eqref{eq: async F update diff q diff q}-\eqref{eq: Q: second TD error}, we have $Q_n \to q_\infty$ and $\bar R_n \to r_*$ a.s., according to Theorem \ref{eq: convergence of uncentered differential q update}.

Now consider \eqref{eq: async F update diff q diff q}-\eqref{eq: Q: second TD error}. Similar as the proof of Theorem \ref{eq: convergence of uncentered differential q update}, we can combine \eqref{eq: async F update diff q diff q}-\eqref{eq: Q: second TD error} and obtain a single update rule:

\begin{align}
    & F_{n+1}(s, a) = F_{n}(s, a) + \beta_{\nu(n, s, a)} \nonumber \\
    & \left (\tilde Q_n(s, a) + F_n(S'_n(s, a), \argmax_{a'} Q_n(S'_n(s, a), a')) - F_n(s, a) - \kappa \sum F_n \right)  I\{(s, a) \in Y_n\}, \nonumber\\
    & \forall s \in \calS, a \in \calA . \label{eq: Centered Q: async single update}
\end{align}
where $\tilde Q_n(s, a) \doteq Q_n(s, a) + c$ and $c \doteq \sum F_0 - \bar Q_0$. As we discussed above, given Assumption \ref{assu: asynchronous stepsize 1} and \ref{assu: asynchronous stepsize 2}, to obtain $Q_n - \bar Q_n e \to q_*$ a.s., it only remains to show the convergence of the following synchronous update rule:
\begin{align}
    & F_{n+1}(s, a) = F_{n}(s, a) + \beta_n \nonumber \\
    & \left ( \tilde Q_n(s, a) + F_n(S'_n(s, a), \argmax_{a'} Q_n(S'_n(s, a), a')) - F_n(s, a) - \kappa \sum F_n \right), \nonumber\\
    & \forall s \in \calS, a \in \calA . \label{eq: Centered Q: sync single update}
\end{align}
Rewriting the above equation, we have
\begin{align*}
    & F_{n+1}(s, a) = F_{n}(s, a) + \beta_n \\
    & \Big( q_\infty(s, a) + c + \sum_{s'} p(s' \mid s, a) F_n(s', \pi_*(s')) - F_n(s, a) - \kappa \sum F_n \\
    & + F_n(S'_n(s, a), \pi_*(S'_n(s, a))) - \sum_{s'} p(s' \mid s, a)F_n(s', \pi_*(s')))\\
    & + Q_n(s, a) - q_\infty(s, a) + F_n(S'_n(s, a), \argmax_{a'} Q_n(S'_n(s, a), a'))\\
    & - F_n(S'_n(s, a), \pi_*(S'_n(s, a))) \Big),
\end{align*}
where $\pi_*$ is the unique greedy policy w.r.t. $q_\infty$, as there is only one optimal policy by our assumption. 

Now, let
\begin{align}
    & F_{n+1} = F_{n} + \beta_n (h(F_n) + M_{n+1} + \epsilon_n) \label{eq: Centered Q: F = h m epsilon},
\end{align}
where 
\begin{align*}
    h(F_n)(s, a) & \doteq q_\infty(s, a) + c + \sum_{s'} p(s' \mid s, a) F_n(s', \pi_*(s')) - F_n(s, a) - \kappa \sum F_n,\\
    M_{n + 1}(s, a) & \doteq F_n(S'_n(s, a), \pi_*(S'_n(s, a))) - \sum_{s'} p(s' \mid s, a)F_n(s', \pi_*(s'))),\\
    \epsilon_n(s, a) & \doteq Q_n(s, a) - q_\infty(s, a) + F_n(S'_n(s, a), \argmax_{a'} Q_n(S'_n(s, a), a')) \\
    & - F_n(S'_n(s, a), \pi_*(S'_n(s, a)).
\end{align*}

We will first show that without $\epsilon_n$, \eqref{eq: Centered Q: F = h m epsilon} converges a.s.. Then we will propose a variant of the last extension of the Section 2.2 of (Borkar 2009) and use that show the convergence of \eqref{eq: Centered Q: F = h m epsilon} with $\epsilon_n$.

Given a new table of estimates $F'_n(s, a)$, for all $s \in \calS, a \in \calA$, consider the following update rule
\begin{align*}
    F'_{n+1} = F'_{n} + \beta_n (h(F'_n) + M'_{n+1}),
\end{align*}
where $M'_{n + 1}(s, a) \doteq F'_n(S'_n(s, a), \pi_*(S'_n(s, a))) - \sum_{s'} p(s' \mid s, a)F'_n(s', \pi_*(s')))$, and $F'_0 \doteq F_0$.

The above update can be viewed as a special case of \eqref{eq: TD: sync h + M update form} with $\rho_n = 1$ for a new Markov Reward Process. The state space for this MRP is $\calS \times \calA$. The transition dynamics of the MRP is defined as $\tilde p((s', a') \mid (s, a)) \doteq \sum \pi_*(a \mid s) \sum_{s'} p(s' \mid s, a) \bbI(a' = \pi_*(s'))$ while the reward starting from $(s, a)$ is $\tilde r((s, a)) \doteq q_\infty(s, a)$. 

Therefore Lemma \ref{lemma: Synchronous General Differential Q} applies and we have that the update $F'_n$ converges to some point satisfying the state-value Bellman equation for this new MRP. By \eqref{eq: Q: R_n converges to r_*}, $\bar Q'_n = \kappa \sum F'_n - c$ converges to the reward rate for this new MRP, which is $\sum_{s, a} d_{\pi_*}(s, a) q_\infty(s, a)$, the offset in $q_\infty$ w.r.t. $q_*$ by Lemma \ref{lemma: equations that v_pi satisfies}.

Now, we propose a variant of the last extension of the Section 2.2 of Borkar (2009) and apply it to show that the additional noise $\epsilon_n$ does not affect the convergence and therefore $\lim_{n \to \infty} F_n = \lim_{n \to \infty} F'_n$ as $n \to \infty$ and $\bar Q_n$ also converges to $\sum_{s, a} d_{\pi_*}(s, a) q_\infty(s, a)$. The extension of the Section 2.2 of Borkar (2009) requires that $\epsilon_n$ is bounded and is $o(1)$. The variant we propose also requires that $\epsilon_n$ is $o(1)$, however instead of requiring $\epsilon_n$ being bounded, it requires a weaker condition 
\begin{align}
    \norm{\epsilon_n}_\infty \leq K (1 + \norm{F_n}_\infty) \label{eq: Centered Diff Q: weaker boundedness condition},
\end{align}
where $K$ is a positive constant. 

This can be shown with the following arguments. 1) If the boundedness of $F_n$ holds, then the conclusion of Lemma 1 of Section 2 of Borkar (2009) will not be affected and therefore the convergence of $F_n$ remains unchanged. 2) The boundeness of $F_n$ can be shown with the following three modifications of the proofs in Section 3 of Borkar (2009):

\begin{enumerate}
    \item It can be seen that the claim of Lemma 4 in Section 3.2 of Borkar (2009) remains unchanged with this additional noise $\epsilon_n$.
    \item A result similar to Lemma 5 in Section 3.2 of Borkar (2009) can be shown for this additional noise. That is, the sequence $\tilde \zeta'_n \doteq \sum_{k=0}^{n-1} a_k \tilde \epsilon_k, n \geq 1$ is a.s. convergent, where $a_k$ are the stepsizes, $\tilde \epsilon_k = \epsilon_k / r(n)$ for $m(n) \leq k < m(n+1)$ and $r(\cdot)$ and $m(\cdot)$ are defined in Section 3.2 of Borkar (2009). This is due to \Cref{assu: stepsize} and also $\epsilon_n$ being $o(1)$.
    \item Lemma 6 of Section 3.2 of Borkar (2009) holds with the additional $\epsilon_n$.
\end{enumerate}

Now let us verify if \eqref{eq: Centered Diff Q: weaker boundedness condition} holds and if $\epsilon_n$ is $o(1)$. It can be seen that \eqref{eq: Centered Diff Q: weaker boundedness condition} is satisfied because $Q_n$ is bounded as we showed in the proof of Lemma \ref{lemma: Synchronous General Differential Q}. In addition, because $Q_n \to q_\infty$ a.s., $\epsilon_n \to 0$ a.s. as $n \to \infty$ and thus $\epsilon_n$ is $o(1)$. Therefore a.s., $F_n$ converges and $\bar Q_n$ converges to $\sum_{s, a} d_{\pi_*}(s, a) q_\infty(s, a)$, the offset of $q_\infty$. Therefore $Q_n - \bar Q_n e \to q_*$ a.s..

$Q_n$ in the original update rule \eqref{eq: Centered Q: async single update} and $Q_n$ in the synchronous update rule \eqref{eq: Centered Q: sync single update} converge to the same point. Therefore using the original update rule, $Q_n - \bar Q_n e \to q_*$ a.s..
\end{proof}


\clearpage

\section{Additional Experiments and Experimental Details}
\label{app:experiments}

Here we provide the remaining details of all the experiments reported in this paper. We also present additional experiments that are pertinent to this paper. In particular, the following sections contain:
\begin{enumerate}\itemsep0mm
    \item Details of the control experiments in Section \ref{sec:control-exps}
    \item An experiment demonstrating the max reference function is not always the best choice of the reference function for RVI Q-learning 
    \item An experiment demonstrating RVI Q-learning diverges when the reference state is transient
    \item Details of the prediction experiments in Section \ref{sec:prediction-experiments} (both on- and off-policy)
    \item An empirical demonstration of the centering technique introduced in Section \ref{sec:centering}
\end{enumerate}

All the experiment code is available at \href{https://github.com/abhisheknaik96/average-reward-methods}{\texttt{https://github.com/abhisheknaik96/average-reward-methods}}.  

\subsection{Details of the Control Experiments on the Access-Control Queuing Task}

In this section, we provide the rest of the experimental details for the control experiments on the Access-Control Queuing Task in Section \ref{sec:control-exps} of the main text.

The task starts with all 10 servers free. With four types of customers, 11 possible number of free servers (0 to 10), and two actions, there are a total of 88 state--action pairs. The value function for both algorithms and the reward rate estimate for Differential Q-learning were initialized to zero. Both algorithms were run with step size $\alpha$ in the range $\{0.0015625, 0.00625, 0.025, 0.1, 0.4\}$. For Differential Q-learning, $\eta$ was chosen from $\{0.125, 0.25, 0.5, 1, 2\}$. The reference functions were chosen as mentioned in Section \ref{sec:control-exps}. Both algorithms used an $\epsilon$-greedy behavior policy with $\epsilon=0.1$ and no annealing.

The learning curve in Figure \ref{fig:results-accesscontrol-learningcurve} corresponds to the parameters for Differential Q-learning that resulted in the largest reward rate averaged over the training period of 80,000 steps: $\alpha=0.025$ and $\eta=0.125$. A point on the solid curves denotes the reward rate during training computed over a sliding window of previous 2000 rewards, and the shaded region denotes one standard error.

\subsection{Max is Not Always the Best Choice of Reference Function for RVI Q-learning}

In Section \ref{sec:prediction-experiments} we pointed out that RVI Q-learning performs well if the reference state (or state--action pair) occurs frequently under an optimal policy. This led to the speculation that perhaps the state--action pair with the highest action-value estimate might be the best choice of a reference state--action pair because under the RL paradigm, an agent seeks to visit the highly-rewarding states. We gave an example in the main text that it is not true in general that the state--action pair with the highest action-value estimate also occurs frequently under an optimal policy. In this section, we show this empirically; max is not always the best choice of the reference function for RVI Q-learning.

The two domains used are variants of the Two Loop MDP (from Section~\ref{sec:prediction-experiments}). The first variant has the same transitions as in the Two Loop MDP except there is a +10 reward when going from state $8$ to state $0$ instead of +2. 
The optimal policy is to take the action \texttt{right} in state $0$ and obtain a reward rate of +2 per step. The second variant builds on the first variant in that there is an additional state 9. Starting from any state other than state 9, no matter what action is taken, there is a 0.02 probability of moving to state 9 with 0 reward and a 0.98 probability of moving to a state with a reward just as in the first variant. From state 9, the action deterministically leads to state $0$ with a reward of +100; this makes state $9$ a high-value state. But it rarely occurs under the optimal policy, which is again to take the action \texttt{right} in state $0$. The optimal reward rate in this second domain is 3.84.

In addition to RVI Q-learning with the max reference function, we also ran Differential Q-learning as a baseline on these two domains. The value function for both algorithms and the reward rate estimate for Differential Q-learning were initialized to zero. Both algorithms were run with step size $\alpha$ in the range $\{0.003125, 0.00625, 0.0125, 0.025, 0.05, 0.1, 0.2, 0.4\}$. For Differential Q-learning, $\eta$ was chosen from $\{0.125, 0.25, 0.5, 1, 2\}$. Both algorithms used an $\epsilon$-greedy behavior policy with $\epsilon=0.1$ and no annealing. The experiments were run for 100000 steps and repeated 30 times. 

The sensitivity plots for both domains are shown in Figure~\ref{fig:results-control-twoloop}. The performance of RVI Q-learning was quite different qualitatively in both domains. In the first variant, the max reference function resulted in the best performance. In this case, the state--action pair with the highest action value (state $8$) occurs frequently under the optimal policy of taking the right loop. But in the second variant, the state--action pair corresponding to the highest action value (state $9$) occurs rarely under the optimal policy. As expected, the max reference function did not result in good performance in this case. In fact, the rate of learning for Differential Q-learning was better than or equal to that of RVI Q-learning with the max reference function for almost the whole range of parameters tested.

\setcounter{figure}{1}
\begin{figure*}[!t]
\centering
\begin{subfigure}{.5\textwidth}
    \centering
    \includegraphics[width=0.95\textwidth]{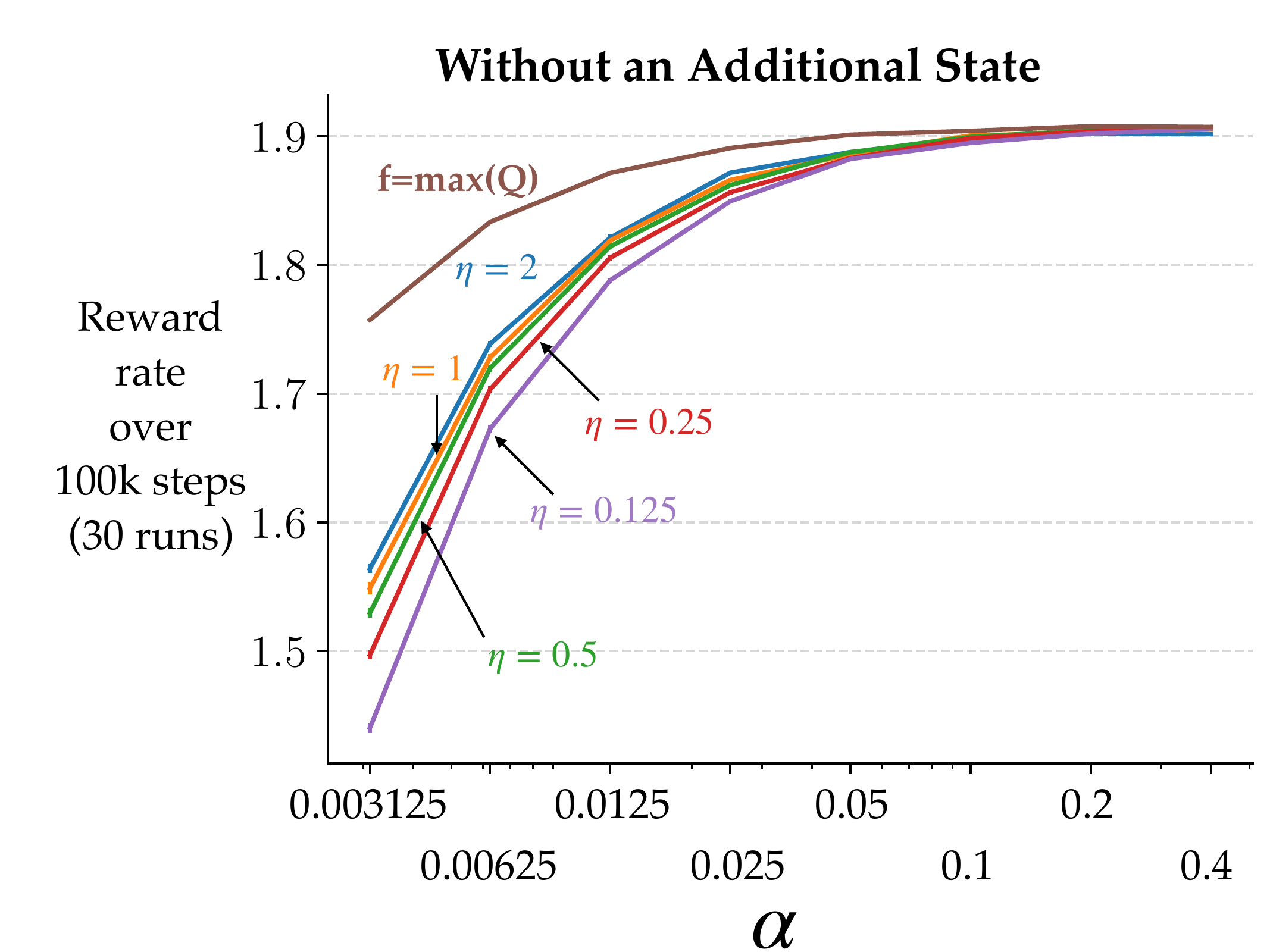}
\end{subfigure}%
\begin{subfigure}{.5\textwidth}
    \centering
    \includegraphics[width=0.95\textwidth]{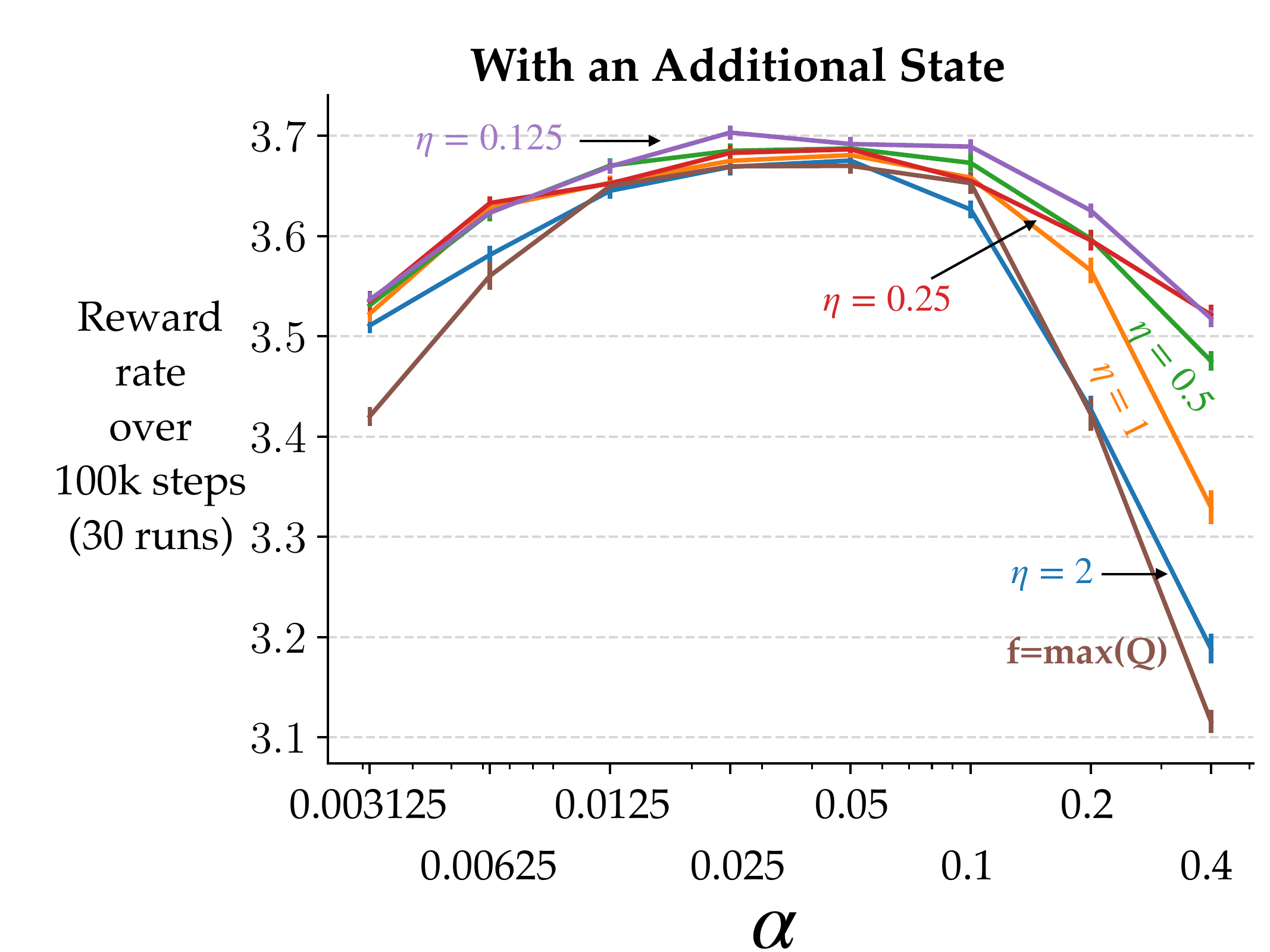}
\end{subfigure}
\vspace*{+3mm}
\caption{Parameter studies of RVI Q-learning using max as the reference function and Differential Q-learning with various values of $\eta$. \emph{Left:} In the first domain in which the state--action pair with the highest action value occurs frequently under the optimal policy, RVI Q-learning using a max reference function performed well across a wide range of step sizes. \emph{Right:} On the other hand, in the second domain, the state--action pair with the highest action value occurs rarely under the optimal policy, and RVI Q-learning using a max reference function performs well only for a relatively narrow range of step sizes. The domains are described in the text.}
\label{fig:results-control-twoloop}
\end{figure*}

These experiments show that the value of the state--action pair with the highest action value is not in general the best choice of the reference function for RVI Q-learning. 

\subsection{RVI Q-learning Diverges when the Reference State is Transient}
\label{app:rviq-divergence}

In this experiment, we show that RVI Q-learning diverges if the reference state is a \textit{transient} state of the MDP, that is, the state does not occur more than a finite number of times under any policy. Note that transient states are not allowed in communicating MDPs but are allowed in the unichain MDPs and the more general weakly-communicating MDPs. While our theory was developed for communicating MDPs, it can be extended with some modification to the more general weakly communicating MDP case (see the discussion right after Assumption~\ref{assu: asynchronous stepsize 2} for more details about this extension). The convergence results for RVI Q-learning (Abounadi et al.\ 2001) were developed for the unichain case, but we show via an experiment that RVI Q-learning can diverge under certain conditions. In particular, when the reference state is transient. 

The domain is a simple two-state MDP with the transition and reward dynamics shown in Figure \ref{fig:rviq-divergence} (left). State $0$ is transient under all stationary policies (including the optimal policy), meaning it only occurs a finite number of times before it is never seen again. 

The behavior policy is random. The value function for both algorithms and the reward rate estimate for Differential Q-learning was initialized to zero. The reference state--action pair was set to be action $a$ in state $0$. The step sizes were all set to a value of $0.01$ (an arbitrary choice; this effect can be observed for any positive step size). The starting state was state $0$, and the experiments were run for 1000 steps and repeated 50 times.

\begin{figure*}[h]
    \centering
    \begin{subfigure}{.3\textwidth}
    \centering
        \begin{tikzpicture}[scale=0.7,
        statenode/.style={circle, draw=green!60, fill=green!5, very thick, minimum size=7mm},
        actionnode/.style={circle, fill, minimum size=0.5mm, inner sep=-2},
        ]
            \node[statenode] at (0, 0)   (state0) {0};
            \node[statenode] at (2, 0)   (state1) {1};
            \node[actionnode] at (0,0.7)    (action0) {};
            \node at (0,1.3) {\small a};
        
            \draw[->] (state0) [out=330,in=210] to node[below]{\small\color{blue}-10}node[above]{\small b}(state1);
            \draw[-] (state0) [out=90,in=270] to (action0);
            \draw[->] (action0) [out=120,in=135] to node[left]{\small{\color{red}0.9},{\color{blue}+1}}(state0);
            \draw[->] (action0) [out=60,in=120] to node[above]{\small\color{red}0.1}(state1);
            \draw[->] (state1) [out=30,in=330,loop] to node[right]{\small\color{blue}+2}node[left]{\small a}(state1);
        \end{tikzpicture}
    \end{subfigure}%
    \hspace{6mm}
    \begin{subfigure}{.6\textwidth}
        \includegraphics[width=\textwidth]{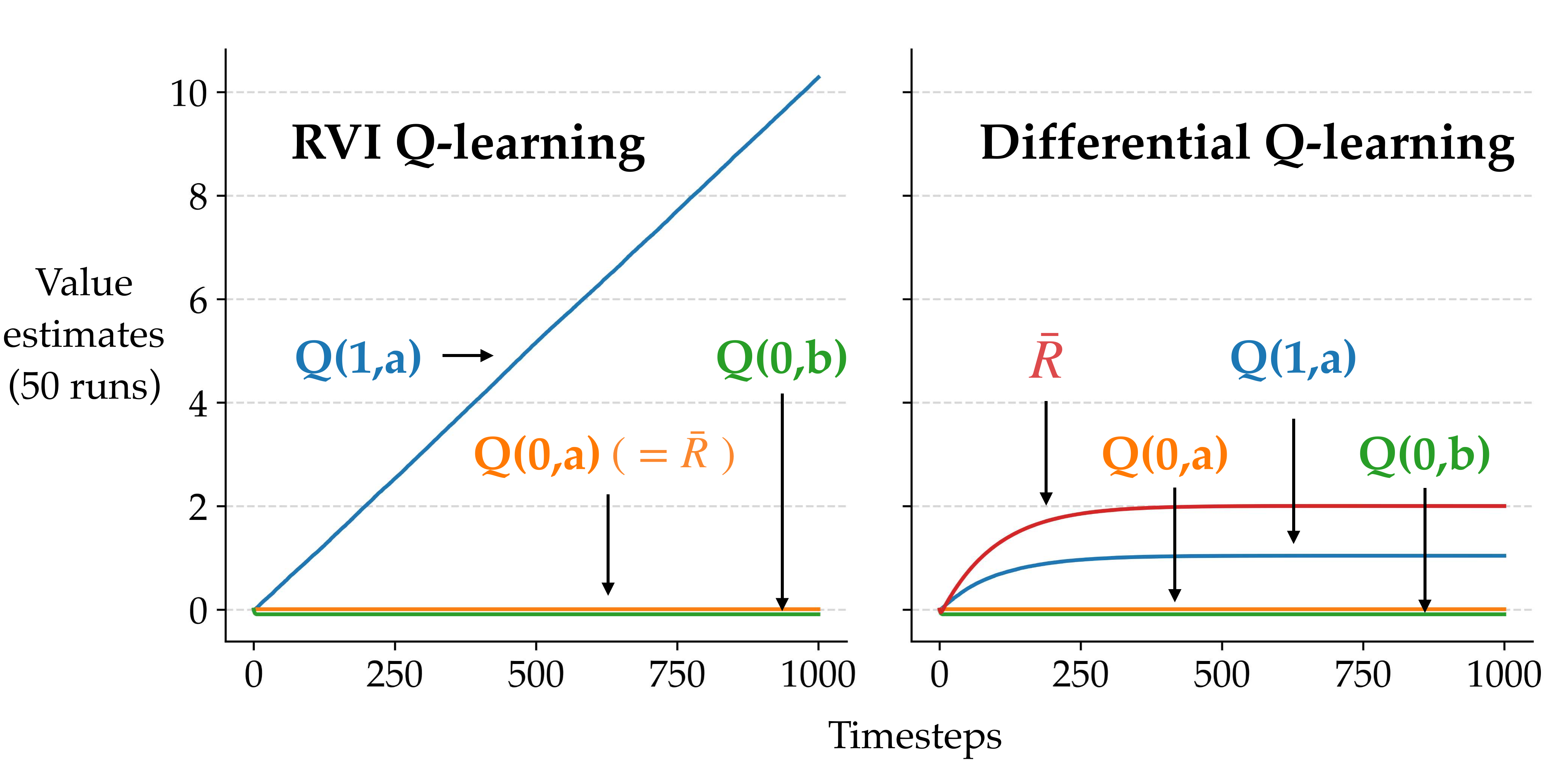}
    \end{subfigure}
    \caption{Demonstration of divergence in RVI Q-learning when the reference state is transient. \textit{Left:} The two-state MDP in which state $0$ is transient under all policies. \textit{Right:} Comparison of estimated values with RVI Q-learning and Differential Q-learning algorithms on the two-state MDP. The value of the recurrent state diverges in case of RVI Q-learning, whereas all the estimates converge in case of Differential Q-learning. The solid lines denote the mean, and one standard error is less than the width of the lines.}
    \label{fig:rviq-divergence}
\end{figure*}

Figure \ref{fig:rviq-divergence} (right) shows the evolution of the learned value estimates over time (the standard error is smaller than the width of the line representing the mean). The value of the reference state--action pair $Q(0,a)$ cannot reach the optimal reward rate of $2$
and hence the under-estimation leads to divergence in the estimate of the recurrent state which is updated as $Q_{t+1}(2, a) = Q_{t} (2, a) + \alpha \big(2 - Q(0,a) + Q_t(2, a) - Q_t(2, a)\big)$ (refer to Algorithm \ref{algo:rviQ}).

This simple experiment demonstrates that RVI Q-learning diverges when the reference state--action pair is transient.

\subsection{Details of the Prediction Experiments}
\label{app:exp-prediction}

This section presents the supplementary material for the experiments in Section \ref{sec:prediction-experiments}: the remaining experimental details, how the evaluation metric is computed, the sensitivity plots of the reward-rate error (RRE) for on-policy Differential TD-learning and Average Cost TD-learning, and the sensitivity plots for RMSVE (TVR) and RRE for off-policy Differential TD-learning.

The step size $\alpha$ and the parameter $\eta$ for all three algorithms (Average Cost TD-learning, on- and off-policy Differential TD-learning) were chosen from $\{0.025, 0.05, 0.1, 0.2, 0.4\}$ and $\{0.125, 0.25, 0.5, 1, 2\}$ respectively. The step sizes were decayed by a factor of 0.9995 at each step. The value estimates and the reward-rate estimate for all algorithms were initialized to zero. The learning curves for on-policy Differential TD-learning and Average Cost TD-learning (blue and orange) on the top-right of Figure \ref{fig:results-prediction} correspond to the parameters that minimized the average RMSVE (TVR) over the training period, which reflects their rate of learning: $\alpha=0.2$ and $\eta=0.25$ for Differential TD-learning, and $\alpha=0.1$ and $\eta=0.125$ for Average Cost TD-learning. The learning curve for off-policy Differential TD (green) in the same plot is plotted for the parameters that resulted in the minimum asymptotic RMSVE (TVR) computed over the last 5000 steps of training: $\alpha=0.2$ and $\eta=0.5$. In all the plots, the solid line represents the mean, and the error bars indicate one standard error (which in many cases was less than the width of the solid lines).

\begin{figure*}[ht!]
    \centering
    \begin{subfigure}{.5\textwidth}
    \centering
    \includegraphics[width=0.88\textwidth]{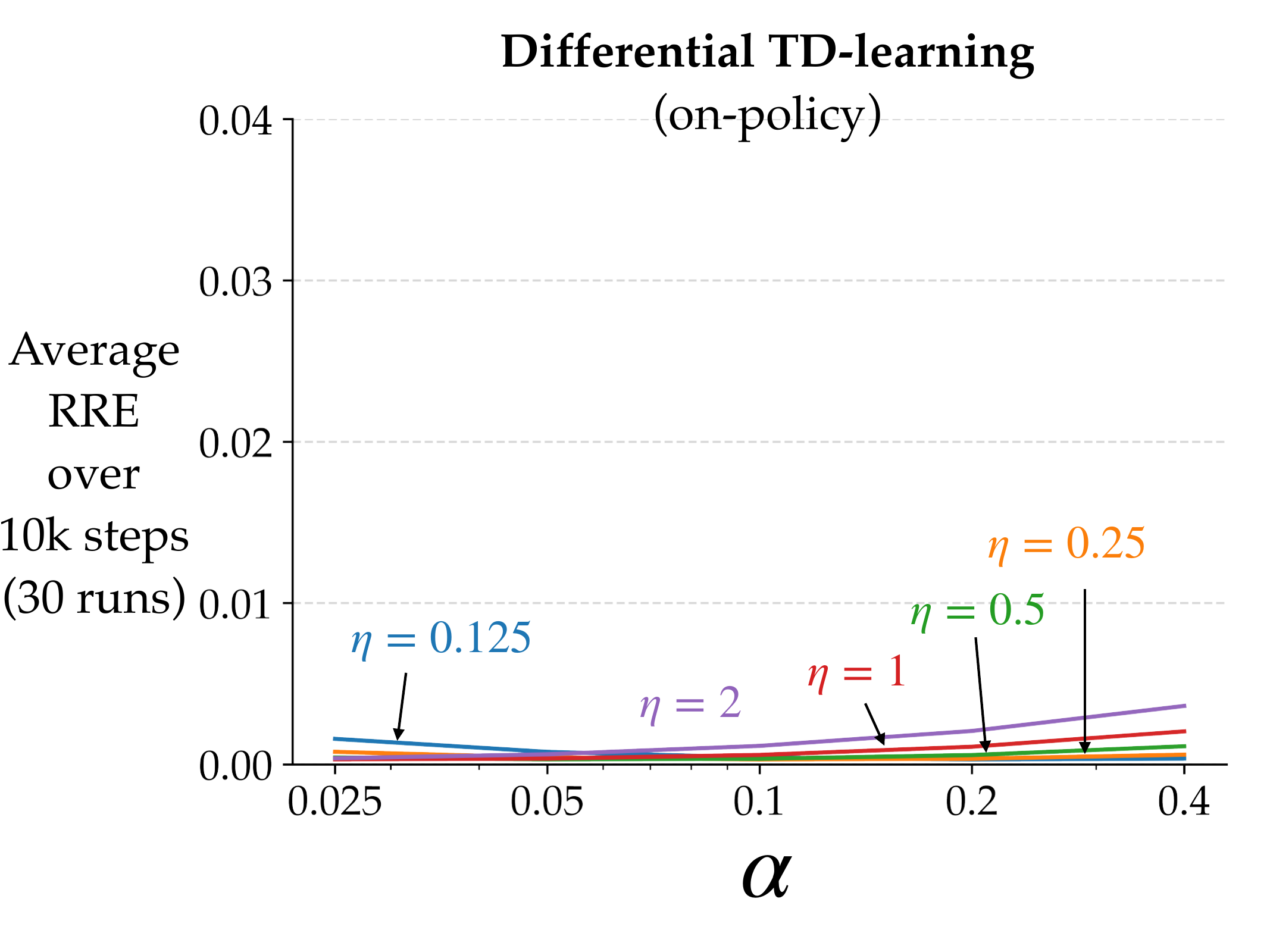}
    \end{subfigure}%
    \begin{subfigure}{.5\textwidth}
    \centering
    \includegraphics[width=0.88\textwidth]{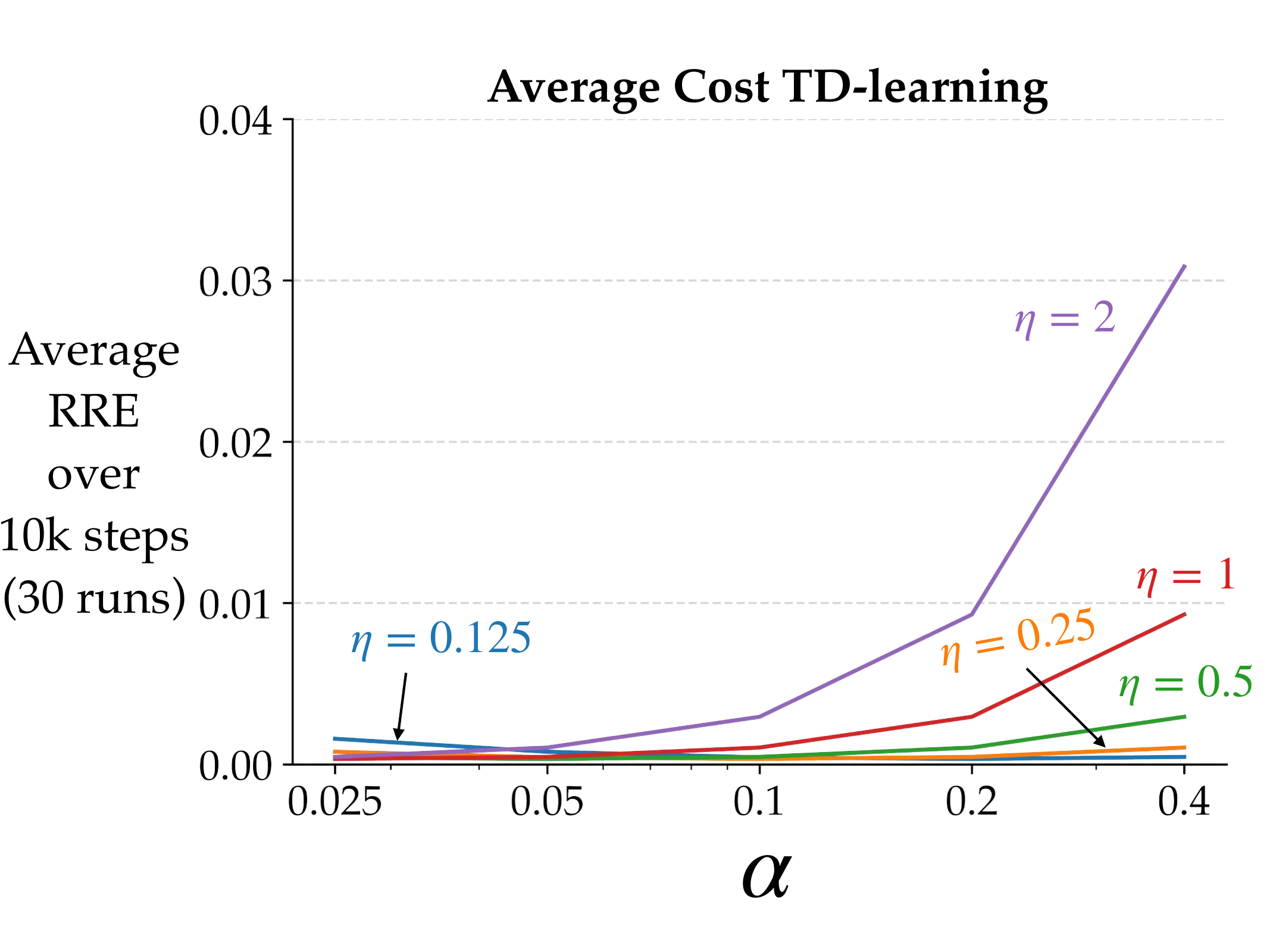}
    \end{subfigure}
    \begin{subfigure}{.5\textwidth}
    \centering
    \includegraphics[width=0.88\textwidth]{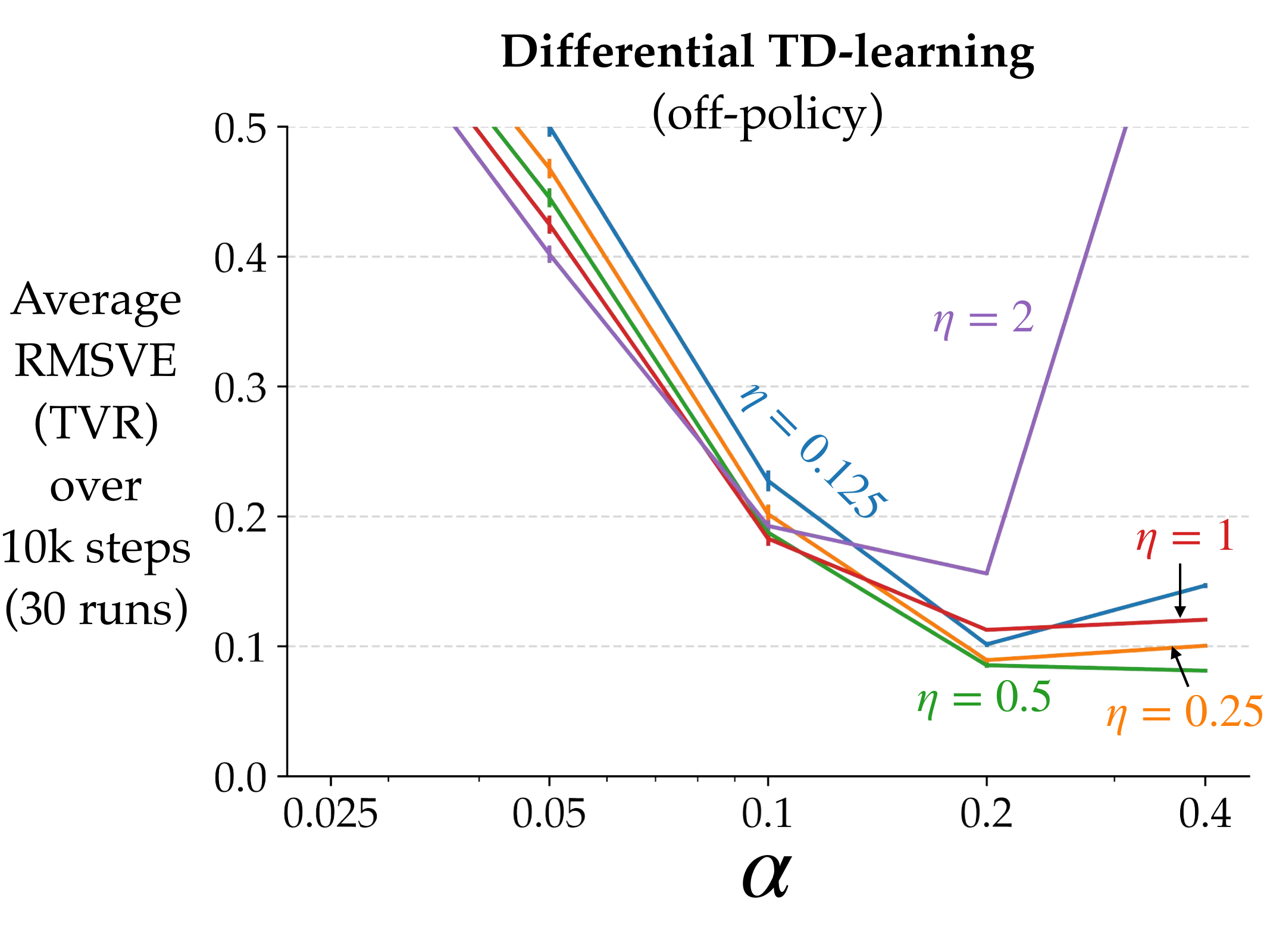}
    \end{subfigure}%
    \begin{subfigure}{.5\textwidth}
    \centering
    \includegraphics[width=0.88\textwidth]{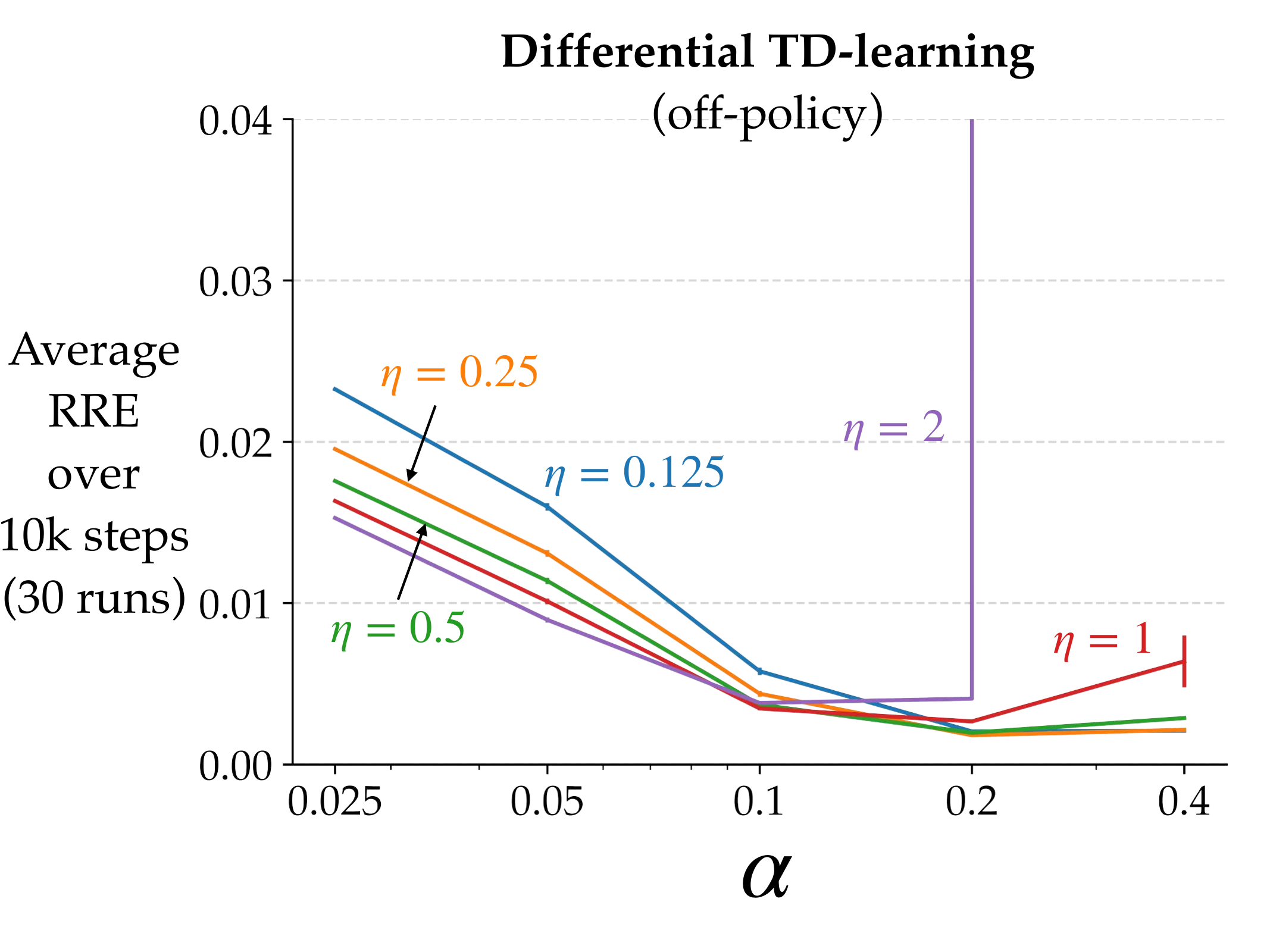}
    \end{subfigure}
    \caption{Parameter studies for the prediction experiments on the Two Loop task. The solid lines denote the mean, and one standard error is less than the width of the lines. \textit{Top:} On-policy Differential TD-learning achieves equal or lower average RRE than Average Cost TD-learning for a broad range of parameters. \textit{Bottom:} Sensitivity of off-policy Differential TD-learning's performance in terms of average RMSVE (TVR) and average RRE w.r.t.\ parameters $\alpha$ and $\eta$.}
    \label{fig:results-prediction-app}
\end{figure*}

We now describe the evaluation metric in more detail — the variant of RMSVE originally proposed by Tsitsiklis and Van Roy (1999) (hence the abbreviation `RMSVE (TVR)'). As noted in Section \ref{sec:prediction_background_algorithms}, there are multiple solutions to the Bellman equations for the differential value function of the form $v_\pi(s) + c$, where $c \in \mathbb{R}$. All algorithms converge to one of these solutions depending on design choices such as initializations and reference states. Therefore, computing the value error w.r.t.\ the actual value function $v_\pi$ does not say much about convergence. Tsitsiklis and Van Roy proposed computing the error w.r.t.\ the nearest valid solution to the Bellman equations — this error would be zero for any valid solution to the Bellman equations that an algorithm converges to. Mathematically, this error is given by:
\[
\inf_c \norm{ {v} - ({v}_\pi + c\,{e}) }_{{d}_\pi} = \norm{ \mathcal{P}{v} - {v}_\pi }_{{d}_\pi},
\]
where $\mathcal{P}$ is a projection operator and ${d}_\pi$ is the stationary state distribution corresponding to the policy $\pi$. Algorithmically, this translates to computing the offset of the learned value function, subtracting it, and then computing the RMSVE w.r.t. the actual value function ${v}_\pi$. The offset can be computed by simply taking a dot product of the learned value function and ${d}_\pi$:
${d}_\pi^T ({v}_\pi + c\,{e}) = {d}_\pi^T {v}_\pi + c\,{d}_\pi^T {e} = 0 + c = c$,
where ${d}_\pi^T {v}_\pi = 0$ (from Lemma \ref{lemma: equations that v_pi satisfies}). 

For the target policy $\pi$ that uniformly randomly picks one of the two actions in state $0$, ${d}_\pi = [0.2, 0.1, 0.1, 0.1, 0.1, 0.1, 0.1, 0.1, 0.1]^T$, and ${v}_\pi = [-0.2, -1.4, -1.1, -0.8, -0.5, 0.6, 0.9, 1.2, 1.5]^T$, which can be obtained by solving the Bellman equations along with the constraint ${d}_\pi^T {v}_\pi = 0$. 

The top row in Figure \ref{fig:results-prediction-app} shows the sensitivity of RRE of on-policy Differential TD-learning and Average Cost TD-learning averaged over the training period w.r.t.\ its parameters for the experiment in Section \ref{sec:prediction-experiments}. On-policy Differential TD-learning was less sensitive to both $\alpha$ and $\eta$, and converged to a lower RRE across a large range of its parameters as compared to Average Cost TD-learning. A similar trend was observed for RMSVE (TVR) and discussed in Section \ref{sec:prediction-experiments}. 

The bottom row of Figure \ref{fig:results-prediction-app} shows the sensitivity of both RMSVE (TVR) and RRE of off-policy Differential TD-learning w.r.t.\ its parameters $\alpha$ and $\eta$. The rate of convergence is affected if the parameters are too high or too low, but otherwise it is relatively insensitive to different choices of $\eta$. 


\subsection{Estimating the Actual Differential Value Function}
\label{app:exp-centering}

\begin{table*}[t]
    \centering
    \caption{The stationary state--action distribution and the action-value function for the optimal policy of choosing action \texttt{right} in state $0$ in the Two Loop task.}
    \begin{tabular}{@{}ccccccccccc@{}}
    \toprule
    \multicolumn{1}{l}{} & \multicolumn{10}{c}{state--action pair}                                               \\ \midrule
                         & (0,left) & (0,right) & (1,a) & (2,a) & (3,a) & (4,a) & (5,a) & (6,a) & (7,a) & (8,a) \\ \midrule
    $d_\pi$              & 0        & 0.2       & 0     & 0     & 0     & 0     & 0.2   & 0.2   & 0.2   & 0.2   \\
    $q_\pi$              & -1.8     & -0.8      & -2.4  & -2.0  & -1.6  & -1.2  & -0.4  & 0.0   & 0.4   & 0.8   \\ \bottomrule
    \end{tabular}
    \label{tab:centering-twoloop-opt}
\end{table*}

\begin{figure*}[b!]
    \centering
    \begin{subfigure}{.5\textwidth}
        \includegraphics[width=0.88\textwidth]{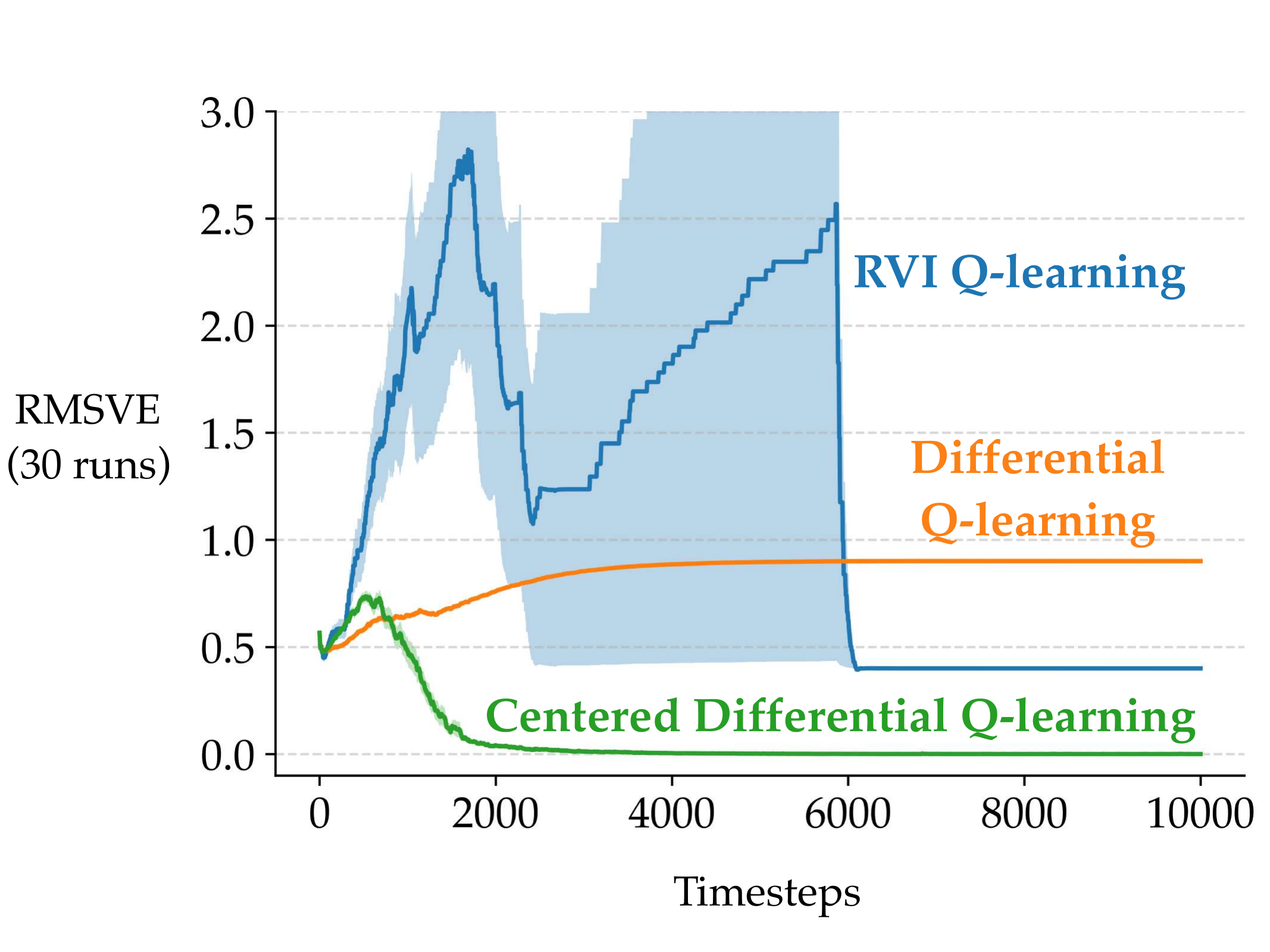}
    \end{subfigure}%
    \begin{subfigure}{.5\textwidth}
        \includegraphics[width=0.88\textwidth]{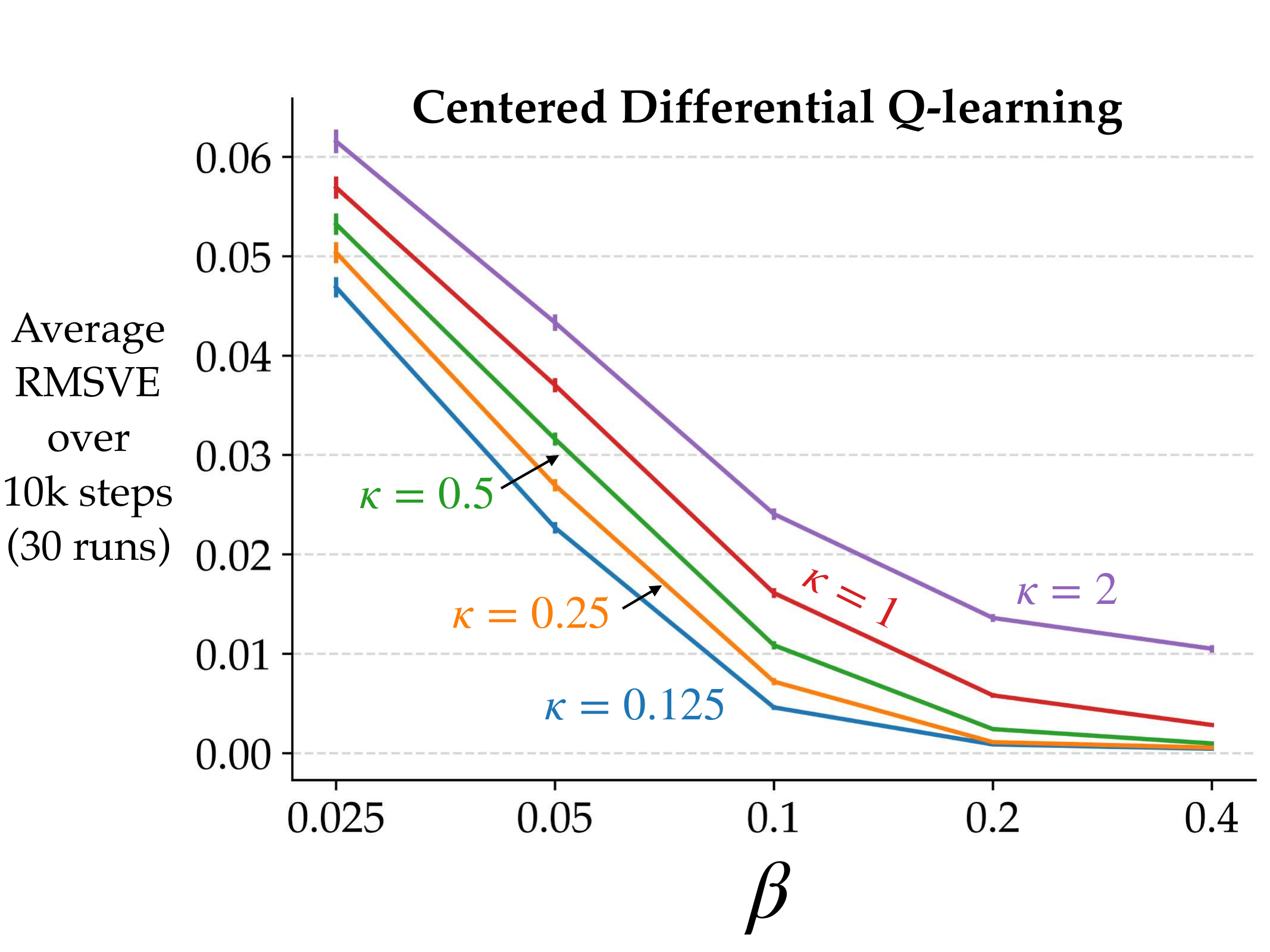}
    \end{subfigure}
    \caption{\textit{Left:} On the Two Loop task, Centered Differential Q-learning learned the centered differential value function corresponding to the optimal policy, while RVI Q-learning and (uncentered) Differential Q-learning converged to some offset versions of the centered differential value function. \textit{Right:} Parameter study showing the centering technique was relatively robust to the parameter $\kappa$ and resulted in a low average RMSVE for a broad range of its parameters. All solid curves denote the mean, and the shaded region or error bars denote one standard error.}
    \label{fig:centering-diffq}
\end{figure*}

In this section, we demonstrate that the technique introduced in Section \ref{sec:centering} of the main text results in the estimation of the actual (centered) differential value function. As mentioned earlier, the technique is general and can be used with any average-reward algorithm proposed in this paper. For illustration, we use this technique with Differential Q-learning on the Two Loop task described in Section \ref{sec:prediction-experiments} of the main text.

We first applied RVI Q-learning and (uncentered Differential Q-learning on this problem). RVI Q-learning was run with
the reference function which is the value of a single reference state--action pair, $f=Q(s_0,a_0)$, for which we considered all possible state--action pairs as $s_0$ and $a_0$: state 0 with actions \texttt{left} and \texttt{right}, and states 1–8 with the action $a$. Both algorithms were run with a range of step sizes $\alpha \in \{0.025, 0.5, 0.1, 0.2, 0.4\}$, each for 30 runs and 10,000 steps. Differential Q-learning was run with a range of $\eta$ values: $\{0.125, 0.25, 0.5, 1, 2\}$. Both algorithms had an $\epsilon$-greedy behavior policy with $\epsilon = 0.1$ and no annealing. The value function for both the algorithms and the reward rate estimate for Differential Q-learning was initialized to zero.
We then applied Centered Differential Q-learning (refer to Algorithm \ref{algo:diffQ-centered} for the pseudocode) on the Two Loop MDP, for 30 runs and 10,000 steps with the parameters $\beta$ and $\kappa$ chosen from $\{0.025, 0.05, 0.1, 0.2, 0.4\}$ and $\{0.125, 0.25, 0.5, 1, 2\}$ respectively. $\alpha$ and $\eta$ were chosen as ones that achieved performed well for (uncentered) Differential Q-learning ($\alpha=0.4, \eta=0.5$) because the centering technique in its current form learns the offset separately and does not affect the value estimates during learning. The learned offset was subtracted from the value estimates every time the RMSVE\footnote{Note that we no longer need Tsitsiklis and Van Roy's (1999) variant of the RSMVE (which we earlier used and denoted as `RMSVE (TVR)') because now we want to compute the error w.r.t.\ the actual differential value function.} was computed. The stationary state--action distribution and the action-value function for the optimal policy is shown in Table \ref{tab:centering-twoloop-opt}, which were again obtained by solving the Bellman equations with the constraint that $d_\pi^T v_\pi = 0$. The estimates of the secondary estimator were also initialized to zero in every run.

A learning curve for each of the algorithms is shown in the left side of Figure \ref{fig:centering-diffq}. For RVI Q-learning and (uncentered) Differential Q-learning, these are corresponding to the parameter settings that resulted in the largest reward rate averaged over the training period (reference state 8, action $a$ with $\alpha=0.4$ for RVI Q-learning and $\alpha=0.4$ with $\eta=0.5$ for Differential Q-learning). 
For Centered Differential Q-learning, the learning curve is plotted for the parameters that resulted in the lowest RMSVE averaged over the course of training ($\beta=0.4, \kappa=0.125$). 

We saw that Centered Differential Q-learning converged to a differential value function with zero RMSVE, in other words, the centering technique proposed in Section \ref{sec:centering} of the main text succeeds in estimating the offset correctly, which is subtracted from the value estimates to result in the centered differential value function. RVI Q-learning and (uncentered) Differential Q-learning also converged to some particular value functions with some offset from the centered differential value function. Note that there was a lot of variance in the values estimated by RVI Q-learning till it converged after about 6000 steps. The source of this variance needs further investigation. 
Also shown on the right of Figure \ref{fig:centering-diffq} is the sensitivity of the performance of the centering technique to its two parameters $\beta$ and $\kappa$. We saw that in this task where the transitions were mostly deterministic, larger step sizes $\beta$ could be used, and the value of $\kappa$ only had a small effect.

This experiment shows that the technique introduced in Section \ref{sec:centering} of the main text can learn the centered differential value function. We demonstrated this with Differential Q-learning, an off-policy control learning algorithm, and we expect this technique to work with other combinations of settings as well: on-policy and off-policy, prediction and control, learning and planning. 


\clearpage

\section{Additional Discussion}
\label{app:additional}

\subsection{Yang et al.'s (2016) convergence results are incorrect} 
\label{app: Mistakes in CSV-learning}

We show that the proofs of two lemmas leading up to the convergence theorem of CSV-learning are not valid. The proofs would be valid if both the transition and reward dynamics of the MDP are deterministic. However the assumption of the MDP being deterministic is not made in the paper. We begin by presenting the relevant assumptions, definitions, and lemmas from the paper (Yang et al.~2016).

\textbf{Assumption 1}: \textit{The MDP is irreducible, aperiodic, and ergodic; that is, under any stationary policy, the generated Markov chain is communicating and has a recurrent state.}

\textbf{Definition 1}: $d_t(s',s) \doteq \max_{a'} Q_t(s',a') - \max_a Q_t(s,a)$

\textbf{Lemma 1}: \textit{In the Markov chain $M_t$ under $\pi_t$, $d_t(s',s) = \rho_{\pi_t} - r(s,\pi_t(s))$.}

Let us see why this lemma is incorrect:
\begin{align*}
    d_t(s',s) &\doteq \max_{a'} Q_t(s',a') - \max_a Q_t(s,a) \\
    &= \max_{a'} Q_t(s',a') - Q_t(s,\pi_t(a)) \\
    &= \max_{a'} Q_t(s',a') - \sum_{s^n,r} p(s^n,r \mid s,\pi_t(s)) \big[ r - \rho_{\pi_t} + \max_{a^n} Q(s^n,a^n) \big] \\
    &= \max_{a'} Q_t(s',a') - \big[ r(s,\pi_t(s)) - \rho_{\pi_t} + \sum_{s^n,r} p(s^n,r \mid s,\pi_t(s)) \max_{a^n} Q(s^n,a^n) \big] \\
    &= \rho_{\pi_t} - r(s,\pi_t(s)) + \max_{a'} Q_t(s',a') - \sum_{s^n,r} p(s^n,r \mid s,\pi_t(s)) \max_{a^n} Q(s^n,a^n) \\
    &\neq \rho_{\pi_t} - r(s,\pi_t(s))
\end{align*}
The equality does not hold in general, only if there is a deterministic transition from state $s$ to $s'$. 

\textbf{Lemma 2}: \textit{If $\pi_t$ is stable, then $\rho_{\pi_t} \geq \hat{\rho}$.} ($\hat{\rho}$ is the constant or fixed estimate of the reward rate used by the CSV-learning algorithm)

This is also incorrect because:
\begin{align*}
    \Delta Q_t(s,\pi_t(s)) &= \alpha \big( r - \hat{\rho} + \max_{a'} Q_t(s',a') - Q_t(s,\pi_t(s)) \big) \\
    &= \alpha \big( r - \hat{\rho} + \max_{a'} Q_t(s',a') - \max_a Q_t(s,a) \big) \\
    &= \alpha \big( r - \hat{\rho} + d_t(s',s) \big) \ \ \,\qquad\qquad \text{(Definition 1)} \\
    &= \alpha \big( r - \hat{\rho} + \rho_{\pi_t} - r(s,\pi_t(s))\big) \quad \text{(Assuming a deterministic transition from $s$ to $s'$)} \\
    &\neq \alpha \big( \rho_{\pi_t} - \hat{\rho} \big)
\end{align*}
Again, the equality does not hold in general, but only if the rewards are deterministic as well. In other words, the expected reward from a given state after taking an action according to the policy ($r(s,\pi_t(s)$) is equal to the immediate reward ($r$). Note we also had to assume there is a deterministic transition from $s$ to $s'$.

Thus, Theorem 1 only holds if the both the transition and reward dynamics of the MDP are deterministic. The paper does not state this assumption\footnote{In any case, assuming MDPs are completely deterministic is a significantly restrictive and unrealistic assumption.}
, which invalidates the proof.


\subsection{Average Cost TD-learning cannot be extended to the off-policy case by adding an importance-sampling ratio}

Average Cost TD-learning cannot be extended to the off-policy setting by simply adding an importance-sampling (IS) ratio as it only corrects the mismatch in targets due to misalignment between actions taken by the target and behavior policies. The IS ratio does not correct the mismatch in distribution of updates, which is also required. However, using the TD error instead of the conventional error as in Differential TD-learning to update the reward-rate estimate only requires correction to the mismatch in targets and not to the mismatch in distribution of updates, and hence the addition of the IS ratio suffices. Both of these claims are substantiated below. 

Consider the update made by the Average Cost TD-learning algorithms to the reward-rate estimate in the on-policy setting:
\begin{align}
    \bar{R}_{t+1} &= \bar{R}_t + \eta\alpha_t \big( R_{t+1} - \bar{R}_{t} \big) \label{eq:avgcosttd_barr}
\end{align}

At convergence the expected update is zero:
\begin{align*}
    0 &= \mathbb{E}[R_{t+1} - \bar{R}_t]\\
    0 &= \sum_s d_\pi(s) \sum_a \pi(a|s) \sum_{s',r} p(s',r \mid s,a) (r - \bar{R}_\infty) \\
    0 &= \sum_s d_\pi(s) \sum_a \pi(a|s) \sum_{s',r} p(s',r \mid s,a)\ r - \bar{R}_\infty\\
    0 &= r(\pi) - \bar{R}_\infty  \qquad \text{(by definition)} \\
    \implies\ \bar{R}_\infty &= r(\pi)
\end{align*}
where $d_\pi(s)$ is the steady-state distribution over states when following policy $\pi$, and $\bar{R}_\infty$ is the point where $\bar{R}_t$ converges, and $r(\pi)$ is the true reward rate of the policy $\pi$. The above equations show that in the on-policy setting, the reward-rate estimate of Average Cost TD-learning converges to the true reward rate of the target policy.

Adding an importance-sampling ratio to Average Cost TD-learning to extend it to the off-policy setting does not work because the point of convergence is no longer the reward rate of the target policy $\pi$ when following behavior policy $b$.

Proposed off-policy reward-rate update: $\bar{R}_{t+1} = \bar{R}_t + \eta\alpha_t \rho_t \big( R_{t+1} - \bar{R}_{t} \big)$.

Following a similar analysis at the point of convergence:
\begin{align*}
    0 &= \mathbb{E}\big[\rho_t(R_{t+1} - \bar{R}_t)\big]\\
    0 &= \sum_s d_b(s) \sum_a b(a|s) \sum_{s',r} p(s',r \mid s,a) \frac{\pi(a|s)}{b(a|s)} (r - \bar{R}_\infty) \\
    0 &= \sum_s d_b(s) \sum_a \pi(a|s) \sum_{s',r} p(s',r \mid s,a) (r - \bar{R}_\infty) \\
    0 &= \sum_s d_b(s) \sum_a \pi(a|s) \sum_{s',r} p(s',r \mid s,a)\ r - \bar{R}_\infty \\
    \implies\ \bar{R}_\infty &= \sum_s d_b(s) \sum_a \pi(a|s) \sum_{s',r} p(s',r \mid s,a)\ r \\
    &\neq r(\pi)
\end{align*}

With the proposed off-policy reward-rate update for Average Cost TD-learning, the point of convergence for the reward-rate estimate is no longer the reward rate of the target policy $\pi$ because the IS ratio only corrects for the mismatch in targets and not the mismatch in the distribution of updates.

Now consider the update to the reward-rate estimate in case of off-policy Differential TD-learning:
\begin{align}
    \bar{R}_{t+1} = \bar{R}_t + \eta\alpha_t \rho_t \big( R_{t+1} - \bar{R}_{t} + V(S_{t+1}) - V(S_t) \big) \label{eq:difftd-on_barr}
\end{align}
At convergence, 
\begin{align*}
    0 &= \mathbb{E}\Big[\rho_t\big(R_{t+1} - \bar{R}_t + V_t(S_{t+1} - V_t(S_t)\big)\Big]\\
    0 &= \sum_s d_b(s) \sum_a b(a|s) \sum_{s',r} p(s',r \mid s,a) \frac{\pi(a|s)}{b(a|s)} \big(r - \bar{R}_\infty + v_\infty(s') - v_\infty(s)\big) \\
    0 &= \sum_s d_b(s) \sum_a \pi(a|s) \sum_{s',r} p(s',r \mid s,a) \big(r - \bar{R}_\infty + v_\infty(s') - v_\infty(s)\big) \\
    0 &= \sum_s d_b(s) \Big[ \sum_a \pi(a|s) \sum_{s',r} p(s',r \mid s,a) \big(r - \bar{R}_\infty\big) + \sum_a \pi(a|s) \sum_{s',r} p(s',r \mid s,a) \big(v_\infty(s') - v_\infty(s)\big) \Big] \\
    0 &= \sum_s d_b(s) \Big[ \sum_a \pi(a|s) \sum_{s',r} p(s',r \mid s,a) \big(r - \bar{R}_\infty\big) + \sum_a \pi(a|s) \sum_{s',r} p(s',r \mid s,a) \big(r - r(\pi)\big) \Big] \\
    0 &= \sum_s d_b(s) \sum_a \pi(a|s) \sum_{s',r} p(s',r \mid s,a)\big(r - \bar{R}_\infty - r + r(\pi) \big)\\
    0 &= \sum_s d_b(s) \sum_a \pi(a|s) \sum_{s',r} p(s',r \mid s,a)\big(r(\pi) - \bar{R}_\infty \big)\\
    0 &= r(\pi) - \bar{R}_\infty \\
    \implies\ \bar{R}_\infty &= r(\pi)
\end{align*}

The above holds due to the $v_\infty$ being a solution to the Bellman equation by Theorem \ref{Differential TD-learning}: $v_\infty(s) = \sum_a \pi(a|s) \sum_{s',r} p(s',r \mid s,a) \big(r - r(\pi) + v_\infty(s')\big)$.

Hence, Average Cost TD-learning is inherently an on-policy algorithm. It cannot be extended to the off-policy case simply by using an importance-sampling ratio in its updates. On the other hand, the usage of the TD-error to update the reward-rate estimate instead of the conventional error enables Differential TD-learning to converge to the right solution with the usage of the importance-sampling ratio in the off-policy case.


\subsection{Discussion about other off-policy prediction methods in the literature}

In this section, we shed some light on why existing off-policy prediction methods for average-reward MDPs (Liu et al.\ 2018, Tang et al.\ 2019, Mousavi et al.\ 2020, Zhang et al.\ 2020a,b) are not guaranteed to converge to the reward rate of the target policy. The primary characteristic of these methods is that they estimate only the reward rate and not the differential value function. To estimate the reward rate, they first estimate the ratio of the stationary distribution under the target policy and under the behavior policy, and then use that to estimate the reward rate using. The first step is difficult, after which the second step is straightforward. These methods use different ways to estimate the ratio using a batch of data. 

While these methods are developed for the function approximation setting, none of them can guarantee convergence to the reward rate even in the tabular setting with an infinite-sized batch of data. Liu et al.\ (2018), Tang et al.\ (2019), and Mousavi et al.\ (2020) used a self-normalization trick in their methods, which typically leads to a biased solution (Zhang et al.\ 2020a). 
Zhang et al.\ (2020a) used a primal-dual approach and their algorithm optimizes a min-max objective; however, this objective may not be convex-concave even in the tabular setting (Zhang et al.\ 2020b). Therefore there could be multiple solutions to that objective and their algorithm will not in general obtain the one corresponding to the true reward rate. Finally, Zhang et al.\ (2020b) proposed optimizing a convex-concave saddle-point problem that has a unique solution. However, because of a regularization term in the objective, the unique solution of the objective in general does not yield the true reward rate---only a biased one---even in the tabular case.


\clearpage

\section{Extensions of our Differential Algorithms to the Setting of Linear Function Approximation}
\label{app:extensions_lfa}

We consider the setting in which at each timestep $t$, the agent observes a feature vector $\mbf{x}_t$ representing the state of the environment, takes a discrete action $A_t$, observes the next feature vector $\mbf{x}'_t$ and the scalar reward signal $R_{t+1}$. The agent approximates the action-value function at each timestep $t$ as a linear function of the feature vector: $\hat{q}_t \doteq \mbf{w}_{A_t}^\top \mbf{x}_t$ (for feature vectors of dimension $d$, the action-value function is parameterized with $|\cal A|$ number of $d$-dimensional weight vectors).

\begin{algorithm}[h]
\SetAlgoLined
\SetKwInput{AP}{Algorithm parameters}
\AP{step-size parameters $\alpha$, $\eta$}
Initialize $\mbf{w}_a \in \mathbb{R}^d\ \forall a$ and $\bar{R} \in \mathbb{R}$ arbitrarily (e.g., to zero) \\
 Obtain initial observation vector $\mbf{x}$ \\
 \For{each timestep}
 {
    Take action $A$ (using, say, an $\epsilon$-greedy policy w.r.t.\ $\hat{q}$), obtain $R,\mbf{x}'$ \\
    $\delta = R - \bar{R} + \max_a \mbf{w}_a^T\mbf{x'} - \mbf{w}_A^T\mbf{x}$ \\
    $\mbf{w}_A \leftarrow \mbf{w}_A + \alpha\,\delta\,\mbf{x}$ \\
    $\bar{R} \leftarrow \bar{R} + \eta\,\alpha\,\delta$ \\
    $\mbf{x} = \mbf{x}'$ \\
 } 
 \caption{Differential Q-learning with linear function approximation}
 \label{algo:diffQ_LFA}
\end{algorithm}

This algorithm is a straightforward extension of the tabular version of Differential Q-learning (Algorithm \ref{algo:diffQ-uncentered}). Similar extensions exist for Differential Q-planning, Differential TD-learning, Differential TD-planning, and the centered versions of these algorithms.

\subsection{Extension of the notion of reference functions to the function approximation setting is not as straightforward}

Compared to Differential Q-learning, extensions of the tabular version of RVI Q-learning (Abounadi et al.\ 2001) to the case of (linear) function approximation are not as straightforward. RVI Q-learning requires the value of a reference function $f$ to be computed at every timestep $t$, where $f$ is a function over the current estimates of the value estimates $\hat{q}_t$. 
Some difficulties that arise with the first attempts of extending the reference functions suggested by Abounadi et al.\ to the function approximation setting:

\begin{itemize}\itemsep0mm
    \item Reference function is the mean of all action-value estimates: $f(\hat{q}_t) \doteq \frac{1}{|\calS||\calA|} \sum_{s,a} \hat{q}_t(s,a)$ \\
    It is easy to see why the computation of this quantity is problematic in the function approximation setting: unlike the tabular setting, the agent does not have access to the underlying states.\footnote{An alternative problem setting when function approximation is used is when the agent does have access to the underlying states, but they are too many to enumerate (e.g., in a table). In this case $|\calS|$ is either unknown, or too large (making $\frac{1}{|\calS|}$ too small).}
    
    \item Reference function is the max of all action-value estimates: $f(\hat{q}_t) \doteq \max_{s,a} \hat{q}_t(s,a)$ \\
    In the tabular setting, it is straightforward to compute the max of the action-value function over all state--action pairs. In the function approximation setting, computing this quantity would again require access to all the underlying states, which the agent does not have.
    
    \item Reference function is the action-value estimate of  a single reference state--action pair ($s_0,a_0$): $f(\hat{q}_t) \doteq \hat{q}_t(s_0,a_0)$ \\
    Again, the agent does not have access to any underlying state in the function approximation setting. Instead, one might consider using a value of a \textit{reference feature vector} with an action as the reference function. The question then becomes what the reference feature vector should be, among the infinite choices in $\mathbb{R}^d$. \\
    Based on our observations in the tabular setting, we hypothesize that the performance of RVI Q-learning with a reference feature vector would depend on the frequency with feature vectors similar to the reference feature vector occur under the optimal policy for the given problem.
\end{itemize}

Based on the above discussion, we can attempt to create a couple of reference functions for the function approximation setting, for instance, the action-value estimate corresponding to the \textit{first} feature vector the agent observes along the action of moving left. There is no way to compute the max exactly, but perhaps we can try using 
the maximum of the set of estimated action values corresponding to the feature vectors when they are observed. These are just some first attempts; further research is required to develop theoretically-grounded reference functions for the function approximation setting. 

If we have good ways of computing such reference functions at each timestep, the linear function approximation version of RVI Q-learning would look similar to Algorithm \ref{algo:diffQ_LFA}, except the TD-error term in line 5 would be: $\delta = R - f(\hat{q}) + \max_a \mbf{w}_a^T\mbf{x'} - \mbf{w}_A^T\mbf{x}$, and there would be no update to a reward-rate estimate like in line 7.


\subsection{Preliminary experimental results in the linear function approximation setting}

\begin{figure*}[b!]
    \centering
    \begin{subfigure}{.5\textwidth}
    \centering
    \includegraphics[width=0.7\textwidth]{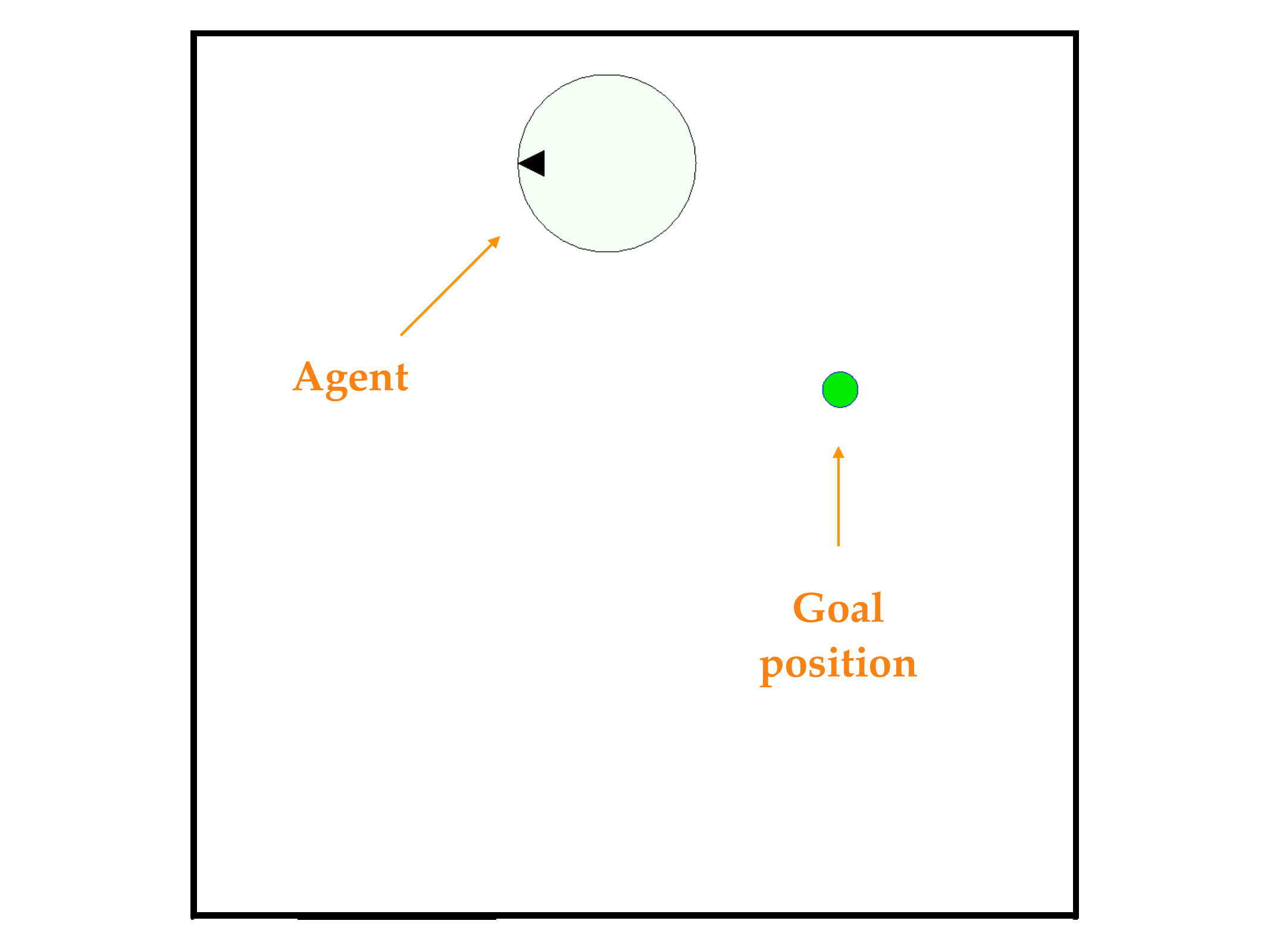}
    \end{subfigure}%
    \begin{subfigure}{.5\textwidth}
    \centering
    \includegraphics[width=0.87\textwidth]{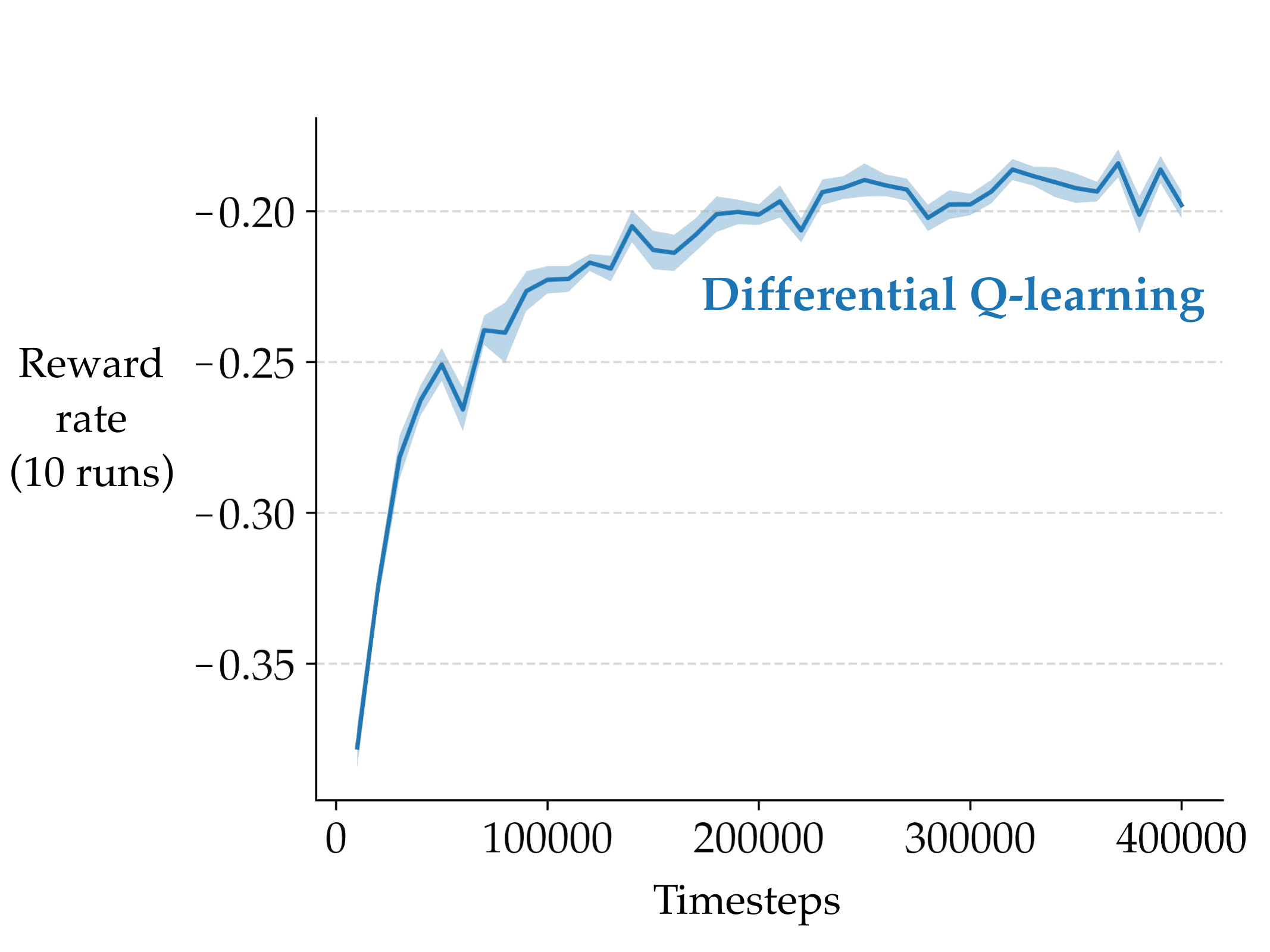}
    \end{subfigure}
    \begin{subfigure}{.5\textwidth}
    \centering
    \includegraphics[width=0.87\textwidth]{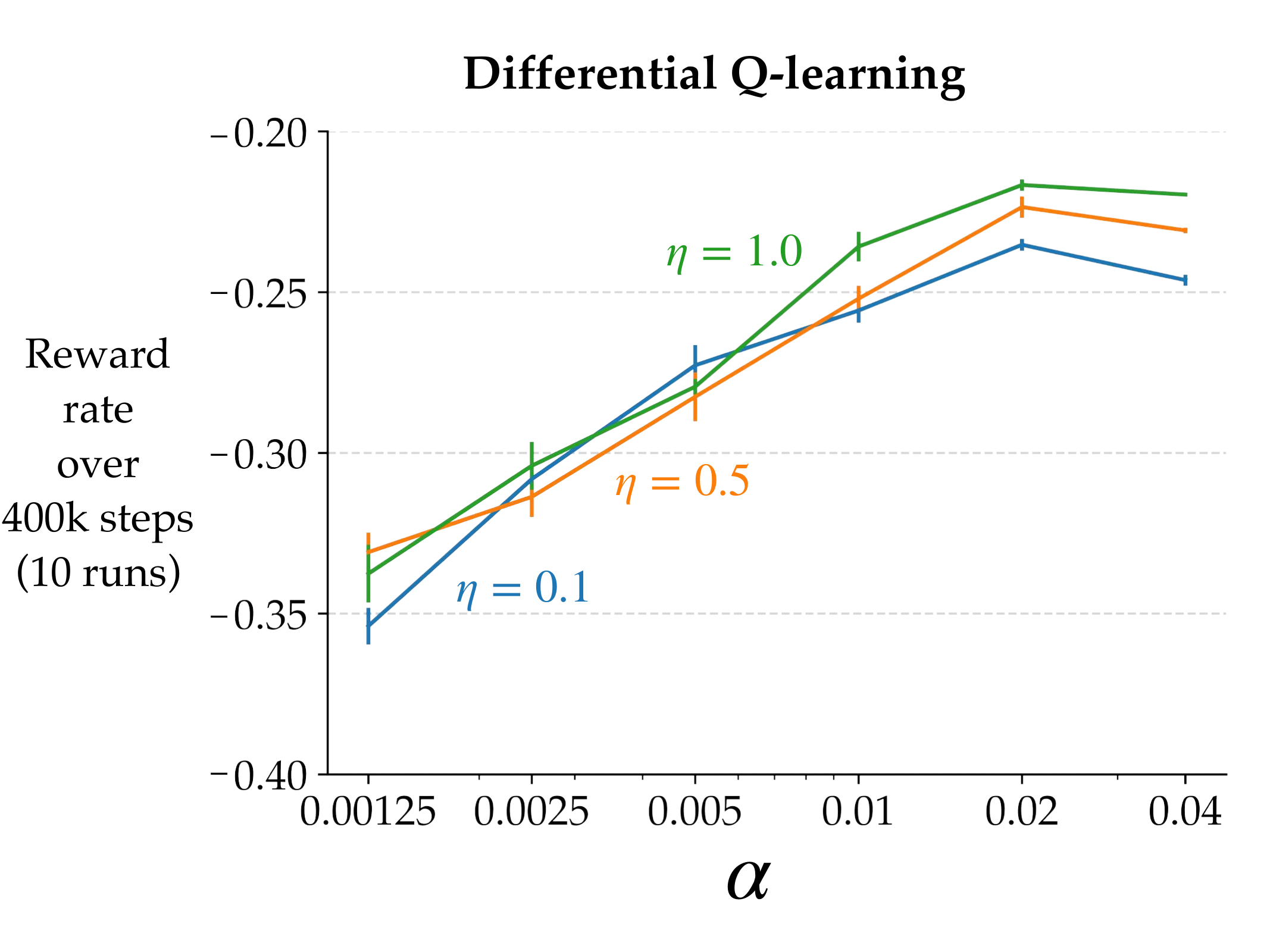}
    \end{subfigure}%
    \begin{subfigure}{.5\textwidth}
    \centering
    \includegraphics[width=0.87\textwidth]{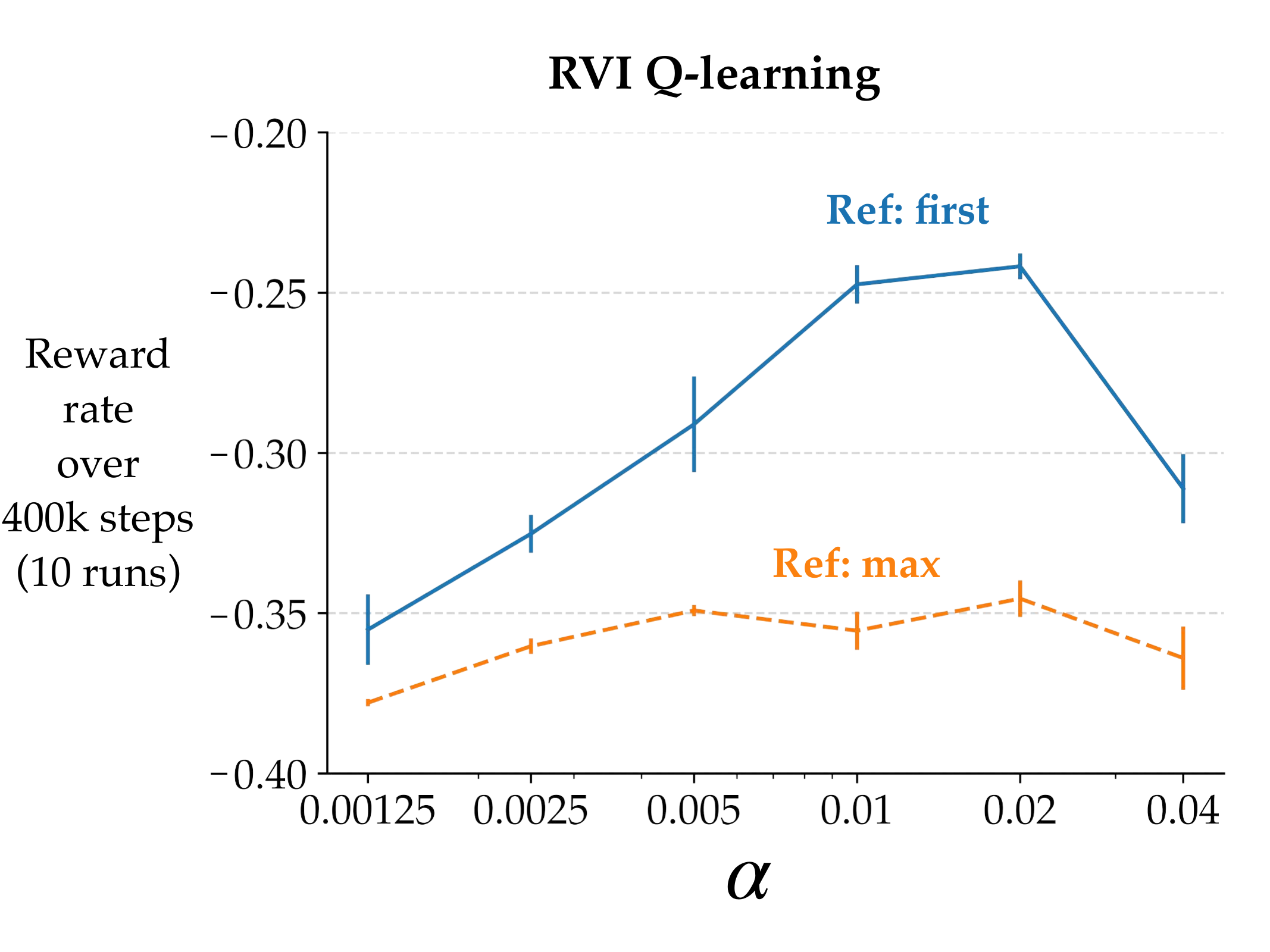}
    \end{subfigure}
    \caption{A learning curve and parameter studies for the linear function approximation versions of Differential Q-learning and RVI Q-learning on the PuckWorld problem. The shaded region and the error bars in the plots represent one standard error. \textit{Top-left:} A still of the PuckWorld domain showing the agent and the goal position. \textit{Top-right:} A typical learning curve started roughly at a reward rate of -0.4 and rose to about -0.19. \textit{Bottom-left}: Parameter studies showing the performance of Differential Q-learning in terms of average reward rate was not very sensitive to the choice of parameters. \textit{Bottom-right}: Parameter studies showing the performance of RVI Q-learning is relatively good when the reference function is the first observed feature vector, and relatively worse for the other reference function for a broad range of step sizes.}
    \label{fig:results-control-puckworld}
\end{figure*}

We performed a couple of preliminary experiments in the linear function approximation setting using the PuckWorld and Catcher domains from the PyGame Learning Environment\footnote{https://github.com/ntasfi/PyGame-Learning-Environment}. 

In the PuckWorld problem, the agent needs to reach the position marked by the green circle, which moves to a different location after every few timesteps. A still of the environment is shown in the top-left panel of Figure \ref{fig:results-control-puckworld}. At every timestep, the agent can take one of four actions — left, right, up, down — which move the agent in that direction by a small amount. Repeated actions in the same direction build some velocity in that direction, which decays at an exponential rate at every timestep. At every timestep, the agent gets a reward proportional to its distance to the goal position. This reward is typically negative and becomes zero when the agent reaches the goal position. At every timestep, the agent observes a six-dimensional feature vector of its horizontal position, vertical position, horizontal velocity, vertical velocity, target's horizontal position, target's vertical position. The positions and velocities are scaled to lie in $[0,1]$ and $-[1,1]$ respectively. After a regular interval of timesteps, the goal position is uniform-randomly initialized in the two-dimensional space. 

We applied the linear function approximation versions of Differential Q-learning and RVI Q-learning on this problem. RVI Q-learning used the two reference functions discussed earlier in this section: (1) the action-value estimate corresponding to the \textit{first} feature vector the agent observed when moving left, and (2) the maximum of the set of estimated action values corresponding to the feature vectors when they are observed. 
\footnote{the max was tracked online without storing all the previously observed feature vectors}. Both algorithms used tile coding (Sutton \& Barto 2018: Section 9.5.4) with 16 symmetric tilings of $2\times2\times2\times2\times2\times2$ tiles each. The weight vectors of both algorithms and the reward-rate estimate of Differential Q-learning was initialized to zero. The step-size parameter $\alpha$ was varied for both algorithms in the range $\{0.00125, 0.0025, 0.005, 0.01, 0.02, 0.04\}$. The parameter $\eta$ for Differential Q-learning was varied in $\{0.1, 0.5, 1.0\}$. Each instance of parameters was applied for 10 runs of 400,000 timesteps each. Both algorithms used an $\epsilon$-greedy policy with $\epsilon=0.1$ and no annealing. 

The top-right panel of Figure \ref{fig:results-control-puckworld} shows a typical learning curve on an instance of this problem where the goal positions is changed after every 100 timesteps. Using an $\epsilon$-greedy policy with $\epsilon=0.1$, the agent learns a policy that obtains a reward rate (computed over the last 10k steps) of about -0.19. The reward rate of a random policy is around -0.4. This learning curve corresponds to Differential Q-learning with $\alpha=0.02, \eta=1.0$. The learned policy was visualized and seen to be good everywhere except at the very edges of the two-dimensional space, which was probably an artifact of tile-coding.

We evaluated the performance of the agents across all the different parameter settings in terms of the average reward rate across the entire 400k timesteps of interaction. This is an indicator of the rate of learning. We observed that Differential Q-learning's rate of learning was quite robust to the parameter $\eta$. Its two parameters did not interact strongly; the best value of $\alpha$ was independent of the choice of $\eta$. Moreover, the best performance for different $\eta$ values was roughly the same. These observations were similar to those in the tabular case (see Section \ref{sec:control-exps} in the main text). 

RVI Q-learning also performed well on this problem for one choice of the reference function---the value estimate corresponding to the first feature vector the agent observes (with the `left' action). The performance corresponding to the other reference function tested---tracking the maximum value of the observed feature vectors online---was not as good. This might be because unlike the tabular setting, updating the weights corresponding to one feature vector also modifies the estimate for other feature vectors, making the max hard to track. The best rate of learning corresponding to the better-performing reference function was slightly lower than that with Differential Q-learning.

We now move on to the second experiment in the linear function approximation setting. In the Catcher problem, the agent needs to catch as many falling fruits as possible. A still of the environment is shown in the top-left panel of Figure \ref{fig:results-control-catcher}. `Fruits' fall vertically down from a uniformly-random horizontal position starting at the top of the frame. The agent can control the position of a `crate' at the bottom of the frame using two actions — left and right — which move the crate in that direction by a small amount. If the fruit falls on/in the crate, the agent gets a reward of +40; if the fruit falls anywhere outside at the bottom of the frame, the agent gets -40. The next fruit starts falling only after the previous fruit has reached the bottom of the frame. A fruit takes roughly 40 timesteps to reach the the bottom starting from the top. Hence, the maximal reward rate on this problem is 1. At every timestep, the agent observes a four-dimensional feature vector of the crate's horizontal position, the crate's horizontal velocity, the fruit's horizontal position, and the fruit's vertical position. The positions and velocity are scaled to lie roughly in $[0,1]$ and in $[-1,1]$ respectively. 

All the experimental details are the same as for PuckWorld, the only difference being that both algorithms used tile coding with 8 symmetric tilings of $4\times4\times4\times4$ tiles each. 

The top-right panel of Figure \ref{fig:results-control-catcher} shows a typical learning curve on this problem. Using an $\epsilon$-greedy policy with $\epsilon=0.1$, the agent learns a policy that obtains a reward rate of about 0.85, which is close to the optimal reward rate of 1. The reward rate of a random policy is around -0.3. The learning curve shown corresponds to Differential Q-learning with $\alpha=0.02, \eta=1.0$.

\begin{figure*}[t!]
    \centering
    \begin{subfigure}{.5\textwidth}
    \centering
    \includegraphics[width=0.75\textwidth]{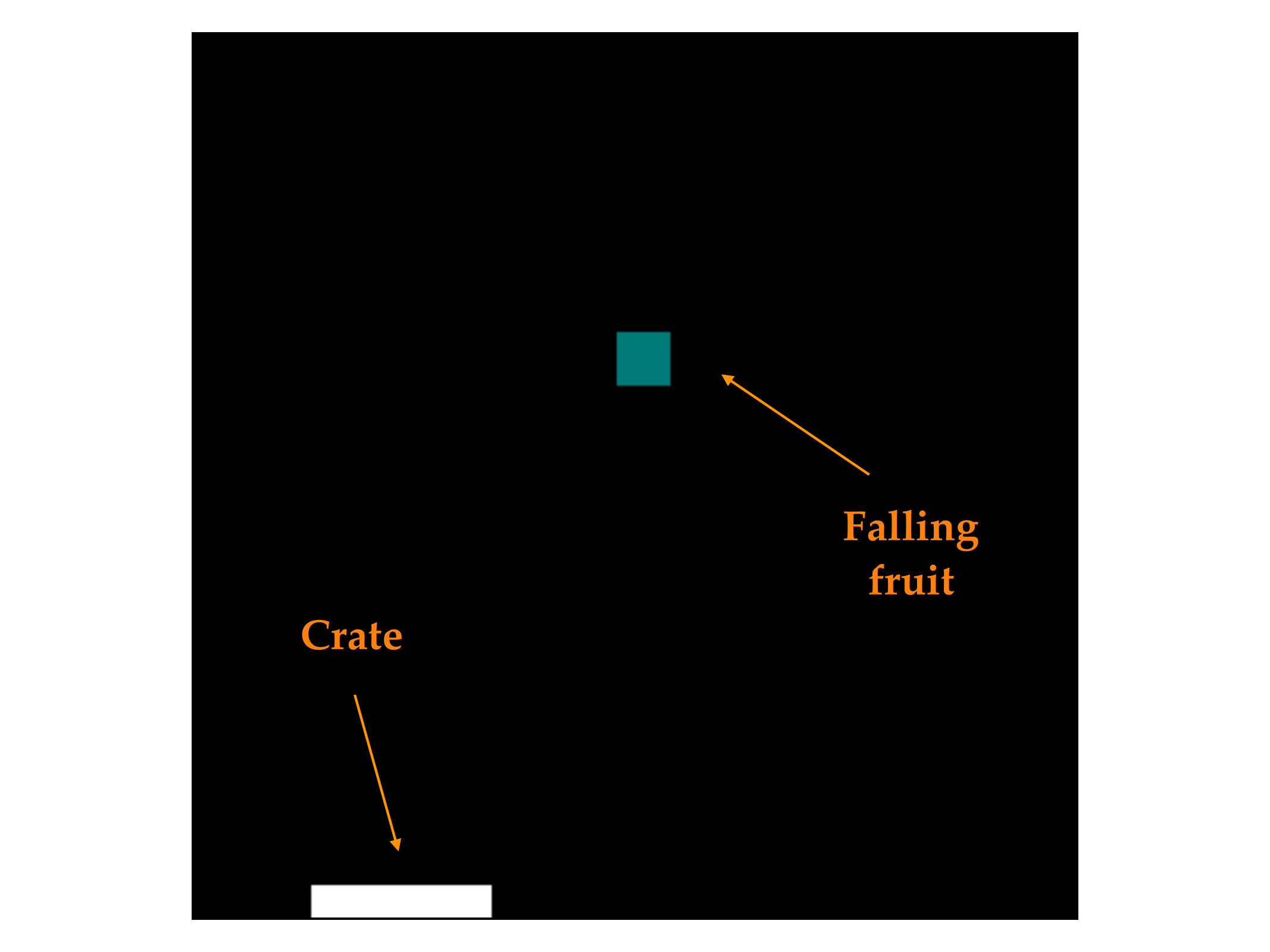}
    \end{subfigure}%
    \begin{subfigure}{.5\textwidth}
    \centering
    \includegraphics[width=0.95\textwidth]{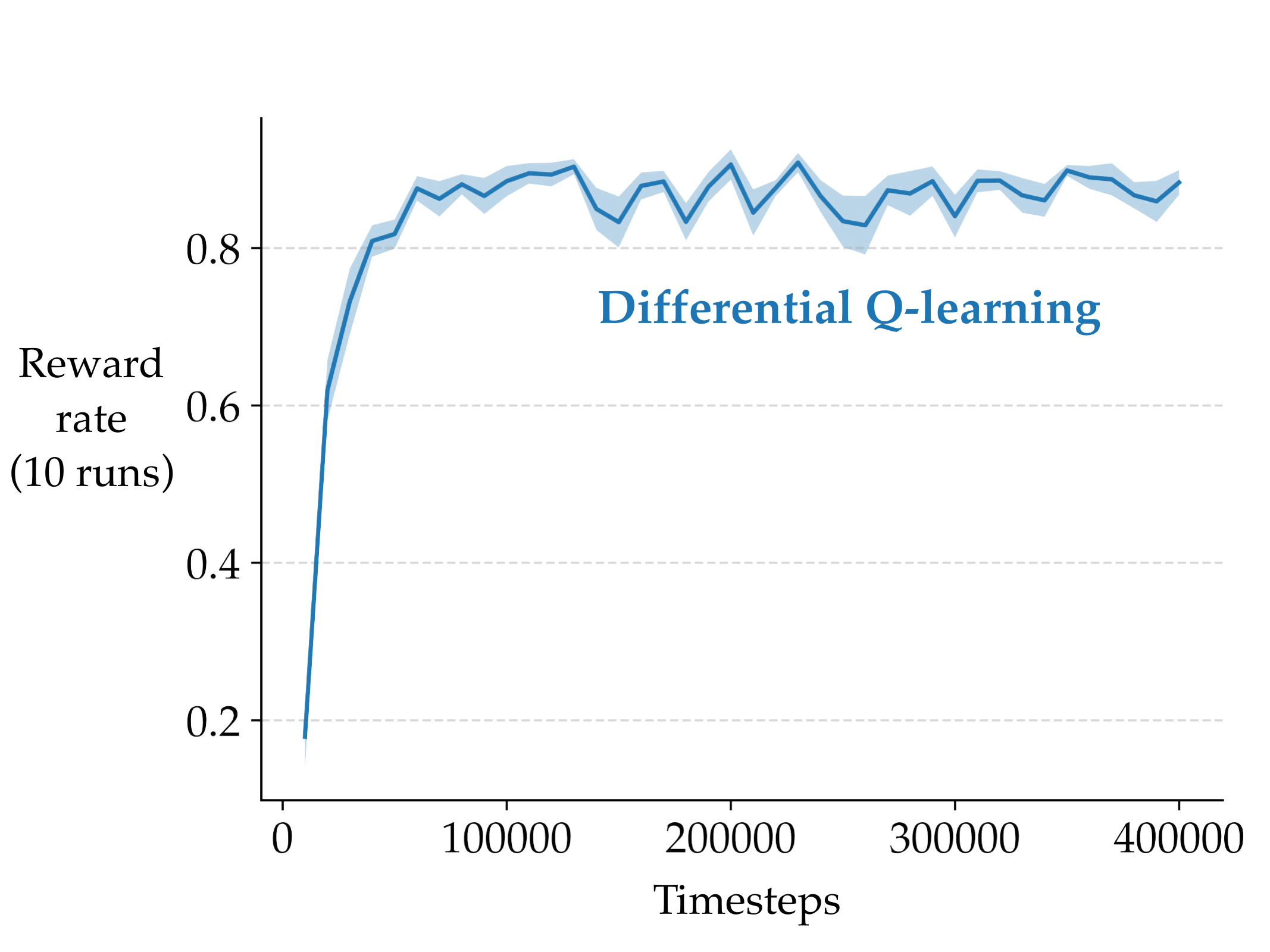}
    \end{subfigure}
    \begin{subfigure}{.5\textwidth}
    \centering
    \includegraphics[width=0.95\textwidth]{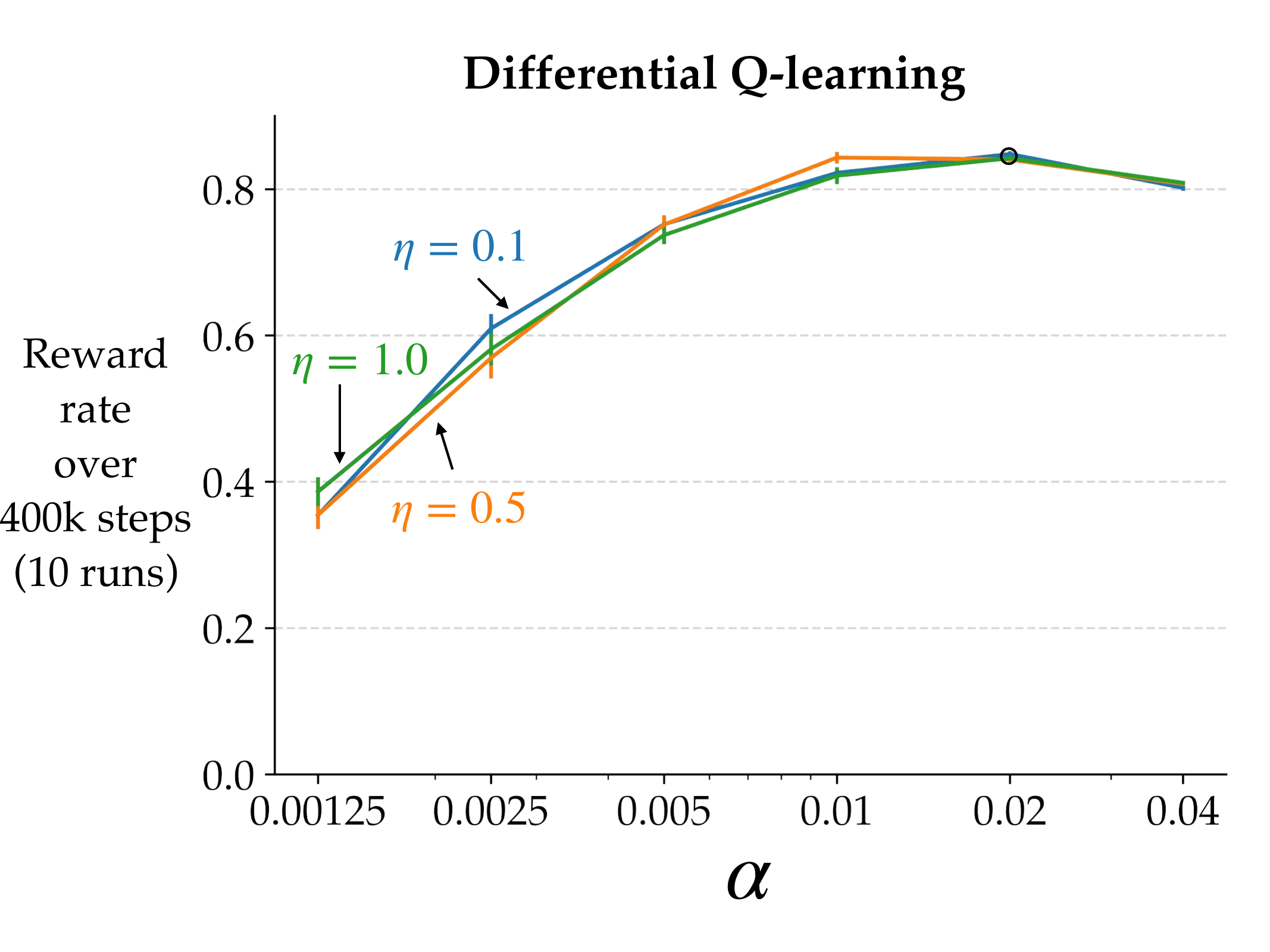}
    \end{subfigure}%
    \begin{subfigure}{.5\textwidth}
    \centering
    \includegraphics[width=0.95\textwidth]{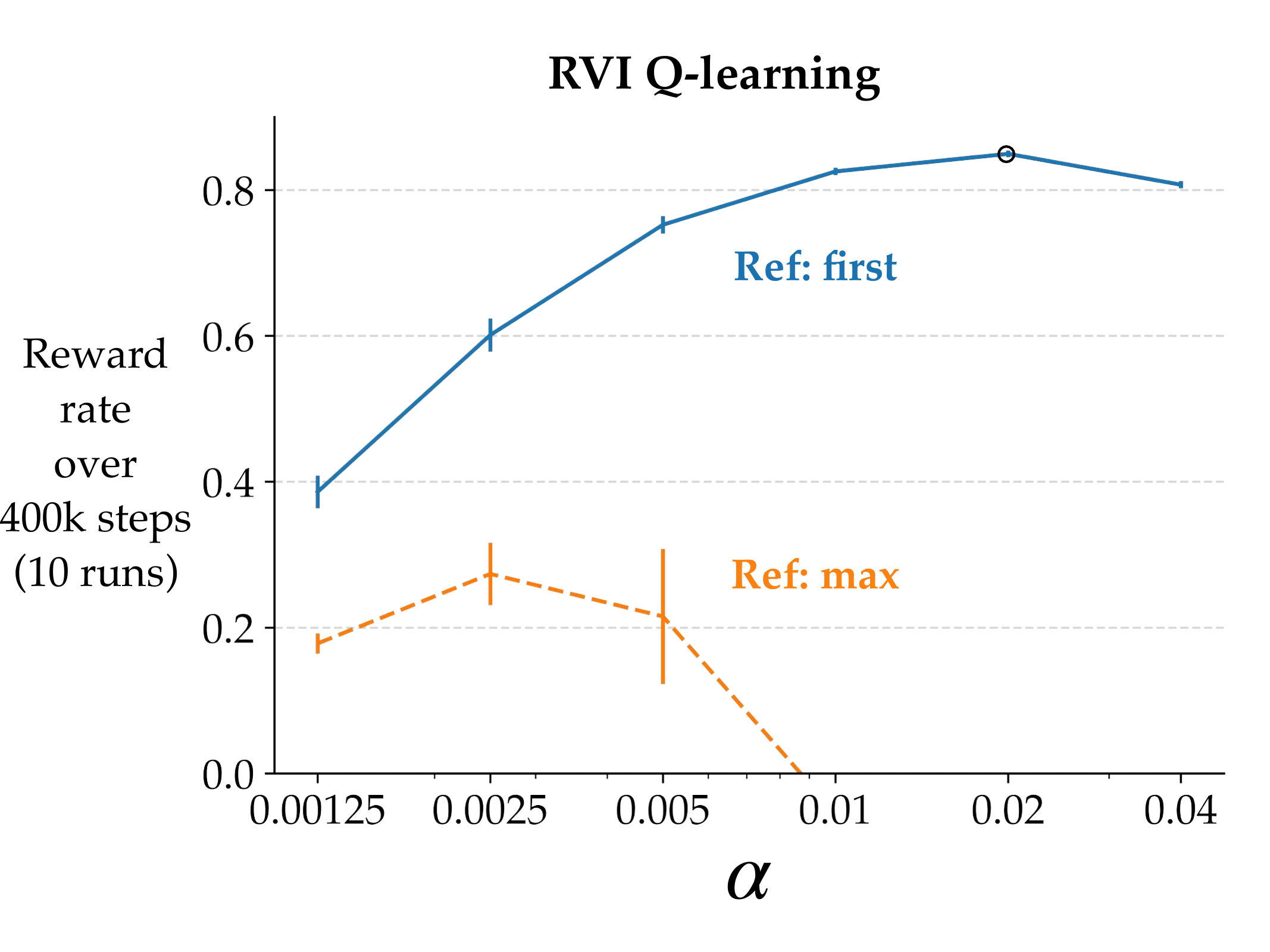}
    \end{subfigure}
    \caption{A learning curve and parameter studies for the linear function approximation versions of Differential Q-learning and RVI Q-learning on the Catcher problem. The shaded region and the error bars in the plots represent one standard error. \textit{Top-left:} A still of the Catcher domain showing a falling fruit and the crate that the agent controls along the horizontal dimension at the bottom. \textit{Top-right:} A typical learning curve started close to a reward rate of 0 and rose to about 0.9. \textit{Bottom-left}: Parameter studies showing the performance of Differential Q-learning in terms of average reward rate was not very sensitive to the choice of parameters. \textit{Bottom-right}: Parameter studies showing the performance of RVI Q-learning is relatively good when the reference function is the first observed feature vector, and relatively worse for the other reference function for a broad range of step sizes.
    }
    \label{fig:results-control-catcher}
\end{figure*}

Again, we evaluated the rate of the learning of the agents across different parameter settings. We again observed that Differential Q-learning's rate of learning did not vary much across a broad range of its parameter values. It was also especially robust to $\eta$. The linear function approximation version of RVI Q-learning also performed well for one choice of the reference function, not as much with the other. The learned policies corresponding to good parameter values for both algorithms successfully catch almost every fruit. 

For RVI Q-learning, using the estimate of the first observed feature vector as a reference value worked better in Catcher than in PuckWorld. This might be because the agent might be observing feature vectors similar to the first one quite frequently, given that the crate has to move across the whole one-dimensional horizontal plane under any optimal policy. On the other hand, the agent moves in a relatively larger two-dimensional space in PuckWorld. In a finite number of agent-environmental interactions, the agent might not visit its starting location that frequently. This suggests that the choice of the reference feature vector can affect the performance of RVI Q-learning differently in different problems. Additionally, in both cases, the other reference function did not result in good performance; this was probably because tracking the maximum action value in the function approximation setting is a poor approximation to the maximum action value across all state--action pairs.

The two experiments showed that the simple extension of the tabular Differential Q-learning to the linear function approximation setting can work rather well in terms of the final performance as well as robustness to different parameter values. The extension of the notion of reference functions to the linear function approximation setting is not as straightforward.


\end{document}